\documentclass{article}
\pdfoutput=1

\usepackage{caption}
\usepackage{subcaption}

\usepackage[preprint]{neurips_2023}




\usepackage[utf8]{inputenc} 
\usepackage[T1]{fontenc}    
\usepackage{hyperref}       
\usepackage{url}            
\usepackage{booktabs}       
\usepackage{amsfonts, amsmath}       
\usepackage{nicefrac}       
\usepackage{microtype}      
\usepackage{xcolor}         
\usepackage{bbm}
\usepackage{times}
\usepackage{algorithm, algorithmic}
\usepackage{mathtools}
\usepackage{latexsym}
\usepackage{relsize}
\usepackage{amsthm}
\usepackage{tabularx}

\setlength{\evensidemargin}{-.25in}
\setlength{\textwidth}{6.75in}
\setlength{\parskip}{4pt}

\newcommand{\sym}{\mathop{\mbox{\rm sym}}}
\usepackage{newfloat}
\DeclareFloatingEnvironment[
	fileext=lob,
	listname={List of Boxes},
	name={MP},
	placement=htp,
]{optbox}
\newcommand{\mM}{{\mathcal M}}
\newcommand{\mMrk}{{\mathcal M}^{\mathrm{rk}}_k}

\newcommand{\mMmax}{{\mathcal M}^{\mathrm{max}}_K}
\newcommand{\mMtr}{{\mathcal M}^{\mathrm{tr}}_\tau}
\newcommand{\NP}{\mathsf{NP}}

\newcommand{\sampc}{\mathsf{sc}}
\newcommand{\FC}{\mathsf{FullComp}}

\newcommand{\mnorm}[1]{\| #1 \|_{\max}}
\newcommand{\frnorm}[1]{\| #1 \|_{\mathrm{fr}}}
\newcommand{\trnorm}[1]{\| #1 \|_{\mathrm{tr}}}

\def\inds{{\mathcal{X}}}

\def\V{{\mathcal V}}
\def\M{{\mathcal M}}

\newcommand{\F}{\mathbb{F}}

\newcommand{\defeq}{\stackrel{\text{def}}{=}}

\def\X{{\mathcal X}}
\def\H{{\mathcal H}}
\def\Y{{\mathcal Y}}
\def\W{{\mathcal W}}
\def\R{{\mathcal R}}

\def\sym{{\text Sym}}




\newcommand{\D}{\mathcal{D}}

\newcommand{\A}{\mathcal{A}}

\newcommand{\rank}{\mathrm{rank}}

\newcommand{\sign}{\text{ } \mathrm{sign}}

\newcommand{\bzero}{\ensuremath{\mathbf 0}}

\newcommand{\K}{\ensuremath{\mathcal K}}

\def\C{{\mathcal C}}

\def\bone{\mathbf{1}}

\def\regret{\mbox{{Regret}}}
\def\spregret{\mbox{{SP-Regret}}}
\def\gregret{\mbox{{Game-Regret}}}

\newcommand{\ignore}[1]{}
\newcommand{\eh}[1]{\noindent{\textcolor{blue}{\{{\bf EH:} \em #1\}}}}


\theoremstyle{plain}
\newtheorem{theorem}{Theorem}
\newtheorem{lemma}[theorem]{Lemma}
\newtheorem{corollary}[theorem]{Corollary}
\newtheorem{proposition}[theorem]{Proposition}

\newtheorem{assumption}{Assumption}

\newtheorem*{theorem*}{Theorem}
\newtheorem*{lemma*}{Lemma}
\newtheorem*{corollary*}{Corollary}
\newtheorem*{proposition*}{Proposition}
\newtheorem*{claim*}{Claim}
\newtheorem*{fact*}{Fact}
\newtheorem*{observation*}{Observation}
\newtheorem*{assumption*}{Assumption}

\theoremstyle{definition}
\newtheorem{definition}[theorem]{Definition}
\newtheorem{remark}[theorem]{Remark}

\newtheorem*{definition*}{Definition}
\newtheorem*{remark*}{Remark}
\newtheorem*{example*}{Example}




%





\DeclareMathAlphabet{\mathbfsf}{\encodingdefault}{\sfdefault}{bx}{n}


\DeclareMathOperator*{\argmin}{arg\,min}
\DeclareMathOperator*{\argmax}{arg\,max}

\DeclareMathOperator*{\conv}{conv}

\let\Pr\relax
\DeclareMathOperator{\Pr}{\mathbb{P}}

\def\mA{{\mathcal A}}

\newcommand{\mycases}[4]{{
\left\{
\begin{array}{ll}
    {#1} & {\;\text{#2}} \\[1ex]
    {#3} & {\;\text{#4}}
\end{array}
\right. }}



\newcommand{\norm}[1]{\|#1\|}

\newcommand{\E}{\mathbb{E}}

\newcommand{\trace}{\mathrm{tr}}

\newcommand{\tr}{^{\mkern-1.5mu\mathsf{T}}}

\newcommand{\reals}{\mathbb{R}}
\newcommand{\eps}{\varepsilon}

\renewcommand{\leq}{~\le~}
\renewcommand{\geq}{~\ge~}

\let\oldtfrac\tfrac
\renewcommand{\tfrac}[2]{\smash{\oldtfrac{#1}{#2}}}

\let\nablaold\nabla
\renewcommand{\nabla}{\nablaold\mkern-2.5mu}

\newcommand{\mF}{\mathcal{F}}


\makeatletter
\newtheorem*{rep@theorem}{\rep@title}
\newcommand{\newreptheorem}[2]{%
\newenvironment{rep#1}[1]{%
 \def\rep@title{#2 \ref{##1}}%
 \begin{rep@theorem}}%
 {\end{rep@theorem}}}
\makeatother
\newreptheorem{theorem}{Theorem}

\makeatletter
\newtheorem*{rep@proposition}{\rep@title}
\newcommand{\newrepproposition}[2]{%
\newenvironment{rep#1}[1]{%
 \def\rep@title{#2 \ref{##1}}%
 \begin{rep@proposition}}%
 {\end{rep@proposition}}}
\makeatother
\newreptheorem{proposition}{Proposition}

\makeatletter
\newtheorem*{rep@lemma}{\rep@title}
\newcommand{\newreplemma}[2]{%
\newenvironment{rep#1}[1]{%
 \def\rep@title{#2 \ref{##1}}%
 \begin{rep@lemma}}%
 {\end{rep@lemma}}}
\makeatother
\newreptheorem{lemma}{Lemma}

\title{Partial Matrix Completion}

%

\author{%
  Elad Hazan \thanks{Authors ordered alphabetically}\\
  Princeton University\\
  Google DeepMind \\
  \And
  Adam Tauman Kalai \footnotemark[1]\\
  Microsoft Research \\
  \AND
  Varun Kanade \footnotemark[1]\\
  University of Oxford \\
  \And
  Clara Mohri \footnotemark[1]\\
  Harvard University \\
  \And
  Y. Jennifer Sun \footnotemark[1]\\
  Princeton University \\
  Google DeepMind \\
}

\begin{document}

\maketitle

\begin{abstract}
	The matrix completion problem aims to reconstruct a low-rank
	matrix based on a revealed set of possibly noisy entries. Prior works
	consider completing the entire matrix with generalization error guarantees. However, the completion accuracy can be drastically different over different entries. This work establishes a new framework of \emph{partial matrix completion}, where the goal is to identify a large subset of the entries that can be completed with high confidence. We propose an efficient algorithm with the following provable guarantees.  Given access to samples from an unknown and arbitrary distribution, it guarantees: (a) high accuracy over completed entries, and (b) high coverage of the underlying distribution. We also consider an online learning variant of this problem, where we propose a low-regret algorithm based on iterative gradient updates. Preliminary empirical evaluations are included. 
\end{abstract}

\section{Introduction}

In the classical matrix completion problem, a subset of entries of a matrix are revealed and the goal is to reconstruct the full matrix. This is in general impossible, but if the matrix is assumed to be low rank and the distribution over revealed entries is uniformly random, then it can be shown that reconstruction is possible. A common application of matrix completion is in recommendation systems. For example, the rows of the matrix can correspond to users, the columns to movies, and an entry of the matrix is the preference score for the user over the corresponding movie. The completed entries can then be used to predict user preferences over unseen movies. The low rank assumption in this case is justified if the preference of users over movies is mostly determined by a small number of latent factors such as the genre, director, artistic style, and so forth. 

However, in many cases, it is both infeasible and unnecessary to complete the entire matrix. First consider the trivial example of movie recommendations with users and movies coming from two countries $A$
and $B$, where each user rates random movies only from their country. Without any
cross ratings of users from country $A$ on movies from country $B$ or vice
versa, it is impossible to accurately complete these 		entries. A solution here
is straightforward: partition the users and movies into their respective groups
and then complete only the part of the matrix corresponding to user ratings of
movies from their own country.

In reality, many users have categories of movies with few or no ratings based on genre, language, or time period. For such users, it is difficult to accurately complete unrated movie categories which they do not like to watch from those which they have not even been exposed to. Thus it may be preferable to abstain from making rating predictions for such users on these unpredictable categories. This is further complicated by the fact that such categories are not crisply defined and relevant categories may vary across user demographics like country and age. 

\begin{figure*}[!ht]
    \centering
    \includegraphics[width=\textwidth]{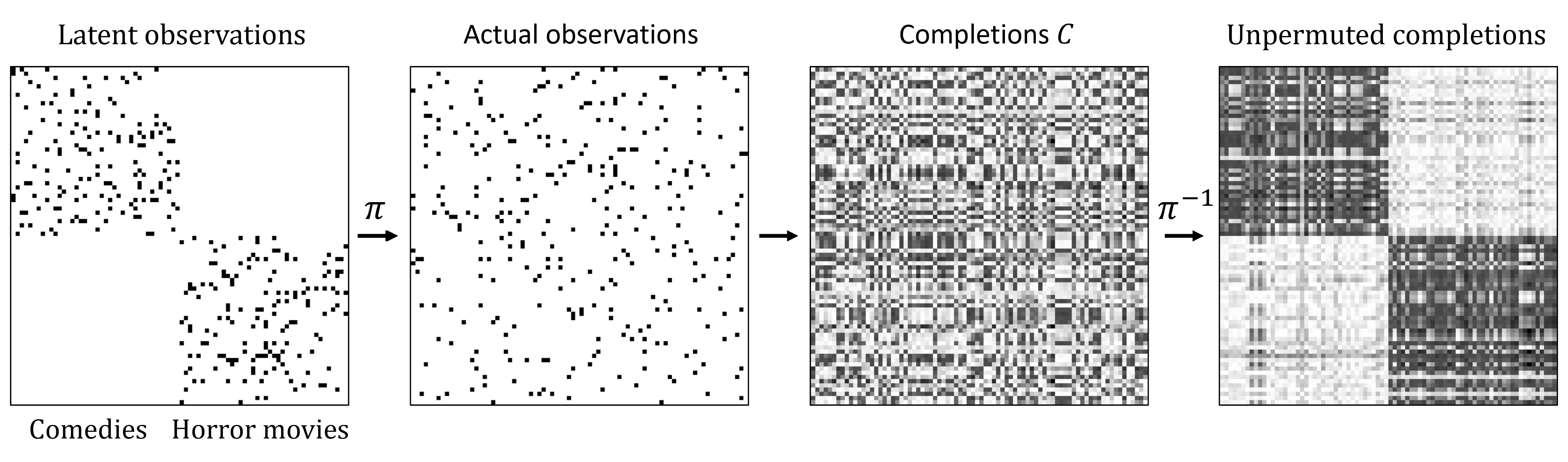}
    \caption{A simple example where some users rate only comedies and some rate only horror movies. However, these two groups are unknown and the rows and columns are permuted according to $\pi = \pi_C \circ \pi_R$. The partial completions made by our algorithm, shown on simulated data, correspond to completing the ratings for the half of the matrix as one would like. Note that our algorithms can handle \textit{arbitrary} revealed subsets, not only stochastic block models. The experimental setup and more preliminary experiments are described in Section \ref{sec:experiments}.}
    \label{fig:my_label}
\end{figure*}

This motivates a more general question: can we identify a subset of the matrix that can be completed by the existing observations with high confidence? We call this problem \textbf{partial matrix completion}. Let $M^\star$ be a target matrix. A \textit{non-convex} formulation would be to find a matrix $\hat{M}$ and a set of entries $C \subseteq [m] \times [n]$ which maximizes:
\begin{equation}
\max_{C \subseteq [m]\times [n]} |C| ~\text{ s.t. }~ \frac{1}{|C|} \sum_{(i, j) \in C} (M_{ij}^{\star}-\hat{M}_{ij})^2 \le \eps.    \label{eq:nonconvex-pmc}
\end{equation}
This is non-convex because the set $C$ is discrete and also because the choice of $\hat{M}$ interacts $C$. Fortunately, there are already several matrix completion algorithms which output $\hat{M}$ that guarantee low ``generalization error'' with respect to future examples \textit{from the same distribution of entries} as those from the training distribution, though they are not accurate over the entire matrix. We will fix any one of these completions $\hat{M}$.

We thus relax the goal of partial matrix completion to identify a matrix $C$ which gives a confidence score for each entry between 0 and 1. A value of 0 indicates that this entry should not be completed, whereas 1 indicates we have absolute confidence. We call this matrix the {\bf confidence matrix}.  Ideally, we would like that the confidence matrix to have a value of 1 on all entries that are well supported by the observation distribution. 

We formalize this problem through the following definitions. The precise definitions of some of the terms will be introduced in subsequent sections. 
Let $M^\star$ be a target matrix and $\hat{M}$ be a completion given by a matrix completion algorithm that guarantees low generalization error over future examples from the same entry distribution (formal definition given in Def.~\ref{defn:fullcompalg}), and let $\eps>0$.  The  \textit{coverage} of $C$ is defined to be $\|C\|_1 \defeq \sum_{ij} |C_{ij}|$.
\begin{definition} [Partial matrix completion problem] 
\label{def:fractional-pmc}
Let $M^\star$ be a target matrix and $\hat{M}$ be a full completion (Def.~\ref{defn:fullcompalg}), and $\eps>0$ be an error tolerance. Find a confidence matrix $C\in[0,1]^{m\times n}$ with maximal coverage, i.e. 
\begin{align*}
\argmax_{C\in[0,1]^{m\times n}} \ \|C\|_1 
\ \mathrm{s.t.} \ \frac{1}{\|C\|_1}\sum_{i\in[m], j\in[n]} C_{ij}(M_{ij}^{\star}-\hat{M}_{ij})^2 \le \eps.
\end{align*}
\end{definition}
The matrix $C$ that solves the above optimization problem has the following property: $C_{ij}$ indicates whether the $(i,j)$-th entry shall be completed, and a fractional value can be interpreted as either a fractional completion or a probability of completion. $C$ provides the largest coverage among all that satisfy the property that the average error is at most $\eps$ over the completed entries. 

\paragraph{Challenges of partial matrix completion.}

The above formulation is intuitive and convex, but notice that we do not know $M^\star$, and thus cannot apply convex optimization directly. 

Key challenges are illustrated by
two intuitive, but na\"ive approaches. First, consider completing just those matrix
entries $(i, j)$ where all rank-$k$ completions  have nearly the same
value $\hat{M}_{ij}$. Even in a simple case where all revealed entries are 1,
any single entry could be $\pm 1$ for some rank 2 completion of the matrix. Based on this, one often will not be able to complete any entries whatsoever.
 
Second, consider the generalization of our previous example where the sampling
distribution is uniform over an \textit{arbitrary} subset $U$ of $m \times n$
possible indices. For simplicity, think of $|U|=mn/2$ as covering half the
matrix. It follows from previous generalization bounds on matrix completion that one could accurately complete all entries in $U$. However, in general $U$ is unknown and arbitrarily complex. The second na\"ive approach would be to
try to learn $U$, but this requires
$\approx |U|=mn/2 \gg m+n$ observations.

\subsection{Our contribution}

We present an inefficient algorithm and its efficient relaxation to solve the partial matrix completion problem with provable guarantees over arbitrary sampling distributions. In particular, if $\mu$ is uniform over some arbitrary subset $U\subseteq [m]\times [n]$ of a rank-$k$ matrix $M^\star\in[-1,1]^{m\times n}$, the theoretical results obtained in this work have the following implications. The first, is an inefficient algorithm that solves the partial matrix completion problem for uniform distribution:  
\begin{corollary}[Inefficient algorithm, uniform sampling distribution] \label{corr:inefficient} 
	For any $\eps, \delta > 0$, with probability
$\ge 1-\delta$ over $N \ge c_0 \cdot \frac{k (m + n) + \log(1/\delta)}{\eps^2}$
	revealed entries from $M^\star$ drawn from $\mu$, for some constant $c_0$,
	Algorithm~\ref{alg:inefficient} (Sec.~\ref{sec:coverage}) outputs $\hat{M}$
	and  $C \in [0,1]^{m \times n}$, such that
    \begin{align*}
        (1):& \ \ \  \|C\|_1 \ge |U|,\\
        (2):& \ \ \  \frac{1}{\|C\|_1}\sum_{i\in[m], j\in[n]} C_{ij}(M_{ij}^{\star}-\hat{M}_{ij})^2 \le \eps.
    \end{align*}
\end{corollary}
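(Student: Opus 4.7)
The plan is to combine a standard generalization bound for rank-$k$ matrix completion with a structural observation: when $\mu$ is uniform on $U$, the indicator $\mathbbm{1}_U$ is by itself a feasible confidence matrix of total mass $|U|$. The algorithm need not identify $U$; it only needs the \emph{existence} of such a feasible solution to guarantee that its own output has coverage at least $|U|$.

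First, I would invoke the full-completion oracle of Def.~\ref{defn:fullcompalg} on the $N$ revealed entries. For a sample size $N \ge c_0 (k(m+n) + \log(1/\delta))/\eps^2$, standard Rademacher complexity or covering-number arguments for the class of rank-$k$ matrices with bounded entries yield a completion $\hat{M}$ satisfying, with probability at least $1-\delta$,
\[
\E_{(i,j) \sim \mu}\lrbra{(M^\star_{ij} - \hat{M}_{ij})^2} \le \eps.
\]

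Second, since $\mu$ is uniform on $U$, the generalization bound above reads
\[
\frac{1}{|U|}\sum_{(i,j)\in U}(M^\star_{ij} - \hat{M}_{ij})^2 \le \eps,
\]
so the matrix $C^\star \defeq \mathbbm{1}_U \in [0,1]^{m\times n}$ is feasible for the partial completion problem of Def.~\ref{def:fractional-pmc}, with coverage $\|C^\star\|_1 = |U|$. Algorithm~\ref{alg:inefficient}, by solving (inefficiently) the maximum-coverage problem over a class of candidate confidence matrices large enough to contain $C^\star$, returns a $C$ whose coverage is at least that of $C^\star$. Feasibility with respect to the true squared-error constraint is preserved because the sample size suffices for uniform convergence of the empirical weighted error to its population counterpart across the candidate class.

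The main obstacle will be that $U$ is entirely unknown to the algorithm, and with only $O(k(m+n)/\eps^2) \ll |U|$ samples it cannot hope to recover $U$ itself: identifying the support of a uniform distribution on $|U|$ entries would require $\Omega(|U|\log|U|)$ observations. The key point that bypasses this difficulty is that the corollary only demands a $C$ of total mass $\ge |U|$ satisfying the error constraint, \emph{not} the specific matrix $\mathbbm{1}_U$. The generalization bound from the first step implicitly certifies that the optimum of the max-coverage problem is at least $|U|$, so the inefficient search need only find any feasible $C$ attaining this optimum; it does not need to localize where the mass sits.
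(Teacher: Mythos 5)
Your high-level idea --- that the indicator of $U$ is a feasible solution of total mass $|U|$, so the algorithm's maximizer must have at least that much coverage --- is indeed the same certificate the paper uses (it is exactly $C^\mu_x = \mu(x)/\mu_{\max}$ with $\mu_{\max} = 1/|U|$ in Proposition~\ref{prop:inefficient}). But there is a genuine gap: you argue feasibility of $\mathbbm{1}_U$ with respect to the \emph{ideal} problem of Def.~\ref{def:fractional-pmc}, whose constraint involves the unknown $M^\star$, whereas Algorithm~\ref{alg:inefficient} solves MP~\ref{opt:inefficient}, whose constraint is $\ell(\nu_C, M, \bzero_\inds) \le \gamma$ for \emph{every} matrix $M$ in the version space $\V(\bzero_\inds,\beta,S;\mM)$ determined only by the sample locations. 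Your sentence ``returns a $C$ whose coverage is at least that of $C^\star$'' silently assumes $\mathbbm{1}_U$ is feasible for the problem the algorithm actually solves, and your justification (``uniform convergence of the empirical weighted error \ldots{} across the candidate class'') points at the wrong object: the needed uniform convergence is over the \emph{matrix class} $\mM$ (Definition~\ref{defn:samplecomp} applied with center $\bzero_\inds$), which shows that any $M$ with small empirical norm on $S$ also has small $\mu$-norm, hence $\nu_{C^\mu}=\mu$ satisfies the MP constraint. The MP constraint itself is not an empirical quantity over the sample, so no concentration ``across candidate confidence matrices'' is relevant.

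The second half of the corollary is likewise not closed by your argument. The generalization bound for $\hat M$ controls $\E_{\mu}[(M^\star-\hat M)^2]$, i.e.\ the error on $U$, but claim (2) is an error bound weighted by the algorithm's output $C$, whose support may extend outside $U$. The paper bridges this by showing (via the FC guarantee, a second application of Definition~\ref{defn:samplecomp} centered at $\hat M$, and convexity plus closure under negation of $\mM$ from Lemma~\ref{lem:convexclosed}) that $(M^\star-\hat M)/2 \in \V(\bzero_\inds,\beta,S;\mM)$; only then does the MP constraint, applied to this particular matrix, yield $\frac{1}{\|C\|_1}\sum_{ij} C_{ij}(M^\star_{ij}-\hat M_{ij})^2 \le 4\gamma = \eps$. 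This version-space membership step, together with the choice of constants ($\gamma=\eps/4$, $\beta=\eps/8$, FC accuracy $\eps/4$), is the technical core of Theorem~\ref{thm:main_inefficient} and is absent from your proposal.
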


Next, we describe the guarantees of an efficient algorithm \ref{alg:efficient}. It has a worse sample complexity, up to polynomial factors in $k,\eps$. The advantage is that it runs in polynomial time. Otherwise, its guarantees are similar to that of the first algorithm. 

\begin{corollary}[Efficient algorithm, uniform sampling distribution] \label{corr:efficient}
	For any $\eps, \delta > 0$, with probability
	$\ge 1-\delta$ over $N\ge c_0 \cdot \frac{k^2}{\eps^4} \cdot (k (m + n) + \log
	(1/\delta))$ revealed entries from
	$M^\star$ drawn from $\mu$, for some constant $c_0$, Algorithm~\ref{alg:efficient} (Sec.~\ref{sec:efficient}) outputs
	$\hat{M}$ and $C \in [0, 1]^{n \times m}$, such that
	\begin{align*}
        (1):& \ \ \  \|C\|_1 \ge |U|,\\
        (2):& \ \ \  \frac{1}{\|C\|_1}\sum_{i\in[m], j\in[n]} C_{ij}(M_{ij}^{\star}-\hat{M}_{ij})^2 \le \eps.
    \end{align*}
\end{corollary}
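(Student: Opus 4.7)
The strategy is to specialize the generic guarantees of Algorithm~\ref{alg:efficient} to the case when $\mu$ is the uniform distribution on $U$, exactly mirroring how Corollary~\ref{corr:inefficient} is derived from Algorithm~\ref{alg:inefficient}. First, I would invoke the full-completion subroutine underlying Algorithm~\ref{alg:efficient} (in the sense of Def.~\ref{defn:fullcompalg}): with probability $\ge 1-\delta/2$, once $N \gtrsim (k(m+n)+\log(1/\delta))/\eps^2$, the completion $\hat M$ satisfies $\E_{(i,j)\sim\mu}[(M^\star_{ij}-\hat M_{ij})^2] \le \eps/3$. Under the uniform-on-$U$ assumption on $\mu$, this is the same as
\[
\frac{1}{|U|}\sum_{(i,j)\in U}(M^\star_{ij}-\hat M_{ij})^2 \leq \eps/3 ,
\]
so the ideal confidence matrix $C^{\circ} := \mathbf{1}_U$ is feasible for the partial-completion optimization at tolerance $\eps/3$ and achieves $\|C^{\circ}\|_1 = |U|$.

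Second, I would analyze the $C$-maximization step. Algorithm~\ref{alg:efficient} solves a convex relaxation that maximizes $\|C\|_1$ subject to an empirical version of the error constraint
$\frac{1}{\|C\|_1}\sum_{ij} C_{ij}(M^\star_{ij}-\hat M_{ij})^2 \le \eps$,
where the $(M^\star_{ij}-\hat M_{ij})^2$ term is replaced by the squared residual on the observed entries. Given the feasibility of $\mathbf{1}_U$ for the true constraint at slack $\eps/3$, it suffices to verify that with high probability the empirical constraint at level $2\eps/3$ is no tighter than the true constraint at level $\eps/3$ — in which case $\mathbf{1}_U$ remains empirically feasible and the maximizer $C$ produced by the algorithm satisfies $\|C\|_1 \ge |U|$. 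The second conclusion then follows by going in the opposite direction: the empirical feasibility at level $2\eps/3$ translates into a true error of at most $\eps$ via the same uniform-convergence inequality.

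The key technical step, which is the main obstacle, is establishing this uniform convergence over the joint class of feasible pairs $(C,\hat M)$, with the sample complexity $N \gtrsim (k^2/\eps^4)(k(m+n)+\log(1/\delta))$ claimed in the corollary. The natural function class is $\{(i,j)\mapsto C_{ij}(M^\star_{ij}-\hat M_{ij})^2 : C \in [0,1]^{m\times n},\ \trnorm{\hat M} \le \sqrt{k\,mn}\}$, whose Rademacher complexity I would control by combining (i) the standard trace-norm Rademacher bound of order $\sqrt{(m+n)/N}$ for the completion variable, (ii) a contraction step for the squared loss (bounded since entries lie in $[-1,1]$), and (iii) linearity in $C\in[0,1]^{m\times n}$, which contributes a factor of the uniform complexity of entrywise-bounded weightings. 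The factor $k^2/\eps^4$ (versus $1/\eps^2$ in the inefficient case) arises because converting between the trace-norm-bounded program and the rank-$k$ program costs a factor of $k$ in the effective Lipschitz constant and an extra $1/\eps^2$ from the two-sided uniform deviation needed in the ratio $\tfrac{1}{\|C\|_1}\sum C_{ij}(\cdot)^2$.

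Finally, combining the high-probability event (each of probability $\ge 1-\delta/2$) from the full-completion guarantee and the uniform-convergence argument, a union bound yields both (1) and (2) simultaneously, completing the corollary. The specialization to uniform $\mu$ on $U$ enters only through the identification of $\mathbf{1}_U$ as a canonical feasible confidence matrix; the rest of the argument is distribution-free.
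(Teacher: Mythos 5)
There is a genuine gap: your proof analyzes a different algorithm than the one the corollary refers to, and it omits the step that is the actual technical heart of the paper's argument. MP~\ref{opt:efficient} does \emph{not} impose an empirical version of the squared-error constraint $\frac{1}{\|C\|_1}\sum_{ij}C_{ij}(M^\star_{ij}-\hat M_{ij})^2\le\eps$; it maximizes $\|C\|_1$ subject to the \emph{linear} constraint $\E_{x\sim\nu_C}[M_x]\le\gamma$ holding for every $M$ in the version space $\V(\bzero_\inds,\beta,S;\mMmax)$ around the zero matrix (this linearity is exactly what makes the separation oracle tractable; the quadratic version is MP~\ref{opt:inefficient}, whose separation problem the paper argues is NP-hard, so your route would forfeit the efficiency that the corollary is about). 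Consequently, the output $C$ only comes with a guarantee on the linear functional $\sup_{M\in\V}\E_{x\sim\nu_C}[M_x]$, and nothing in your argument converts that into a bound on the weighted \emph{squared} error in conclusion (2). The paper's conversion is Proposition~\ref{prop:key-GW}: a Goemans--Williamson-style random-hyperplane sign-rounding of the max-norm factorization $M=UV^\top$, using the fact that $\mMmax$ (intersected with the empirical ball) is closed under row sign flips, which yields $\sup_{M\in\V}\ell(\nu_C,M,\bzero_\inds)\le\frac{\pi K}{2}\sup_{M\in\V}\E_{x\sim\nu_C}[M_x]$. Combined with $(M^\star-\hat M)/2\in\V$ (via the generalization bound and convexity, Lemma~\ref{lem:convexclosed}) and the choice $\gamma=\eps/(2\pi K)$, this gives (2). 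Your uniform-convergence/Rademacher-contraction scheme over pairs $(C,\hat M)$ does not substitute for this rounding step, because the quantity it would control never appears in the program the algorithm actually solves.

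Two further points. First, the coverage claim (1) in the paper is established without ever using $\hat M$, $M^\star$, or the observed values: one shows the scaled distribution $C^\mu$ (which equals $\mathbf 1_U$ in the uniform case) is feasible for MP~\ref{opt:efficient} because the generalization property around $\bzero_\inds$ at accuracy $\gamma^2/2$ gives $\|M\|_\mu^2\le\beta+\gamma^2/2=\gamma^2$ for every version-space $M$, whence $\E_{x\sim\mu}[M_x]\le\sqrt{\|M\|_\mu^2}\le\gamma$; your feasibility argument for $\mathbf 1_U$, based only on the full-completion error of the single matrix $\hat M$, does not address the uniform-over-the-version-space constraint that the algorithm imposes. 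Second, your accounting for the $k^2/\eps^4$ sample complexity (a trace-norm versus rank conversion plus an extra $1/\eps^2$ from a two-sided deviation in a ratio) is not the actual mechanism and relies on the wrong class (the efficient algorithm works with $\mMmax$ and $K\le\sqrt k$, not a trace-norm ball): the blow-up comes from the $\pi K/2$ rounding loss, which forces $\gamma=\eps/(2\pi K)$ and $\beta=\gamma^2/2=\Theta(\eps^2/k)$, so one needs $\sampc(\Theta(\eps^2/k),\delta/3,\mMmax)=O\bigl(\tfrac{k^2}{\eps^4}\bigl(k(m+n)+\log\tfrac1\delta\bigr)\bigr)$ samples, matching the corollary.
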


Corollary~\ref{corr:inefficient} is a simplification of Theorem~\ref{thm:main_inefficient}, and Corollary~\ref{corr:efficient}  is a simplification of Theorem~\ref{thm:main_efficient}. 

The results can be extended to the setting where the observed entries
are noisy with zero-mean bounded noise.  A remarkable feature of our
algorithms is that once the full completion $\hat{M}$ is obtained using
existing procedures, they only rely on the \emph{locations} of observed entries
and not the values. Thus, the decisions regarding which entries to complete,
i.e., to add to $C$ is completely agnostic to the actual values of the revealed
entries. Our framework is able to handle \textit{arbitrary} sampling distributions, which captures scenarios such as overlapping groups or idiosyncratic rating habits.

\paragraph{The online  setting.}
We formulate the partial matrix completion  problem in the online learning setting and propose an iterative gradient based online algorithm with provable guarantees. Corollary~\ref{cor:online} is a special case of Theorem~\ref{cor:online-to-offline}. A simple simulation of the online algorithm in this special case of uniform sampling over a subset of entries is demonstrated in Fig.~\ref{fig:my_label}. 

\begin{corollary} 
[Online algorithm, uniform sampling distribution]
\label{cor:online}
Suppose $M^\star\in[-1,1]^{n\times n}$ is a bounded matrix with max-norm (Eq.~\ref{eqn:max-norm-def}) bounded by $K$. The sampling distribution $\mu$ is uniform over a fraction $0<c\le 1$ of the $n^2$ entries. 
For any $\delta>0$, after $T=\tilde{O}(\delta^{-2}K^2n)$ iterations, the confidence matrix $C\in[0,1]^{m\times n}$ output by the online algorithm \texttt{ODD} satisfies with probability at least $1-c_1\exp(-c_2\delta^2T)$ for some universal constants $c_1,c_2>0$,
\begin{align*}
(1):& \ \ \  \|C\|_1 \ge (c-\delta^{1/6})n^2,\\
(2):& \ \ \  \frac{1}{\|C\|_1}\sup_{M\in\mathcal{V}}\sum_{i\in[m], j\in[n]}C_{ij}(M_{ij}-M^{\star}_{ij})^2\le \frac{\delta^{1/6}}{c-\delta^{1/6}},
\end{align*}
where $\mathcal{V}$ is the set of all matrices $M\in[-1,1]^{n\times n}$ with max-norm bounded by $K$ satisfying that $\E_{i,j\sim\mu}[(M_{ij}-M_{ij}^{\star})^2]\le \delta$. 
\end{corollary}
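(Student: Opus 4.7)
The plan is to derive Corollary~\ref{cor:online} as a direct specialization of Theorem~\ref{cor:online-to-offline} by instantiating its general online-to-offline conversion with the uniform sampling distribution. First I would set up the instance: write $\mu$ as the uniform distribution over a subset $U\subseteq [n]\times[n]$ with $|U|=cn^2$, so that each entry $(i,j)\in U$ has probability mass exactly $1/(cn^2)$ under $\mu$. The set $\mathcal{V}$ of admissible competitors is precisely the one that appears in Theorem~\ref{cor:online-to-offline} for tolerance parameter $\delta$, so the guarantee of that general theorem applies directly to the confidence matrix $C$ produced by \texttt{ODD} without modification.

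Next I would translate the generic guarantee into the explicit coverage and error bounds stated here. Theorem~\ref{cor:online-to-offline} yields, for a chosen slack parameter $\eta$, a coverage of roughly the $\mu$-mass minus an $\eta$-fraction of the total mass, together with a normalized worst-case squared error over $\mathcal{V}$ that is $O(\eta)$ after dividing by $\|C\|_1$. For uniform $\mu$ over $cn^2$ entries this specializes to $\|C\|_1 \ge (c-\eta)n^2$ and a normalized error bound $O(\eta/(c-\eta))$. Setting $\eta = \delta^{1/6}$ produces exactly the form of the two inequalities claimed in Corollary~\ref{cor:online}. The particular exponent $1/6$ reflects the balance of three sources of slack in the conversion: the $\delta$-tolerance defining $\mathcal{V}$, a regret-based average-loss bound, and a concentration-driven empirical-to-population gap. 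To close the argument I would verify that $T = \tilde{O}(\delta^{-2}K^2 n)$ iterations suffice for the gradient-based \texttt{ODD} algorithm: on the max-norm ball of radius $K$, standard online convex optimization yields regret $O(K\sqrt{Tn})$, so the average regret scales as $O(K\sqrt{n/T})$, matching the required iteration count once the average suboptimality is driven below an appropriate polynomial of $\delta$. The high-probability bound $1 - c_1 \exp(-c_2 \delta^2 T)$ then follows from combining the regret bound with an Azuma--Hoeffding argument applied to the per-round martingale differences between expected and realized losses.

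The main obstacle I anticipate is the careful bookkeeping of these three intertwined error parameters and confirming that the exponent $1/6$ really is the sharpest balance achievable from Theorem~\ref{cor:online-to-offline}. This requires tracing the proof of that theorem with $\mu$ specialized to the uniform case and ensuring that no hidden dependence on $c$ (which could multiplicatively degrade either the coverage or the error) is silently absorbed into a constant or polylogarithmic factor. Once that accounting is verified, the remainder follows routinely from the standard OCO regret bounds on the max-norm ball and concentration inequalities for bounded martingales.
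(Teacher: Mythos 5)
Your overall route coincides with the paper's: Corollary~\ref{cor:online} is obtained by specializing Theorem~\ref{cor:online-to-offline} (in the $\ell_1$/hypercube setting~\ref{assumptionH2}) to a uniform $\mu$ supported on $cn^2$ entries, with the iteration count $T=\tilde{O}(\delta^{-2}K^2n)$ and the $1-c_1\exp(-c_2\delta^2T)$ tail coming from the game-regret bound combined with an Azuma-type martingale argument, exactly as you sketch. The gap is in how you read the conclusion of Theorem~\ref{cor:online-to-offline}: that theorem does \emph{not} bound the confidence-weighted squared error over $\mathcal{V}$; it only bounds the linear deviation functional $G(\bar{C},M^1,M^2)=\sum_{x}\bar{C}_x(M^1_x-M^2_x)$ over pairs $M^1,M^2\in\mathcal{V}_\delta$. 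To reach statement (2) of Corollary~\ref{cor:online}, which concerns $\sum_{i,j}C_{ij}(M_{ij}-M^\star_{ij})^2$, you need (i) the observation that $M^\star\in\mathcal{V}$, so the single-matrix supremum is dominated by the supremum over pairs, and (ii) the paper's quadratic-to-linear relaxation (Theorem~\ref{thm:linrelax} and Corollary~\ref{cor:linrelax}), i.e.\ the randomized sign-flipping argument exploiting the max-norm structure (closure of the version space under negating subsets of rows/columns), which yields $\sup_{M^1,M^2}\sum_x C_x(M^1_x-M^2_x)^2\le \pi K\,\sup_{M^1,M^2}\sum_x C_x(M^1_x-M^2_x)$. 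This is not bookkeeping: it is a separate key lemma, and it introduces a factor of order $K$ that your proposed chain ("normalized worst-case squared error over $\mathcal{V}$ that is $O(\eta)$, directly from the theorem") silently omits; without it the step fails as stated.

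A secondary omission: in setting~\ref{assumptionH2} the theorem's confidence matrices live in the scaled hypercube $\left[0,(m+n)^{-3/2}\right]^{\mathcal{X}}$, whereas Corollary~\ref{cor:online} asserts $C\in[0,1]^{m\times n}$. The rescaling by $(m+n)^{3/2}$ is precisely what converts the theorem's additive slack $O(\delta^{1/6}\sqrt{m+n})$ into the coverage loss $\delta^{1/6}n^2$ and, after dividing by $\|C\|_1\ge(c-\delta^{1/6})n^2$, into the normalized error $\delta^{1/6}/(c-\delta^{1/6})$. Your final bounds have the right form, and your regret/Azuma accounting for $T$ matches the paper, but a complete proof must make both the rescaling and the linear-relaxation step (with its $\pi K$ factor) explicit rather than absorbing them into the specialization of Theorem~\ref{cor:online-to-offline}.
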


\subsection{Related work}
\paragraph{Matrix completion and recommendation systems.}~%
The common approach to collaborative filtering is that of matrix
completion with trace norm minimization \citep{SrebroThesis}.  It was
shown  that the trace norm regularization requires $\Omega((n+m)^{3/2})$
observations to complete the matrix under arbitrary distributions, even when
the rank of the underlying matrix is constant, and this suffices even in
the more challenging agnostic online learning setting
\citep{HazanKS12,shamir2014matrix}.  \citet{srebro2005rank} study
matrix completion with the max-norm, which behaves better under arbitrary
distributions, namely for low-rank matrices, $\tilde{O}(n+m)$ observations
suffice.

\paragraph{Matrix completion and incoherence assumptions.}~
A line of works in the matrix completion literature considered the goal of finding a completion of the ground truth matrix with low Frobenius norm error guarantee \citep{candes2009exact, candes2010power, keshavan2010matrix, gross2011recovering, recht2011simpler, negahban2012restricted, jain2013low, chen2020noisy, abbe2020entrywise}. Such guarantee is strong but usually requires two restrictive assumptions on sampling distribution of observations and the ground truth matrix structure: (1) the sampling of observations is uniform across all entries, and (2) the ground truth matrix $M^{\star}$ satisfies some incoherence conditions. The incoherence condition is an assumption imposed on the singular vectors of $M^{\star}=U\Sigma V^{\top}$, which ensures the vectors $u_i, v_j$'s to be sufficiently spread out on the unit sphere. The main reason the incoherence condition is necessary in establishing meaningful guarantees in low-rank matrix completion is that without such assumptions, there might exist multiple low-rank matrices that could explain the observed entries equally well but differ substantially on unobserved entries. When the incoherence condition is satisfied, it implies that the observed entries capture enough information about the low-rank structure, making it possible to recover the original matrix with sufficiently well. However, uniform sampling and incoherence assumptions \emph{do not} hold in many realistic scenarios. 

Subsequently, to circumvent these restrictive assumptions, another line of work evaluates the \textit{generalization error} as an alternative metric for completion performance \citep{srebro2004generalization, shamir2014matrix}. To formalize, consider the task of completing $M^{\star}$ with an arbitrary observation sampling distribution $\mu$ over $[m]\times [n]$. This can be conceptualized as predicting a hypothesis mapping from the domain $[m]\times[n]$ to the codomain $[-1,1]$. The goal is to characterize the guarantees on the expected prediction error, quantified for a given completion $\hat{M}$ as $\E_{(i,j)\sim\mu}[(\hat{M}_{ij}-M^{\star}_{ij})^2]$. When $\mu$ is the uniform distribution over $[m]\times [n]$, the generalization error bound translates to a guarantee over the average error across all entries. However, if $\mu$ is arbitrary (e.g. supported over a fraction of the entries), then no guarantee could be established for entries that lie in the complement of the support of $\mu$. This prohibits the use of $\hat{M}$ in settings where abstention is imperative.

Our work takes an alternative approach to this fundamental limitation in matrix completion. The focus of our work is to identify entries where we can predict with high confidence to guarantee a low completion error weighted by evaluated confidence. 
We provide formulation of the problem, its convex relaxation, and an efficient gradient-based online algorithm due to a formulation of the problem as an online two-player game.

\paragraph{Randomized rounding and semi-definite relaxations.}~%
For the efficient algorithm, the main analysis technique we deploy is based on randomized rounding solutions of semi-definite programming, due to the seminal work of \citet{goemans1995improved}. These were originally developed in the context of approximation algorithms for MAX-CUT and other combinatorial problems. Here, we use this analysis technique for analyzing inner products of high dimensional vectors, but in a different context and to argue about a convex relaxation of a continuous optimization problem. 

\paragraph{Abstention in classification and regression.}~%
Abstaining from prediction has a long history in other learning problems such as binary classification \cite{chow1957optimum}. There are several notable similarities. First, many of the algorithms work as ours does, by first fitting a complete classifier and then deciding afterwards where to abstain \citep[e.g.,][]{Hellman70, GKKM:2020}. Second, numerous abstention models have been considered, including a fixed cost for abstaining \citep[e.g.,][]{chow1957optimum, KK21}, as well as fixing the error rate and maximizing the coverage \citep[e.g.,][]{geifman2017selective}, as in our work, and fixing the coverage rate while minimizing the error \citep[e.g.,][]{DBLP:conf/icml/GeifmanE19}. Third, there are algorithmic similarities, in particular the observation that worst-case performance guarantees do not depend on knowing the underlying distribution \citet{KK21}. Online models of abstention have also been considered \citep[e.g.,][]{li2011knows}. For detailed survey on  selective classification  see \citep{ElYanivWiener10}.

\section{Problem Setup and Preliminaries}
\label{sec:prelims}

For a natural number $n$, let $[n] = \{1, 2, \ldots, n\}$. Consider a fixed set
of indices $\inds = [m] \times [n]$ throughout. Thus, in this paper $x \in \inds$ is $x = (i, j)$ where $i \in [m], j
\in [n]$. The set of $m \times n$ real-valued matrices is thus written as
$\reals^\inds$. We will thus view a matrix $M$ over index set $\inds$ as a
function from $\inds \rightarrow \reals$, and for $x \in \inds$, by slight
abuse of notation denote by both $M_x$ and $M(x)$ the entry of $M$ at index
$x$. 

We focus on the squared loss in this paper, though it may be interesting to
consider other loss functions in future work.  Suppose $\nu$ is a distribution
over $\inds$, for matrices, $M, M^\prime$, we denote by $\ell(\nu, M, M^\prime)
\defeq \E_{x \sim \nu}[(M(x) - M^\prime(x))^2]$; we will also use the shorthand
$\norm{M - M^\prime}_\nu^2 \defeq \ell(\nu, M, M^\prime)$. 

When $T \in \inds^N$ is a
sequence of elements of $\inds$ of length $N$, we denote by $\ell(T, M, M^\prime) \defeq \frac{1}{N} \sum_{x \in T} (M(x) - M^\prime(x))^2$ and the
shorthand $\norm{M- M^\prime}_T^2 \defeq \ell(T, M, M^\prime)$. 

\subsection{Matrix Classes, Version Space and Generalization}

\paragraph{Matrix Norms.} 
Given a matrix $M$, we denote by $\norm{M}_{2, \infty}$ the maximum row norm of $M$. We denote by $\mnorm{M}$ the \emph{max-norm} of $M$ defined as,
\begin{align}
\mnorm{M} \defeq \min_{UV^\top = M} \norm{U}_{2, \infty}\cdot \norm{V}_{2, \infty}. \label{eqn:max-norm-def} 
\end{align}
The trace norm of a matrix $M$, denoted by $\trnorm{M}$, is the sum of its
singular values. The Frobenius norm of a matrix $M$, denoted by $\frnorm{M}$, is
the square root of the sum of its squared entries.

We will restrict attention to matrices with entries in $[-1, 1]$. The following three classes of matrices with restrictions on respectively the \emph{rank}, \emph{max-norm} and \emph{trace-norm} are of interest in this work. Formally, let $\tilde{\K} = \{ M \in [-1, 1]^\inds\} $, and define,
\begin{align*}
	\mMrk &\defeq \{ \mathrm{rank}(M) \le k \} \cap \tilde{\K}, \ \ \ \mMmax &\defeq \{ \mnorm{M} \le K \} \cap \tilde{\K}, \ \ \ \mMtr &\defeq \{  \trnorm{M} \le \tau \} \cap \tilde{\K}.
\end{align*}

The standard assumption in matrix completion is that the target matrix is low-rank; however, most completion algorithms exploit convex optimization methods
and optimize over max-norm or trace-norm bounded matrices. It is known that
$\mnorm{M} \le \sqrt{\rank(M)}$ if $M \in [-1, 1]^\inds$~\citep[cf.][Lemma 4.2]{linial2007complexity} and it is easy to see that $\trnorm{M} \le
\rank(M) \sqrt{mn}$ for $M \in [-1, 1]^\inds$. Thus, $\mMrk \subseteq
\mM^{\max}_{\sqrt{k}}$ and $\mMrk \subseteq \mM^{\mathrm{tr}}_{k\sqrt{mn}}$.
Hence, we will work with the latter two classes which satisfy the following lemma which is proved in Appendix~\ref{app:offline}. 

\begin{lemma} \label{lem:convexclosed}
	The classes, $\mMmax$ and $\mMtr$, are closed under negations and are convex.
\end{lemma}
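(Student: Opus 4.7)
The plan is that both sets $\mMmax$ and $\mMtr$ have the form $\{M : \phi(M) \le c\} \cap \tilde{\K}$ where $\phi$ is either the max-norm or the trace-norm. Since $\tilde{\K} = [-1,1]^\inds$ is manifestly convex (product of intervals) and symmetric about the origin, it suffices to verify convexity and negation-closure for each of the two norm balls individually.

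Closure under negation is the easier half. For the trace norm, the singular values of $-M$ coincide with those of $M$, so $\trnorm{-M} = \trnorm{M}$. For the max norm, if $M = UV^\top$ is any factorization then $-M = U(-V)^\top$, and the row $\ell_2$-norms of $-V$ equal those of $V$, giving $\mnorm{-M} \le \mnorm{M}$; the reverse inequality follows by symmetry.

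For convexity of the trace-norm ball, the trace norm is a bona fide norm (the nuclear norm, dual to the spectral norm), so subadditivity and hence convexity of its ball are standard. The substantive step is convexity of the max-norm ball, which I plan to establish via the subadditivity inequality
\[
\mnorm{\lambda M_1 + (1-\lambda) M_2} \le \lambda \mnorm{M_1} + (1-\lambda) \mnorm{M_2}, \qquad \lambda \in [0,1].
\]
The argument is a standard block-concatenation trick. Fix $\epsilon > 0$, and pick near-optimal factorizations $M_i = U_i V_i^\top$ with $\norm{U_i}_{2,\infty}\cdot\norm{V_i}_{2,\infty} \le \mnorm{M_i} + \epsilon$. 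By rescaling $U_i \to \alpha_i U_i$, $V_i \to V_i/\alpha_i$ we may balance the two factors so that $\norm{U_i}_{2,\infty} = \norm{V_i}_{2,\infty} = \sqrt{\mnorm{M_i}+\epsilon}$. Now form the block matrices $U = [\sqrt{\lambda}\,U_1 \; \sqrt{1-\lambda}\,U_2]$ and $V = [\sqrt{\lambda}\,V_1 \; \sqrt{1-\lambda}\,V_2]$, so that $UV^\top = \lambda M_1 + (1-\lambda) M_2$. The squared $\ell_2$-norm of the $i$-th row of $U$ is the convex combination $\lambda \norm{(U_1)_i}_2^2 + (1-\lambda)\norm{(U_2)_i}_2^2$, uniformly bounded by $\lambda(\mnorm{M_1}+\epsilon) + (1-\lambda)(\mnorm{M_2}+\epsilon)$, and the same bound holds for $V$. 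Multiplying the two row-norm bounds and sending $\epsilon \downarrow 0$ yields subadditivity.

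The main obstacle is this max-norm subadditivity, whose proof hinges on the balancing-and-concatenation construction; everything else is immediate from definitions. With subadditivity in hand, $\mnorm{M_1},\mnorm{M_2}\le K$ forces $\mnorm{\lambda M_1 + (1-\lambda) M_2}\le K$, so $\mMmax$ is convex, and analogously $\mMtr$ is convex via convexity of the trace-norm ball. Combined with negation-closure, this completes the proof.
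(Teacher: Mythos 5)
Your proof is correct and takes essentially the same route as the paper: closure under negation by flipping the sign of one factor in a factorization, and convexity because the max-norm and trace-norm balls, intersected with the convex box $[-1,1]^\inds$, are convex. The only difference is that the paper simply cites that $\mnorm{\cdot}$ and $\trnorm{\cdot}$ are genuine norms, whereas you supply the standard balancing/block-concatenation argument for the max-norm convexity inequality --- a harmless elaboration of the same idea.
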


\paragraph{Version Space.}

We define the notion of version spaces that are used in our key results.
 For a sequence $T \in \inds^N$, for any class of matrices $\mM$, a
matrix $M \in \mM$ and $\beta > 0$, we define the version space around $M$ of radius $\beta$ based on $T$ w.r.t. $\mM$ as
\begin{align}
	\V(M, \beta, T; \mM) &= \{M^\prime \in \mM ~|~ \ell(T, M, M^\prime)
  \le \beta \}. \label{eqn:versionspacedef}
\end{align}
Intuitively, version space is the set of matrices in a particular matrix class that are ``close'' to a given matrix with respect to $T$. 

\paragraph{Generalization Bounds.}

For a general class of matrices $\mM$, we define a notion of sample
complexity that will guarantee the proximity of empirical and population
measures of interest for all matrices in the class; we denote this notion by
$\sampc(\eps, \delta, \mM)$.

\begin{definition}[Sample Complexity] \label{defn:samplecomp}
	For $\eps, \delta > 0$ and a class of matrices $\mM$, denote by $\sampc(\eps,
	\delta, \mM)$, the  sample complexity, to be the smallest natural number
	$N_0$, such that for any distribution $\mu$ over $\inds$, for $S \sim
	\mu^{N_0}$, and for any fixed $\hat{M} \in \mM$ (possibly depending on $S$),
	with probability at least $1 - \delta$, 
	\[ \sup_{M \in \mM} \left|\ell(S, M, \hat{M}) - \ell(\mu, M, \hat{M})\right| \le \eps. \]
	If no such $N_0$ exists, $\sampc(\eps, \delta, \mF) \defeq \infty$. 
\end{definition}

Bounds on the sample complexity for matrix completion can be derived in terms of rank,
max-norm and trace-norm using standard results in the literature, and the fact
that the squared loss is $2$-Lipschitz and bounded by $4$ when both its
arguments take values in $[-1, 1]$. In the proposition below, the max-norm
result follows from Theorem 5 in~\citep{srebro2005rank} and the trace-norm
result follows from Theorem 4 in~\citep{shamir2014matrix}. 

\begin{proposition} \label{prop:genbounds}
	For the classes, $\mMmax$, $\mMtr$, the following hold,
	\begin{enumerate}
		\item $\sampc(\eps, \delta, \mMmax) = O\left(\frac{1}{\eps^{2}} \left(K^2  (m + n) +  \log \frac{1}{\delta} \right) \right)$.
			
		\item $\sampc(\eps, \delta, \mMtr) = O\left(\frac{1}{\eps^2}  \left(\tau \sqrt{m + n} + \log \frac{1}{\delta} \right) \right)$.
			
	\end{enumerate}
\end{proposition}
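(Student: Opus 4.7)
The plan is to derive both parts from standard uniform-convergence machinery combined with the two cited Rademacher-complexity estimates for the respective matrix classes. Because Definition~\ref{defn:samplecomp} allows $\hat{M}$ to depend on $S$, I would actually prove the stronger two-sided statement
\[\sup_{M, \hat{M} \in \mathcal{M}} \bigl|\ell(S, M, \hat{M}) - \ell(\mu, M, \hat{M})\bigr| \le \eps,\]
which then covers any data-dependent choice of $\hat{M} \in \mathcal{M}$ as required.

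First I would reduce to a single-class uniform convergence problem. By Lemma~\ref{lem:convexclosed}, $\mathcal{M}$ is closed under negation and convex, so $M - \hat{M}$ ranges inside $2\mathcal{M}$ after absorbing the factor of two into the governing norm bound (with entries taking values in $[-2, 2]$). The deviation above is thus the empirical-to-population gap of the loss class $\{x \mapsto f(x)^2 : f \in 2\mathcal{M}\}$. Since $t \mapsto t^2$ is $4$-Lipschitz and bounded by $4$ on $[-2, 2]$, the usual recipe --- McDiarmid's inequality for concentration of the supremum around its expectation, symmetrization, and Talagrand's contraction lemma to strip the squared-loss wrapper --- yields, with probability at least $1-\delta$,
\[\sup_{M, \hat{M} \in \mathcal{M}} \bigl|\ell(S, M, \hat{M}) - \ell(\mu, M, \hat{M})\bigr| \le c_1 \cdot R_N(\mathcal{M}) + c_2 \sqrt{\log(1/\delta)/N},\]
for absolute constants $c_1, c_2$, where $R_N(\mathcal{M})$ denotes the Rademacher complexity of $\mathcal{M}$ on $N$ i.i.d.\ indices drawn from $\mu$.

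It then remains to substitute the two cited Rademacher-complexity estimates and solve for $N$. For $\mathcal{M} = \mMmax$, Theorem~5 of Srebro--Shraibman gives $R_N(\mMmax) = O\bigl(K\sqrt{(m+n)/N}\bigr)$; setting the right-hand side above equal to $\eps$ and inverting produces part~(1). For $\mathcal{M} = \mMtr$, Theorem~4 of Shamir--Shalev-Shwartz provides the sharper trace-norm Rademacher bound, and the same inversion produces part~(2). The main step beyond quoting the two references is the upgrade to a two-sided supremum accommodating the data-dependent $\hat{M}$, which is precisely where Lemma~\ref{lem:convexclosed} is used; the genuine technical work --- in particular the trace-norm Rademacher bound, which requires a Grothendieck-type or matrix-concentration argument rather than a direct Lipschitz reduction --- is already carried out in the cited theorems, so the proof is a careful composition rather than a new argument.
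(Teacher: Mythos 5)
Your proposal is correct and matches the paper's (very terse) justification: the paper offers no detailed proof of Proposition~\ref{prop:genbounds}, simply invoking the 2-Lipschitzness and boundedness of the squared loss on $[-1,1]$-valued matrices together with Theorem~5 of \citet{srebro2005rank} and Theorem~4 of \citet{shamir2014matrix}, which is exactly the composition you spell out. Your added details --- the two-sided supremum over pairs to handle the data-dependent $\hat{M}$, the reduction of the difference class into $2\mM$ via Lemma~\ref{lem:convexclosed}, and the McDiarmid/symmetrization/contraction chain --- are the standard steps the paper leaves implicit, and they are carried out correctly.
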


It is worth comparing the two bounds in Proposition~\ref{prop:genbounds}
above. Consider the matrix $M^\star$ consisting of all $1$'s -- from a matrix
completion point of view this is particularly easy as every ``user'' likes
every ``movie''. Note however that $\mnorm{M^\star} = 1$ and $\trnorm{M^\star} =
\sqrt{mn}$. For this example, ignoring the dependence on $\eps, \delta$, the
sample complexity bound obtained using the trace-norm result  would be $O((m +
n)\sqrt{m + n})$, while that using max-norm would be $O(m + n)$. In general, it
is always the case that $\trnorm{M}/\sqrt{mn} \le \mnorm{M}$, so it may seem
that the bounds in terms of trace-norm are weaker. However, there are matrices
for which this gap can be large and the sample complexity bound in terms of
trace-norm is shown to be tight in general (see ~\citep{srebro2005rank,
shamir2014matrix} for further details).

\subsection{Full Completion Problem (with Noise)}

\label{sec:completion-problem}

We now define the matrix completion problem with zero-mean noise, in fact a matrix \textit{estimation} problem in the noisy case. Let $\D$ be a distribution supported on $\inds \times
[-1, 1]$; the results in the paper can all be easily extended when $\D$ is
supported on $\inds \times [-B, B]$, increasing squared error by a $B^2$ factor. Let $\mu$ be the marginal distribution of $\D$ over $\inds$.
Let $S_{XY} = \langle (x_t , y_t) \rangle_{t=1}^N$ be an iid sample drawn from $\D^N$.
Note here that $x_t = (i_t, j_t)$ denotes the index of the matrix and $y_t$ the
observed value. We let $S = (x_1, \ldots, x_N)$ denote the sequence of
$x_t$'s from $S_{XY}$ (with repetitions allowed); note that $S$ is distributed as $\mu^N$. Let $M^\star \in [-1,
1]^\inds$ be the matrix where $M^\star_{ij} = \E_\D[ Y | X = (i, j)]$. We
say that $\hat{M}$ is an $\eps$-accurate completion of $M^\star$, if $\ell(\mu,
M^\star, \hat{M}) \le \eps$. 

\begin{definition}[Full Completion Algorithm] \label{defn:fullcompalg}
	We say that $\FC(S_{XY}, \eps, \delta, \mM)$ is a full completion algorithm
	with sample complexity $\sampc_{\mathsf{FC}}(\eps, \delta, \mM)$ for $\mM$, if
	provided $S_{XY} \sim \D^N$ for some $\D$ over $\inds \times [-1, 1]$ with
	$M^\star \in \mM$, $N \ge \sampc_{\mathsf{FC}}(\eps, \delta, \mM)$, $\FC(S_{XY}, \eps,
	\delta, \mM)$ outputs $\hat{M} \in \mM$ that with probability at least $1 -
	\delta$ satisfies, $\ell(\mu, M^\star, \hat{M})\le \eps$. 
\end{definition}

The following result follows from~\citep{srebro2005rank, shamir2014matrix} (see also Prop.~\ref{prop:genbounds}).
\begin{proposition} \label{prop:fcalgs}
	There exists polynomial time (in $mn/\eps$) full completion algorithms for
	$\mMmax$ and $\mMtr$ (e.g. ERM methods) with the following sample complexity
	bounds:
	\begin{enumerate}
		\item 
			$\sampc_{\mathsf{FC}}(\eps, \delta, \mMmax) = O\left(\frac{1}{\eps^{2}} \left(K^2  (m + n) +  \log \frac{1}{\delta} \right) \right)$.
		\item 
			$\sampc_{\mMtr}(\eps, \delta, \mMtr) = O\left(\frac{1}{\eps^2}  \left(\tau \sqrt{m + n} + \log \frac{1}{\delta} \right) \right)$.
	\end{enumerate}
\end{proposition}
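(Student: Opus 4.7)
The plan is to use empirical risk minimization (ERM) with the squared loss on the labeled sample: for $\mM\in\{\mMmax,\mMtr\}$, set
\[ \hat{M} \eqdef \argmin_{M\in\mM}\frac{1}{N}\sum_{t=1}^N (M(x_t)-y_t)^2. \]
By Lemma~\ref{lem:convexclosed} the constraint sets are convex, and the squared loss is convex in $M$; the trace-norm ball admits a simple SVD-based projection, and the max-norm ball has the semi-definite programming representation of~\citet{srebro2005rank}. Both ERM programs can therefore be solved in time polynomial in $mn$ and $1/\eps$ via interior-point or projected first-order methods, which will settle the computational claim.

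For the statistical guarantee, let $L_\D(M) \eqdef \E_{(X,Y)\sim \D}[(M(X)-Y)^2]$ and $L_S(M) \eqdef \frac{1}{N}\sum_t (M(x_t)-y_t)^2$. Because $M^\star(x)=\E[Y\mid X=x]$, the usual bias--variance identity yields $L_\D(M)-L_\D(M^\star) = \ell(\mu,M,M^\star)$, so the population distance we want to control coincides with the excess risk. The standard ERM sandwich, using $M^\star\in\mM$ and optimality of $\hat M$ on $L_S$, gives
\[ \ell(\mu,M^\star,\hat{M}) \leq 2\sup_{M\in\mM}\bigl|L_S(M)-L_\D(M)\bigr|, \]
reducing the claim to uniform convergence of the squared-loss empirical process over $\mM$ at scale $\eps/2$.

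The two sample complexities then follow by composing existing Rademacher-complexity bounds for the classes $\mMmax$ (Theorem~5 of~\citet{srebro2005rank}) and $\mMtr$ (Theorem~4 of~\citet{shamir2014matrix}) with the squared loss through Ledoux--Talagrand contraction, since on $[-1,1]$-valued matrices and labels the map $t\mapsto (t-y)^2$ is $O(1)$-Lipschitz and $O(1)$-bounded. Adding a McDiarmid tail term of order $\sqrt{\log(1/\delta)/N}$ and demanding the sum be $\leq \eps/2$ reproduces exactly the $K^2(m+n)/\eps^2$ and $\tau\sqrt{m+n}/\eps^2$ rates in the statement. The only point requiring care is to track the Lipschitz and boundedness constants correctly through contraction, and to verify that the conditional noise assumption $y_t\in[-1,1]$ almost surely (built into the definition of $\D$) keeps the loss uniformly bounded; no genuinely new argument beyond invoking the cited generalization bounds is needed.
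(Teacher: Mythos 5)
Your proposal is correct and matches the paper's (implicit) argument: the paper offers no separate proof, simply asserting that the result follows from the generalization bounds of \citet{srebro2005rank} and \citet{shamir2014matrix} together with ERM over the convex classes, which is exactly the excess-risk sandwich plus uniform convergence you spell out. The only slight imprecision is the claim of a ``simple SVD-based projection'' for $\mMtr$ --- the feasible set is the trace-norm ball intersected with the entrywise box, so a single SVD step does not project onto it --- but this is immaterial since, as you also note, the ERM programs are convex (SDP-representable) and solvable in polynomial time by interior-point or other generic methods.
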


\subsection{The Partial Matrix Completion Problem}
\label{sec:coverage-problem}
Recall the definition of the (fractional) partial matrix completion in Eq.~\ref{eq:nonconvex-pmc} and Def.~\ref{def:fractional-pmc}. In this work, we will assume that a full completion matrix $\hat{M}$ is already obtained. The focus of the present work is finding a completion matrix $C \in \{0, 1\}^\inds$ (or $C \in [0,1]^\inds$ for fractional coverage) with large \textit{coverage}.  The coverage in both cases are defined as: $|C|\defeq \sum_{x \in \inds} C_{x},$
and a low \textit{loss}, defined as: 
$$\ell(C, M^\star, \hat{M}) \defeq \frac{1}{|C|} \sum_{x \in \inds} C_{x} (M^\star_{x}-\hat{M}_{x})^2.$$

Note that $C \in [0, 1]^\inds$ can be viewed as a ``fractional'' set and we are overloading the notation $\ell$ to allow that. Such a fractional coverage can be randomly rounded to a set whose size is within 1 of $|C|$. In an ideal world, we would have at most $\eps$ loss measured over the cells we complete as defined above, and large or even full coverage $|C| = mn$.

Note that although the requirement in the optimization problems in Eq.~(\ref{eq:nonconvex-pmc}) and Def.~\ref{def:fractional-pmc} is to guarantee $\ell(C, M^\star, M)
\le \eps$, as we don't know $M^\star$, the only guarantee we have by using a
full completion algorithm (and generalization bounds) is that (with high probability) $M^\star$
is in some version space, say $\V$, centered at $\hat{M}$. So we actually show
a stronger guarantee that, $\sup_{M \in \V} \ell(C, M, \hat{M}) \le \eps$. 

A second observation is that if $M$ is a version space around $\hat{M}$, then by
convexity of the matrix classes, the matrix $(M - \hat{M})/2$, must be in some
version space, say $\V_0$, centered around the zero matrix, $\bzero_\inds$.
Thus, we will actually find a $C$ that guarantees, $\ell(C, M, \bzero_\inds)
\le \eps$ for every $M \in \V_0$. This means that for maximizing coverage our
algorithm has the remarkable property we only need to know the locations of the
revealed entries as indicated by $S$. 

Section~\ref{sec:coverage} presents a computationally inefficient but
statistically superior algorithm in terms of sample complexity for the \emph{coverage
problem} and its consequences for partial matrix completion.
Section~\ref{sec:efficient} presents a computationally efficient algorithm at
the cost of a slightly worse sample complexity when using the class $\mMmax$.

\section{An Inefficient Algorithm}
\label{sec:coverage}

The main novelty  in  the partial matrix completion problem is that of finding an  optimal coverage, defined as the matrix $C$  in  the formulation in Def.~\ref{def:fractional-pmc}.  In this  section  we give an (inefficient) algorithm for finding the optimal coverage, and the generalization error bounds  for Partial Matrix Completion that arise  from it. In the next section we  give an efficient approximation algorithm for doing the same, with slightly worse sample complexity bounds and more complex analysis. This differs substantially from prior work on abstention in classification and regression \citep{KK21} where the optimal solution can be found in polynomial time.

Let $C \in [0, 1]^\X$ be a target confidence matrix. For such a $C$, we will
denote by $\nu_C$ the probability distribution where $\nu_C(x) = C_x/ \|C\|_1$ for
$x \in \inds$. Let $S \sim \mu^N$ be a sample obtained from the target
distribution $\mu$. We output a $C$ which is an optimal solution to the
following optimization problem, inspired by a similar problem studied
by \citet{KK21} for classification with abstention.

\fbox{
	\parbox{0.97\columnwidth}{ 
		\begin{align}
        & \text{Parameters and inputs}: \gamma, \beta, S, \mM \notag\\
        & \text{maximize} \ \ \ \|C\|_1  \label{opt:inefficient} \\
        & \text{subject to } C \in [0,1]^{\X}, \ \text{and} \quad \forall M \in \V(\bzero_\inds, \beta, S; \mM), \  \ell(\nu_C, M, \bzero_\inds)  \le \gamma  \notag
        \end{align}
    }
}

\begin{algorithm}[h!]
\begin{algorithmic}[1]
\STATE \textbf{Inputs}: $S_{XY} = \langle (x_t, y_t) \rangle_{t=1}^N \sim \D^N$, $\eps$, $\delta$, $\mM$, $\FC$ (cf. Def.~\ref{defn:fullcompalg}).
\STATE Obtain $\hat{M} \in \mM$ using $\FC(S_{XY}, \eps/4, \delta/3, \mM)$.
\STATE Obtain $C$ using MP~\ref{opt:inefficient} with $\gamma \defeq \eps/4$, $\beta \defeq \eps/8$, $S = (x_1, x_2, \ldots, x_N)$ from $S_{XY}$.
\STATE \textbf{return} $(\hat{M}, C)$.
\caption{\label{alg:inefficient}}
\end{algorithmic}
\end{algorithm}

MP~\ref{opt:inefficient} defines a family of optimization problems; we will
typically use $\mM$ to be $\mMmax$ or $\mMtr$. The above optimization problem is
in fact a linear program. The only unknowns are $C_x$ for $x \in \inds$.
However, the set of constraints is infinite and it is unclear how a separation
oracle for the constraint set may be designed. Such an oracle could be designed
by solving the following optimization problem: 
\begin{equation}\label{eqn:hard-separation-problem}%
\max_{M \in \V(\bzero_\inds, \beta, S; \mM)} \ell(\nu_C, M, \bzero_\inds). 
\end{equation}

This optimization problem requires maximizing a quadratic function, and we prove it to be 
$\NP$-hard with the additional PSD and symmetric constraints in Appendix~\ref{sec:hardness}.

The following result subsumes Corollary~\ref{corr:inefficient}. 
We defer the proof of this theorem to Appendix~\ref{app:offline}, as the proof is similar to that of our efficient algorithm in the next section.

\begin{theorem} \label{thm:main_inefficient}
	Let $\mM$ be either $\mMmax$ or $\mMtr$. Let $\D$ be distribution over $\inds
	\times [-1, 1]$, $\mu$ the marginal of $\D$ over $\inds$, $\mu_{\max}=\max_{i,j}\mathbb{P}_{\mu}((i,j)\text{ is sampled})$. Suppose that
	$M^\star$, defined as $M^\star_{ij} = \E_{(X, Y)\sim \D}[Y | X = (i, j)]$
	satisfies that $M^\star \in \mM$. Furthermore, suppose that $S_{XY}
	\sim \D^{N}$ and that $\FC$ is a full completion algorithm as in Defn.~\ref{defn:fullcompalg}. Then, provided $N \ge \max\{\sampc_{\mathrm{FC}}(\eps/4, \delta/3, \mM),
	\sampc(\eps/8, \delta/3, \mM) \}$, for $(\hat{M}, C)$ output by
	Alg.~\ref{alg:inefficient}, it holds that:
    \begin{enumerate}
        \item $\|C\|_1 \ge 1/\mu_{\max}$,
        \item $\displaystyle\frac{1}{\|C\|_1} \sum_{x \in \inds} C_x (\hat{M}_x - M^\star_x)^2 \le \eps.$
    \end{enumerate}
\end{theorem}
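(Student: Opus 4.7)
The proof has two essentially independent parts (coverage and error), and both rest on (a) Lemma~\ref{lem:convexclosed}, which tells us $\mM$ is convex and closed under negation and in particular contains $\bzero_\inds$, and (b) two applications of the uniform convergence bound of Definition~\ref{defn:samplecomp}. I would begin by setting up three events, each of probability at least $1-\delta/3$, so that all three hold simultaneously with probability at least $1-\delta$: (E1) the full-completion guarantee $\ell(\mu, M^\star, \hat{M}) \le \eps/4$, from $\FC$ invoked with parameters $(\eps/4,\delta/3)$; (E2) the empirical/population closeness centered at $\hat M$, that is $\sup_{M \in \mM} |\ell(S, M, \hat{M}) - \ell(\mu, M, \hat{M})| \le \eps/8$, from Definition~\ref{defn:samplecomp} with parameters $(\eps/8,\delta/3)$; and (E3) the same closeness centered at $\bzero_\inds$, $\sup_{M \in \mM} |\ell(S, M, \bzero_\inds) - \ell(\mu, M, \bzero_\inds)| \le \eps/8$. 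Both E2 and E3 are legal because $\hat M,\bzero_\inds\in\mM$.

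For part (2), the error bound, I would use a ``halving'' trick based on convexity of $\mM$. Define $\Delta \defeq \tfrac12(M^\star-\hat M)$. Because $\mM$ is closed under negation and convex, $\Delta \in \mM$, and trivially $\Delta\in[-1,1]^\inds$. Now I would check that $\Delta$ lies in the MP's empirical version space $\V(\bzero_\inds, \beta, S; \mM)$: a direct computation gives $\ell(S,\Delta,\bzero_\inds)=\tfrac14\ell(S,M^\star,\hat M)$, and combining E1 with E2 (applied to $M=M^\star$) bounds $\ell(S, M^\star, \hat{M})\le \eps/4+\eps/8=3\eps/8$, so $\ell(S,\Delta,\bzero_\inds)\le 3\eps/32\le \eps/8=\beta$. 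Feasibility of $C$ in MP~\ref{opt:inefficient} then yields $\ell(\nu_C,\Delta,\bzero_\inds)\le \gamma=\eps/4$, and rescaling by the same factor of $4$ gives $\tfrac{1}{\|C\|_1}\sum_x C_x(\hat M_x-M^\star_x)^2\le \eps$.

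For part (1), the coverage bound, I would exhibit an explicit feasible point for MP~\ref{opt:inefficient} with $\ell_1$-mass $1/\mu_{\max}$; optimality of $C$ then does the rest. The natural candidate is $C^\circ_x\defeq \mu(x)/\mu_{\max}$, which is entry-wise in $[0,1]$, satisfies $\|C^\circ\|_1=1/\mu_{\max}$, and induces $\nu_{C^\circ}=\mu$. To check feasibility I would take any $M\in\V(\bzero_\inds,\beta,S;\mM)$; by definition $\ell(S,M,\bzero_\inds)\le \beta=\eps/8$, and under E3 this lifts to $\ell(\mu, M, \bzero_\inds)\le \eps/8+\eps/8=\eps/4=\gamma$, which is exactly the MP constraint for $C^\circ$. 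Hence the optimum $C$ satisfies $\|C\|_1\ge \|C^\circ\|_1=1/\mu_{\max}$.

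The main conceptual step is not any single inequality but the dual role played by $\bzero_\inds$ as reference matrix: we need generalization from $S$ to $\mu$ in one direction (to certify that the MP's constraint, which only ``sees'' the empirical sample $S$, actually controls the loss against the unknown $M^\star$ after the halving trick) and simultaneously from $\mu$ to $S$ in the other direction (to show the target fractional cover $\mu/\mu_{\max}$ satisfies the empirical constraint). Once Lemma~\ref{lem:convexclosed} licenses the halving and E1--E3 are in place, the rest is a short calculation; no combinatorial argument about the structure of the support of $\mu$ is required, which is why $C$ depends only on the \emph{locations} of revealed entries and not on their values.
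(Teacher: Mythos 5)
Your proposal is correct and follows essentially the same route as the paper's proof: the same three probability-$\delta/3$ events, the same halving trick $(M^\star-\hat M)/2 \in \V(\bzero_\inds,\beta,S;\mM)$ via Lemma~\ref{lem:convexclosed}, and the same witness $C^\mu_x = \mu(x)/\mu_{\max}$ for the coverage bound (which the paper isolates as Proposition~\ref{prop:inefficient}), with identical constants throughout.
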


\section{An Efficient Algorithm} \label{sec:efficient}

In this section, we show how the result from
Proposition~\ref{prop:inefficient} can be achieved using an efficient
algorithm at a modest cost in terms of sample complexity when using the matrix
class $\mMmax$. In particular, we consider the optimization problem defined in
MP~\ref{opt:efficient}, where the constraint $\norm{M}_{\nu_C}^2 \le \gamma$
for all $M \in \V(\bzero_\inds, \beta, S; \mMmax)$ from
MP~\ref{opt:inefficient} is replaced by $\E_{x \sim \nu_C}[M_x] \le \gamma$.
The efficiency comes from the fact that we can now implement a separation
oracle for the constraint set by solving, for a given $C$, the problem, 
\[ \max_{M \in \V(\bzero_\inds, \beta, S, \mMmax)} \E_{x \sim \nu_C}[M_x]. \]
Mathematical program \ref{opt:efficient} is a convex optimization problem since the constraint set is convex and the objective function is linear. However, it is not immediately clear how this relaxed optimization problem relates to the original, which is the main technical contribution of this section.

\fbox{
	\parbox{0.97\columnwidth}{ 
		\begin{align} \label{opt:efficient}
        & \text{Parameters and inputs}: \gamma, \beta, S, \mMmax \notag\\
        & \text{maximize} \ \ \ \|C\|_1 \\
        & \text{subject to } C \in [0,1]^{\X}, \ \text{and } \forall M \in \V(\bzero_\inds, \beta, S; \mMmax),  \ \E_{x \sim \nu_C}[M_x]  \le \gamma \notag 
        \end{align}
    }
}

\begin{algorithm}[h!]
\caption{(Efficient) Offline Algorithm for Partial Matrix Completion}
\begin{algorithmic}[1]
\STATE \textbf{Inputs:} $S_{XY} = \langle (x_t, y_t) \rangle_{t=1}^N \sim \D^N$, $\FC$ (cf. Def.~\ref{defn:fullcompalg}), $\eps$, $\delta$.
\STATE Obtain $\hat{M} \in \mMmax$ using $\FC(S_{XY}, \eps^2/(4 \pi^2 K^2), \delta/3, \mMmax)$.
\STATE Obtain $C$ using MP~\ref{opt:efficient} with $\gamma \defeq \eps/(2 \pi K)$, $\beta \defeq \eps^2/(8 \pi^2 K^2)$, $S = (x_1, x_2, \ldots, x_N)$ from $S_{XY}$.
\STATE \textbf{return} $(\hat{M}, C)$.
\label{alg:efficient}
 \end{algorithmic}
\end{algorithm}

The following result subsumes Corollary~\ref{corr:efficient}. 

\begin{theorem} \label{thm:main_efficient}
	Let $\D$ be a distribution over $\inds
	\times [-1, 1]$, $\mu$ the marginal of $\D$ over $\inds$, $\mu_{\max}=\max_{i,j}\mathbb{P}_{\mu}((i,j)\text{ is sampled})$. Suppose that
	$M^\star$, defined as $M^\star_{ij} = \E_{(X, Y)\sim \D}[Y | X = (i, j)]$
	satisfies that $M^\star \in \mMmax$. Furthermore, suppose that $S_{XY}
	\sim \D^{N}$ and that $\FC$ is a full completion algorithm as in Defn.~\ref{defn:fullcompalg}.
	Then, provided $N \ge \max\{\sampc_{\mathrm{FC}}(\eps^2/(4 \pi^2 K^2), \delta/3, \mMmax),$ $\sampc(\eps^2/(8 \pi^2 K^2), \delta/3, \mMmax) \}$, for $(\hat{M}, C)$ output by
	Alg.~\ref{alg:efficient}, it holds that: 
    \begin{enumerate}
        \item $\|C\|_1 \ge 1/\mu_{\max}$,
        \item $\displaystyle\frac{1}{\|C\|_1} \sum_{x \in \inds} C_x (\hat{M}_x - M^\star_x)^2 \le \eps$. 
    \end{enumerate}
\end{theorem}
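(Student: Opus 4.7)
The plan is to establish both claims of Theorem~\ref{thm:main_efficient} under a union of three high-probability events, each with failure probability $\le \delta/3$: (E1) $\FC$ returns $\hat M\in\mMmax$ with $\|M^\star-\hat M\|_\mu^2\le \eps^2/(4\pi^2K^2)$; (E2) the generalization bound of Prop.~\ref{prop:genbounds} applied with centre $\hat M$ transfers $\ell(S,\cdot,\hat M)$ to $\ell(\mu,\cdot,\hat M)$ up to an additive $\beta=\eps^2/(8\pi^2K^2)$, uniformly over $\mMmax$; and (E3) the same bound applied with centre $\bzero_\inds$. I would condition on all three throughout.

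For the coverage claim I would use the witness $C^\star_x:=\mu(x)/\mu_{\max}$, which has $\|C^\star\|_1=1/\mu_{\max}$ and induces $\nu_{C^\star}=\mu$. For any $M\in\V(\bzero_\inds,\beta,S;\mMmax)$, event (E3) gives $\|M\|_\mu^2\le\|M\|_S^2+\beta\le 2\beta$, so by Cauchy--Schwarz $\E_\mu[M_x]\le\|M\|_\mu\le\sqrt{2\beta}=\eps/(2\pi K)=\gamma$. Hence $C^\star$ is feasible for MP~\ref{opt:efficient} and the optimum $C$ satisfies $\|C\|_1\ge 1/\mu_{\max}$.

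The accuracy claim is the substantive step. Set $\Delta=(M^\star-\hat M)/2$; by Lemma~\ref{lem:convexclosed} $\Delta\in\mMmax$, and events (E1),(E2) together yield $\|\Delta\|_S^2\le\beta$, so $\Delta\in\V(\bzero_\inds,\beta,S;\mMmax)$. Since $\sum_x C_x(\hat M_x-M^\star_x)^2=4\sum_x C_x\Delta_x^2$, it suffices to prove $\E_{\nu_C}[\Delta_x^2]\le\eps/4$. The core difficulty is that the MP constraint is \emph{linear} in $M$, so I would invoke a Goemans--Williamson hyperplane rounding. Factor $\Delta=UV^\top$ with $\|u_i\|_2,\|v_j\|_2\le\sqrt K$, draw $g\sim\mathcal N(0,I)$, and define the rank-one sign matrix $R_g$ with $R_{g,ij}=\sign(\langle u_i,g\rangle)\sign(\langle v_j,g\rangle)$. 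The Hadamard product $R_g\odot\Delta$ factors as $(s_i u_i)(t_j v_j)^\top$ with unchanged row norms, so $\mnorm{R_g\odot\Delta}\le K$; moreover $|(R_g\odot\Delta)_x|=|\Delta_x|\le 1$ and $\|R_g\odot\Delta\|_S^2=\|\Delta\|_S^2\le\beta$. Hence $R_g\odot\Delta\in\V(\bzero_\inds,\beta,S;\mMmax)$ for every realization of $g$, and the feasibility of $C$ yields $\sum_{ij}C_{ij}R_{g,ij}\Delta_{ij}\le\gamma\|C\|_1$ pointwise in $g$.

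Averaging over $g$ and combining the classical identity $\E_g[R_{g,ij}]=\tfrac{2}{\pi}\arcsin(\Delta_{ij}/(\|u_i\|\|v_j\|))$ with the elementary inequality $y\arcsin(y)\ge y^2$ for $y\in[-1,1]$ gives $\E_g[R_{g,ij}\Delta_{ij}]\ge \tfrac{2}{\pi\|u_i\|\|v_j\|}\Delta_{ij}^2\ge \tfrac{2}{\pi K}\Delta_{ij}^2$, where the last step uses $\|u_i\|\|v_j\|\le K$. Summing against $C$ and exchanging sum with expectation yields $\sum_{ij}C_{ij}\Delta_{ij}^2\le\tfrac{\pi K}{2}\gamma\|C\|_1=\tfrac{\eps}{4}\|C\|_1$, and multiplying by four produces $\frac{1}{\|C\|_1}\sum_x C_x(\hat M_x-M^\star_x)^2\le\eps$ as required. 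The main obstacle is precisely this rounding step: a linear condition on the version space carries no quadratic information by itself, and the crucial observation that Hadamard-multiplying by a random sign matrix keeps $\Delta$ inside $\V$ (both in max-norm and empirical squared loss) is what makes the MP constraint applicable entrywise and lets the GW identity convert it into the desired quadratic bound, with the factor $\pi/2$ absorbed into the choice $\gamma=\eps/(2\pi K)$.
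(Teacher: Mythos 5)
Your proposal is correct and follows essentially the same route as the paper: the witness $C^\mu_x=\mu(x)/\mu_{\max}$ for coverage (the paper's Proposition~\ref{prop:efficient}), the reduction to $\Delta=(M^\star-\hat M)/2\in\V(\bzero_\inds,\beta,S;\mMmax)$, and a Goemans--Williamson sign-flip of the rows of the factors to convert the linear MP constraint into the quadratic bound, exactly as in Proposition~\ref{prop:key-GW}. The only cosmetic difference is that you apply the rounding directly to $\Delta$ and average the feasibility constraint over the random sign matrix using the $\tfrac{2}{\pi}\arcsin$ identity, whereas the paper states this as a separate sup-versus-sup proposition proved via the equivalent angle/cosine inequality; this also makes the factor-of-$4$ bookkeeping explicit, which the paper leaves implicit.
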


\section{Online Setting}

The previous section introduces the \textit{offline} setting for 
the partial matrix completion problem. In Appendix \ref{sec:online}, we describe the online version of the problem, which is motivated by two important considerations. First, in many applications, the observation pattern is more general than a fixed distribution. It can be a changing distribution or be comprised of adversarial observations. Second, our online algorithm incrementally updates the solution via iterative gradient methods, which is more efficient than the offline methods. For space considerations, details are deferred to Appendix~\ref{sec:online} which contains the setting, definitions, algorithm specification, and main results, and Appendix~\ref{app:online}, which details the proofs. In particular, the online algorithm, called Online Dual Descent (ODD), is described in Algorithm \ref{alg:odd}, and its regret guarantee in Theorem \ref{sec:MainResult}. The online regret guarantee implies the statistical learning guarantees of the previous sections when the support size is a constant fraction of the full matrix, and this implication is spelled out precisely in Corollary \ref{cor:online-to-offline}.

\section{Experiments and Implementation}
\label{sec:experiments}
The MovieLens dataset (\citep{10.1145/2827872}) consists of, among other data, a set of users with their rankings of a set of movies. It is a common benchmark for matrix completion because some of the rankings are missing, and one can make predictions on the rankings of user preferences. 

We used the dataset differently, aiming to test our online algorithm on generating a satisfying confidence matrix. The experimental procedure is outlined as follows: we used training data from $250$ users and their ratings on $250$ movies, giving us a total of $5189$ completed samples from the incomplete matrix of size $250\times 250$. We ran our algorithm, \texttt{ODD} (Algorithm~\ref{alg:odd}), to get a confidence matrix $C$. In parallel, we used another standard matrix completion tool, \texttt{fancyimpute} (\citep{fancyimpute}), to fill in the missing entries of the matrix and obtain a completion $\hat{M}_{f}$. After $C$ and $\hat{M}_{f}$ are obtained, we reveal the true ratings at the missing entries using the validation set and computed the mean squared error of the predicted rating and true rating at each entry. The following plots show the distribution of the mean squared error with respect to the confidence score $C$ assign at the particular entry. 

\begin{figure}[h]
\begin{subfigure}{0.48\textwidth}
\includegraphics[scale=0.35]{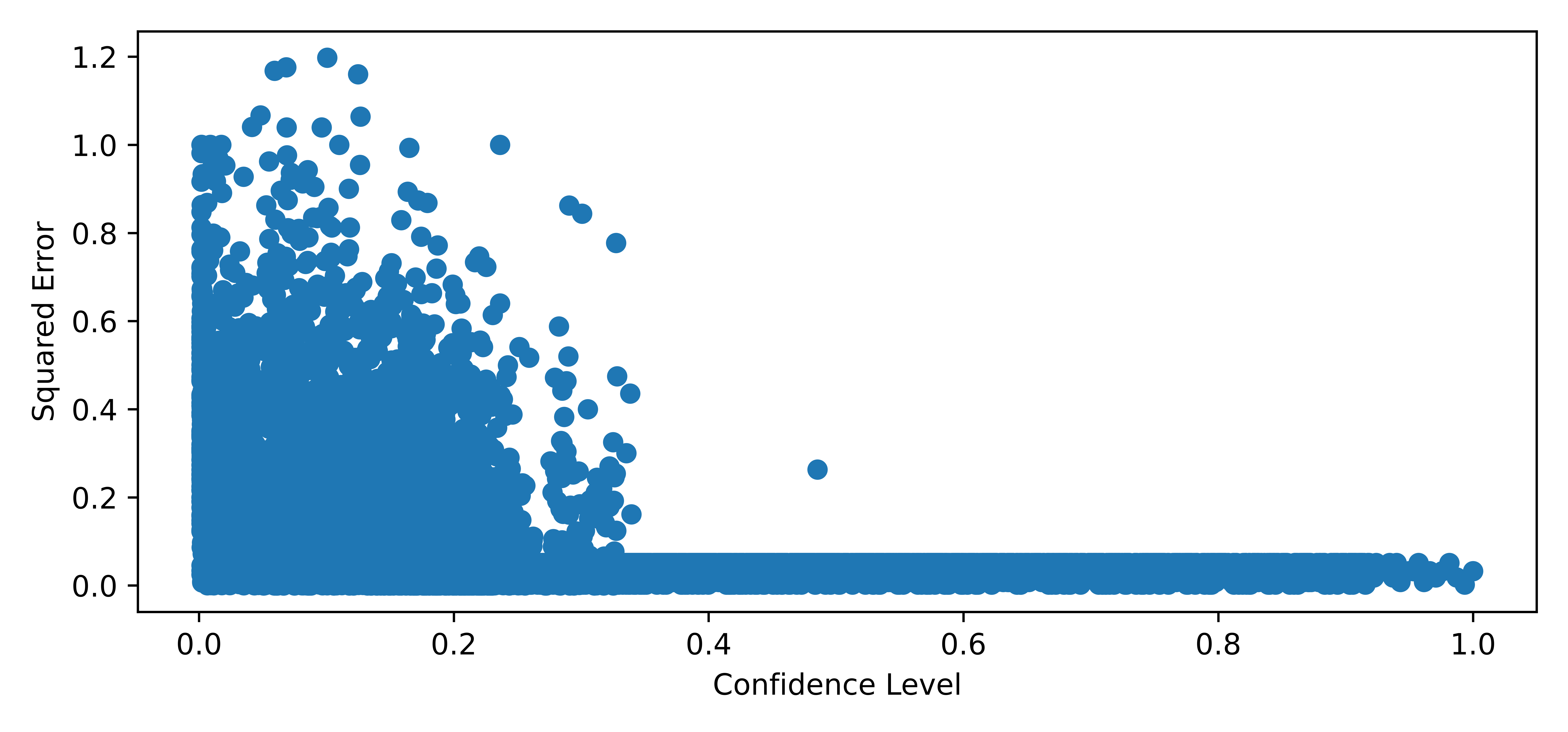}
\end{subfigure}
\begin{subfigure}{0.3\textwidth}
\includegraphics[scale=0.35]{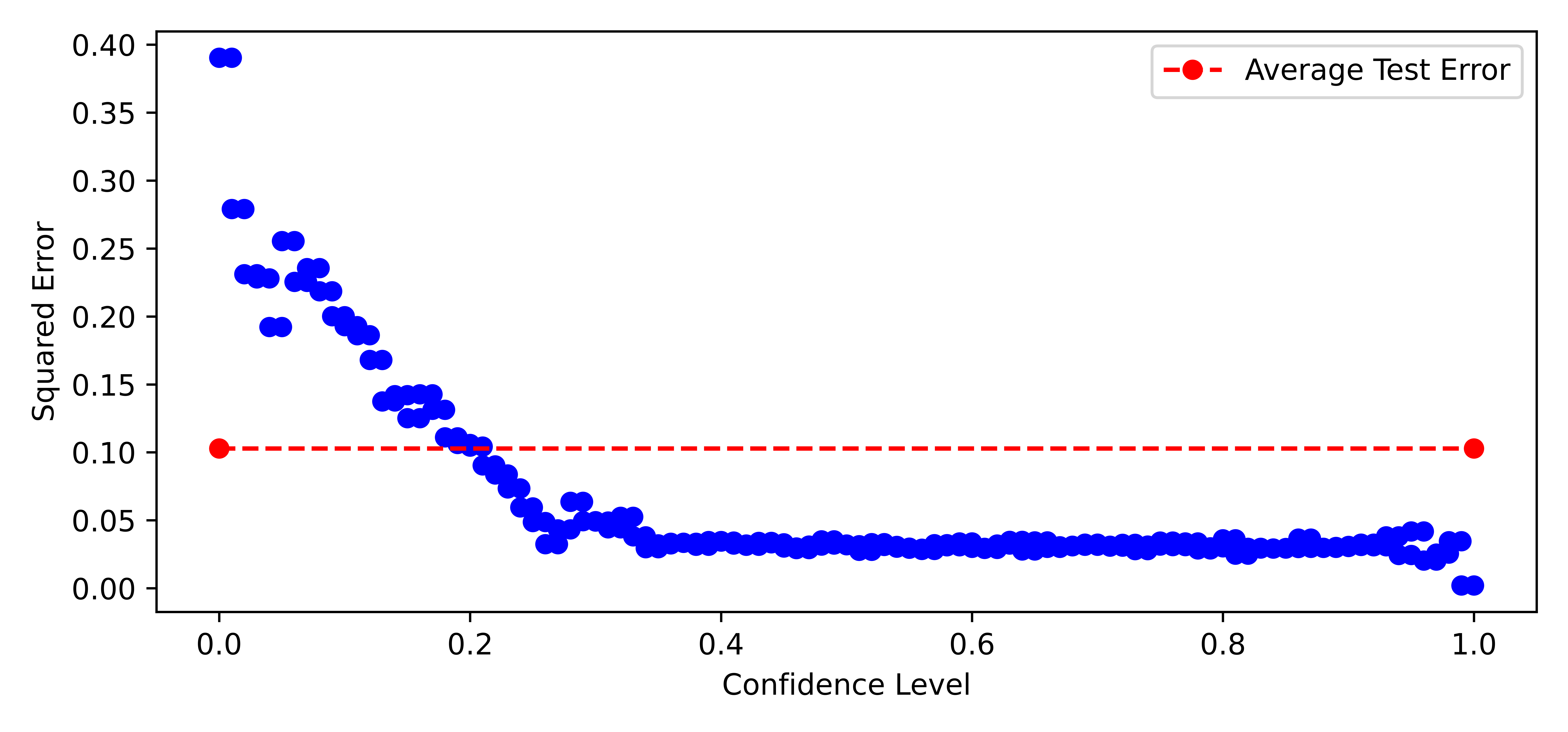}
\end{subfigure}%

\caption{(a) \texttt{ODD} heuristic on $250\times 250$ user-movie rating data from the MovieLens Dataset. 
Squared error in entry-wise prediction and entry-wise confidence level estimated by the \texttt{ODD} heuristic. 
(b) The average squared error over various confidence levels suggests that a higher confidence level is correlated with a lower squared error. }

\end{figure}

\section{Conclusion}

In this work we define the setting of partial matrix completion, with our high-level contributions outlined as the following: 

\paragraph{A new framework.} We propose a new framework called \textit{partial matrix completion}. In this problem, one is given possibly noisy observations from a low-rank matrix, and the goal is to identify entries that can be predicted with high confidence. The partial matrix completion problem answers two key questions: 

\begin{itemize}
	\item Based on possibly noisy samples from an arbitrary unknown sampling distribution $\mu$, \textit{how many entries can be completed} with $\le \eps$ error? \vspace{-0.5em}
    \item How can we efficiently identify the entries that can be accurately completed?
\end{itemize}
When the underlying matrix has low rank $k$, we show that it is possible to complete a large fraction of the matrix using only $\tilde{O}(k (m + n))$ observations while simultaneously guaranteeing high accuracy over the completed set. 
We then study the complexity of identifying the optimal completion matrix. We show that a na\"{i}ve mathematical programming formulation of the problem is hard. However, we propose a relaxation that gives rise to efficient algorithms, which results in a slightly worse dependence on $k$ for the sample complexity. These guarantees are outlined both in Corollary~\ref{corr:inefficient} and Corollary~\ref{corr:efficient}, and Theorem~\ref{thm:main_inefficient} and Theorem~\ref{thm:main_efficient} for their more general versions. 

\paragraph{Online game formulation.} Furthermore, we consider the partial matrix completion problem in the online setting, where the revealed observations are \textit{not} required to follow any particular fixed distribution. The goal henceforth is to minimize \textit{regret}, the gap between the algorithm's performance and the single best decision in hindsight. This is a more general setting, as when imposed with distribution assumptions, regret guarantees in online learning algorithms naturally translate to statistical guarantees. 

Our proposed online partial matrix completion algorithm is derived from an online repeated game. The version space is the set of all valid completions with low generalization error.  High confidence should be assigned to entries where all completions in the version space are similar. Therefore, we formulate the problem as a two-player online game. One player iteratively updates a confidence matrix, and the other learns the version space. We gave an iterative gradient-based method with provable regret guarantees and concluded with preliminary experimental evidence of the validity of our framework.

\section{Acknowledgements}
Elad Hazan acknowledges funding from the Office of Naval Research  grant N000142312156, the NSF award 2134040, and Open Philanthropy. This work was done in part when Clara Mohri was visiting and supported by  Princeton University. 

\bibliographystyle{plainnat}
\bibliography{refs}

\newpage

\appendix

\tableofcontents
\section{Supporting Proofs}
\label{app:offline}

\subsection{Proof of Lemma \ref{lem:convexclosed}
\label{app:convexclosedProof}}
\begin{replemma}{lem:convexclosed}
	The classes, $\mMmax$ and $\mMtr$, are closed under negations and are convex.
\end{replemma}
\begin{proof}[Proof of Lemma~\ref{lem:convexclosed}]
Suppose $M$ is a matrix then we can write $M = U V^\top$ such that,
	\begin{itemize}
		\item If $M \in \mMmax$, $\norm{U}_{2, \infty} \cdot \norm{V}_{2, \infty} \le K$. 
		\item If $M \in \mMtr$, $\frnorm{U} \cdot \frnorm{V} \le \tau$. 
	\end{itemize}

	Note that $-M = -U V^\top$, negating the sign of $U$ doesn't affect its norm, so clearly the classes are closed under negation.

    The convexity of the class follows directly from that $\|\cdot\|_{\max}$ and $\|\cdot\|_{\trace}$ are well-defined norms.

	%
	%
\end{proof}

\subsection{Computational Hardness of the Coverage Problem}\label{sec:hardness}
In this section we show that the optimization problem MP~\ref{opt:inefficient}
is NP-hard in full generality. The first fact we use is the polynomial time
equivalence between linear optimization and separation that was established in
the work of \cite{grotschel1981ellipsoid}. It thus remains to prove NP-hardness
of the separation problem, which is given in equation \ref{eqn:hard-separation-problem}, and can be restated as the following mathematical program when our version space is restricted with a trace norm constraint:
\begin{align} \label{eqn:opt} 
& \max_{X} \sum_{ij} C_{ij } X_{ij}^2  \\
&  |X_{ij}| \le 1 \notag , \trnorm{X} \le k \notag . 
\end{align}
We give evidence of NP-hardness for this program by considering the same mathematical program added a symmetric positive-definite constraint, as follows.
\begin{lemma}
Mathematical program \eqref{eqn:opt-psd} is NP-hard to compute, or approximate with factor $k^{1-\eps}$ for any $\eps > 0$.  
\begin{align} \label{eqn:opt-psd} 
& \max_{X} \sum_{ij} C_{ij } X_{ij}^2  \\
& 0 \leq X_{ij} \le 1 \notag ,  X \succeq 0 \ , X\in \mathrm{Sym}(n), \ \trace(X) \le k \notag . 
\end{align}
\end{lemma}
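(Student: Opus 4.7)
My plan is to reduce from \textsc{Max Clique}, which by H\aa stad's theorem is NP-hard to approximate within any factor $n^{1-\eps}$. Given a graph $G=(V,E)$ on $n$ vertices with adjacency matrix $A(G)$ and clique number $\omega(G)$, I would instantiate MP~\eqref{eqn:opt-psd} with $C = A(G)$ and a trace budget $k$ to be calibrated, then match an upper and a lower bound on the optimum in terms of $\omega(G)$.

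The main analytic tool is the Motzkin--Straus theorem, $\max_{x \in \Delta_n} x\tr A(G)\, x = 1 - 1/\omega(G)$. For the upper bound, I would use that any PSD matrix with non-negative entries satisfies $X_{ij}^2 \leq X_{ii} X_{jj}$ (Cauchy--Schwarz on a $2 \times 2$ principal minor), hence $\sum_{ij} A(G)_{ij} X_{ij}^2 \leq \sum_{ij} A(G)_{ij} X_{ii} X_{jj} = x\tr A(G)\, x$ with $x_i := X_{ii} \in [0,1]$ and $\|x\|_1 \leq k$. A scaling of Motzkin--Straus then bounds this by an explicit function of $\omega(G)$ and $k$ (for example $k^2(1-1/\omega(G))$ when $k \le \omega(G)$). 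For the matching lower bound I would exhibit the rank-one certificate $X = vv\tr$ with $v$ supported on a maximum clique and appropriately scaled, verifying PSDness, the box constraint, and the trace bound directly from the clique structure.

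Once the equality $\OPT =$ (an explicit function of $\omega(G)$ and $k$) is in place, NP-hardness of exact computation is immediate since $\OPT$ determines $\omega(G)$. For the $k^{1-\eps}$ inapproximability, I would scale the construction so that $k = \Theta(n)$, for instance by taking a disjoint union of $n$ copies of $G$ and calibrating the trace budget, so that $k$ and $n$ grow together. A $k^{1-\eps}$-approximation algorithm for MP~\eqref{eqn:opt-psd} on the combined instance would then yield an $n^{1-\eps}$-approximation algorithm for $\omega(G)$, contradicting H\aa stad's theorem.

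The main obstacle I anticipate is tightness of the upper bound once $k$ is much larger than $\omega(G)$. For $k \le \omega(G)$ the Motzkin--Straus bound is tight and the analysis is clean, but for $k \gg \omega(G)$ the quantity $\max x\tr A(G)\, x$ subject to $x \in [0,1]^n$, $\|x\|_1 \leq k$ is a fractional densest-$k$-subgraph value that, on structured graphs such as balanced bipartite graphs (where $\omega = 2$ yet $x\tr A x$ can reach $\Theta(n^2)$), is far larger than a linear function of $\omega(G)$. Handling this will likely require a gadget forcing any feasible solution to concentrate its trace on a clique-supported sub-block, so that the scaling of $k$ does not decouple $\OPT$ from the clique structure of the original instance.
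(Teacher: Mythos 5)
Your reduction for the \emph{exact} hardness part is sound and close in spirit to the paper's: both reduce from clique, and your Cauchy--Schwarz step $X_{ij}^2\le X_{ii}X_{jj}$ on a $2\times 2$ principal minor is exactly the paper's device for passing to the rank-one matrix $uu^\top$ with $u_i=\sqrt{X_{ii}}$. Where you diverge is in how the diagonal vector is then analyzed: the paper characterizes the feasible rank-one set $\{uu^\top : 0\le u_i\le 1,\ \|u\|_1=k\}$ as the convex hull of binary rank-one matrices (citing Warmuth--Kuzmin) and uses convexity of the objective to land on an integral vertex, reading off a clique directly; you instead invoke Motzkin--Straus to bound $x^\top A(G)x$ by $k^2(1-1/\omega(G))$ and match it with a scaled clique certificate. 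Your route is analytically cleaner and avoids the convex-hull machinery; note also that your worry about the regime $k\gg\omega(G)$ is misplaced for this part: the Motzkin--Straus bound $k^2(1-1/\omega(G))$ follows by degree-2 homogeneity after \emph{dropping} the box constraint, so it is valid for every trace budget $k$ (your bipartite example with $\omega=2$ attains $\Theta(n^2)=k^2/2$, which is consistent with, not in conflict with, the bound), and no gadget is needed to decide ``$\omega(G)\ge k$'' via the threshold $\OPT\ge k(k-1)$.

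The genuine gap is in the inapproximability claim. Your plan --- take disjoint copies, calibrate $k=\Theta(n)$, and convert a $k^{1-\eps}$-approximation of MP~\eqref{eqn:opt-psd} into an $n^{1-\eps}$-approximation of $\omega(G)$ --- cannot work as stated, because the program's optimal value does not scale polynomially with $\omega(G)$. For any trace budget $k$, the yes-instances (large clique) have value about $k^2$, while the no-instances of a gap-clique reduction still admit value up to $k^2\bigl(1-1/\omega\bigr)\ge k^2/2$ whenever $\omega\ge 2$ (and Håstad-type no-instances are dense enough that values near $k^2$ are actually attained); disjoint copies do not help, since with budget $k$ split as $\sum_i k_i=k$ the value $\sum_i k_i^2(1-1/\omega)$ is maximized by concentrating on one copy. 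So the multiplicative gap between yes and no cases is $1+O(1/\omega_{\mathrm{no}})$, nowhere near $k^{1-\eps}$, and a coarse multiplicative approximation of $\OPT$ cannot distinguish them. Establishing the stated $k^{1-\eps}$ factor therefore needs an argument beyond your construction (the paper itself only proves the exact completeness/soundness equivalence at threshold $k^2$ and attributes the approximation factor to the MAX-CLIQUE inapproximability literature without spelling out the amplification); your anticipated fix (a gadget forcing trace onto a clique-supported block) is aimed at the wrong obstacle, since the difficulty is not the validity of the upper bound for $k\gg\omega$ but the absence of any polynomial-size gap in $\OPT$ itself.
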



\begin{proof}
The proof relies on strong hardness of approximation results for the MAX-CLIQUE problem that were proven in \cite{hastad1996clique} and subsequent work. 

We prove by reduction from $k$-CLIQUE. Let $G(V,E)$ be an instance of the $k$-CLIQUE problem.

{\bf The reduction.} Given a graph $G$, let 
$$ \reals^{V \times V} \ni C_{ij} = \mycases {1}{$(i,j) \in E$ \text{ or } $i = j$}{0}{otherwise} .$$ 

\end{proof}

We now claim the following:
\begin{lemma}
The value of mathematical program \eqref{eqn:opt} is at least ${k^2}$  if $G$ contains clique of size $k$. Conversely, if the value of \eqref{eqn:opt} is at least $k^2$, then $G$ contains a clique of size at least  $k$. 
\end{lemma}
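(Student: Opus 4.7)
The proof separates into two directions: a direct rank-one construction for the forward direction, and a structural reduction to the Motzkin--Straus theorem for the reverse direction. (Despite the reference in the statement, the context makes clear that the intended program is MP~\eqref{eqn:opt-psd}, with the PSD and symmetric constraints; the reverse direction crucially uses the PSD constraint.)

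For the \emph{forward} direction, given a clique $K \subseteq V$ of size $k$, I would take $X = \mathbf{1}_K \mathbf{1}_K^\top$, where $\mathbf{1}_K$ is the $\{0,1\}$-indicator of $K$. This $X$ is rank-one symmetric PSD, has entries in $\{0,1\} \subseteq [0,1]$, and satisfies $\trace(X) = |K| = k$, so it is feasible. Because $K$ is a clique, every pair $(i,j)$ with $i,j \in K$ has either $i=j$ or $(i,j) \in E$, giving $C_{ij}=1$ on the entire support of $X$. Hence $\sum_{ij} C_{ij} X_{ij}^2 = |K|^2 = k^2$.

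For the \emph{reverse} direction, suppose $X$ is feasible with objective at least $k^2$. The plan is to concentrate all the objective's mass onto the diagonal of $X$ and then invoke Motzkin--Straus. Splitting the objective along the two parts of $\supp(C)$,
\begin{align*}
k^2 \leq \sum_i X_{ii}^2 + \sum_{(i,j):\{i,j\}\in E} X_{ij}^2.
\end{align*}
Since $X_{ii}\in[0,1]$ and $\trace(X)\leq k$, the first sum is at most $k$. The PSD property applied to every $2\times 2$ principal minor gives $X_{ij}^2 \leq X_{ii}X_{jj}$, so setting $z_i := X_{ii} \geq 0$ (with $\sum_i z_i \leq k$) reduces the inequality to
\begin{align*}
z^\top A_G\, z \geq k^2 - k = k(k-1),
\end{align*}
where $A_G$ is the $\{0,1\}$-adjacency matrix of $G$. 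The classical Motzkin--Straus theorem states $\max_{y\in\Delta^n} y^\top A_G y = 1 - 1/\omega(G)$ on the probability simplex $\Delta^n$, so rescaling $z = s y$ with $s := \sum_i z_i \leq k$ yields $z^\top A_G z \leq s^2(1 - 1/\omega(G)) \leq k^2(1 - 1/\omega(G))$. Combining with the lower bound and writing $k(k-1) = k^2(1 - 1/k)$ forces $\omega(G) \geq k$, so $G$ contains a clique of size at least $k$.

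I expect the main obstacle to be the reverse direction: the feasibility constraints are all ``soft'' (box, trace, PSD), yet one must extract a hard combinatorial witness. The crucial leverage is the PSD inequality $X_{ij}^2 \leq X_{ii}X_{jj}$, which collapses all off-diagonal energy of the objective into a quadratic form on the diagonal of $X$, matching the Motzkin--Straus setup exactly. Without the PSD constraint (as in MP~\eqref{eqn:opt}) this collapse breaks down, and a genuinely different argument (e.g.\ some form of rounding the bounded, bounded-trace-norm matrix $X$) would be required.
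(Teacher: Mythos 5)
Your proof is correct, and the forward direction (the rank-one indicator matrix $X=\mathbf{1}_K\mathbf{1}_K^\top$) is exactly the paper's construction; you are also right that the soundness direction is really about MP~\eqref{eqn:opt-psd}, which is what the paper proves. Where you diverge is in the second half of the soundness argument. Both you and the paper begin by using positive semidefiniteness via $X_{ij}^2\le X_{ii}X_{jj}$ to push all the objective mass onto the diagonal (the paper phrases this as a w.l.o.g.\ reduction to the rank-one matrix $uu^\top$ with $u_i=\sqrt{X_{ii}}$). The paper then characterizes the set $\{uu^\top : 0\le u_i\le 1,\ \|u\|_1=k\}$ via the convex hull of binary rank-one matrices (citing Warmuth--Kuzmin), uses convexity of the objective to move to a $\{0,1\}$ vertex, and finishes by counting edges of the induced subgraph; you instead rescale the diagonal vector $z=(X_{ii})_i$ onto the probability simplex and invoke the Motzkin--Straus theorem to conclude $\omega(G)\ge k$ directly from $z^\top A_G z\ge k(k-1)$. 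Your route trades the convex-hull/vertex-extremality machinery for a classical (but nontrivial) external theorem; it is arguably cleaner in that it treats the diagonal contribution explicitly and avoids the slightly delicate convex-hull claim and the paper's loose edge count $|E(S)|=k^2/2$ (which implicitly includes the diagonal terms), while the paper's argument is more self-contained modulo the cited hull characterization. One cosmetic point: handle the degenerate case $\trace(X)=0$ (or $k=1$) separately before normalizing $z$ to the simplex, though this is immediate since then the objective is too small to meet the threshold.
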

\begin{proof}

{\bf Completeness} 
If $G$ has a $k$ clique, then consider the following solution $X$. Let
$$ v_i = \mycases{1}{$i \in$ clique}{0}{o/w} $$
and let $X = vv^\top$. Then we have that $\trace{X}=k$, $X_{ij} \in \{0,1\}$. In addition, by definition of $X$, we have that 
$$ \sum_{ij} C_{ij} X_{ij}^2 = \sum_{ij} C_{ij} v_{i}^2 v_j^2 = k^2 .$$

{\bf Soundness} 
Suppose program \eqref{eqn:opt-psd} has a solution $X$ with value $k^2$ and trace exactly $k$. The trace equality is w.l.o.g since we can always increase the diagonal entries while preserving constraints and only increasing the objective.  Therefore, without loss of generality, we may assume that $\trace(X)=k$. 

Next, we claim that w.l.o.g. we have that $\rank(X) = 1$. To see this, notice that we can define a vector $u_i = \sqrt{X_{ii}}$, and $\tilde{X} = uu^\top$. Now, we have that $\trace(\tilde{X}) = k$, and it satisfies the bounded-ness constraints by definition. In addition, we have that the objective is only increased, since 
$$ \sum_{ij} C_{ij} (v_i v_j)^2 = \sum_{ij} C_{ij} {X_{ii}} {X_{jj}} \geq  \sum_{ij} C_{ij} X_{ij}^2 , $$
where the last inequality is by positive semi-definiteness. 

Thus, we can restrict our attention to the set of solutions given by  $\K_k = \{ uu^\top  \ , \ 0 \leq u_{i} \leq 1 \ ,  \ \|u\|_1  = k \}$. We claim that $\K_k$ can be alternatively characterized as the convex hull of all rank-one matrices of the following form:
$$ \K_k = \conv \left\{ vv^\top | v \in \{0,1\}^V \ , \ \|v\|_1 = k \right\} .$$
This fact is shown in page 279 of \cite{warmuth2010blessing}.

We can now continue with the soundness proof. Since the objective $\sum_{ij} C_{ij} X_{ij}^2 $ is a convex function, given a distribution over points in $\K_k$, the maximum is obtained in a vertex. Thus, there exists a binary vector $v$ such that its trace is $k$, and for which $ \sum_{ij} C_{ij} v_i v_j = k^2$. 

Define a subgraph according to $v$ in the natural way: $i \in S$ if and only if $v_i  = 1$. Notice that the subset of vertices $S$ is of size $k$ due to the trace. 

In terms of number of edges in this subgraph, notice that 
\begin{eqnarray*}
|E(S)| & = \frac{1}{2}  \sum_{ij \in S} \bone_{(i,j) \in E} =  \frac{1}{2}\sum_{ij} C_{ij}  v_i v_j =   \frac{k^2}{2} . 
\end{eqnarray*}

\ignore{ 
Moreover,
\begin{align*}
\E[|E(S)|^2]&=\frac{1}{4}\E\left[\sum_{ij\in S}\mathbf{1}_{(i,j)\in E}+\sum_{ij\neq kl, ij, kl\in E}\mathbf{1}_{(i,j)\in E}\mathbf{1}_{(k,l)\in E}\right]\\
&\le \frac{1}{4}\E\left[\sum_{ij}X_{ii}X_{jj}+\sum_{ij\neq kl}X_{ii}X_{jj}X_{kk}X_{ll}\right]\\
&\le \frac{k^2}{4}+\frac{k^4}{4}
\end{align*}
and thus $\mathrm{Var}(|E(S)|)\le \frac{k^2}{4}$. By Chebyshev's inequality, with probability at least $\frac{3}{4}$, $|E(S)|\ge \frac{k^2}{2}-k$. 

Thus, with probability at least $\frac{1}{2}$, both events hold. We have found a clique of size $4k$. 
}
Thus, we have found a clique of size $k$. 
\end{proof}

We note that, although we have shown NP-hardness for the trace-norm with symmetric PSD constraints, it is possible that the optimization problem is efficiently solvable for the max-norm or rank.

\ignore{
\subsection{Offline Inefficient Algorithm Specification}
\label{app:inefficient_alg}
The following algorithm is used in Theorem \ref{thm:main_inefficient}, to solve optimization problem \ref{opt:inefficient} with the appropriate parameters.
\begin{algorithm}
\caption{(Inefficient) Offline Algorithm for Partial Matrix Completion}
\label{alg:inefficient}
\begin{algorithmic}[1]
\STATE \textbf{Input:} $S = \{ (x_t, y_t) \}_{t=1}^N \sim \D^N$, $\FC$ (cf. Defn.~\ref{defn:fullcompalg}), $\eps$, $\delta$, $\mM$
\STATE Obtain $\hat{M} \in \mM$ using $\FC(S, \eps/4, \delta/3, \mM)$.
\STATE Obtain $C$ using MP~\ref{opt:inefficient} with $\gamma \defeq \eps/4$, $\beta \defeq \eps/8$, $S = (x_1, x_2, \ldots, x_N)$ from $S_{XY}$. 
\STATE \textbf{return} $(\hat{M}, C)$.
\end{algorithmic}
\end{algorithm}

}

\subsection{Proof of Theorem \ref{thm:main_inefficient}}
\label{app:main_inefficient_proof}
Recall our main theorem for the inefficient algorithm, which we prove in this appendix. 

\begin{reptheorem}{thm:main_inefficient}
	Let $\mM$ be either $\mMmax$ or $\mMtr$. Let $\D$ be distribution over $\X
	\times [-1, 1]$, $\mu$ the marginal of $\D$ over $\X$. Suppose that
	$M^\star$, defined as $M^\star_{ij} = \E_{D}[Y | X = (i, j)]$
	satisfies that $M^\star \in \mM$. Furthermore, suppose that $S
	\sim \D^{N}$ and that $\FC$ is a full completion algorithm as in Defn.~\ref{defn:fullcompalg}. Then, provided $N \ge \max\{\sampc_{\mathrm{FC}}(\eps/4, \delta/3, \mM),
	\sampc(\eps/8, \delta/3, \mM) \}$, for $(\hat{M}, C)$ output by
	Alg.~\ref{alg:inefficient}, with probability at least $1-\delta$, it holds that:
	\begin{enumerate}
		\item $\|C\|_1 \ge 1/\mu_{\max}$ and
		\item $\displaystyle\frac{1}{\|C\|_1}\sum_{x \in \X} C_x
        (\hat{M}_x - M^\star_x)^2 \le \eps.$
	\end{enumerate}
\end{reptheorem}

The proof relies on the following proposition.

\begin{proposition} \label{prop:inefficient}
	Suppose $S \sim \mu^N$ for some distribution $\mu$ over $\X$; let
	$\mu_{\max} \defeq \max_{x \in \X} \mu_x$. Suppose $\mM$ is one of
	$\mMmax$ or $\mMtr$, let $N \ge \sampc(\gamma/2, \delta, \mM)$ (cf.
	Defn.~\ref{defn:samplecomp}) and suppose $C$ is the maximizer of
	MP~\ref{opt:inefficient} with $\beta = \gamma/2$, then with probability at
	least $1 - \delta$, we have $\|C\|_1\ge 1/\mu_{\max}$.
\end{proposition}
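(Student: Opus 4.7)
\textbf{Proof Plan for Proposition~\ref{prop:inefficient}.}

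The approach is to exhibit an explicit feasible point for MP~\ref{opt:inefficient} whose objective value equals $1/\mu_{\max}$, so that the maximum is at least $1/\mu_{\max}$. The natural candidate is the confidence matrix
\[
  C_x \eqdef \frac{\mu_x}{\mu_{\max}}, \qquad x \in \X,
\]
which lies in $[0,1]^{\X}$ by the choice of $\mu_{\max}$, has $\|C\|_1 = \sum_x \mu_x/\mu_{\max} = 1/\mu_{\max}$, and crucially induces the probability distribution $\nu_C(x) = C_x/\|C\|_1 = \mu_x$, i.e.\ $\nu_C = \mu$. Thus the constraint of MP~\ref{opt:inefficient} for this candidate becomes: for every $M \in \V(\bzero_\inds, \beta, S; \mM)$,
\[
  \ell(\mu, M, \bzero_\inds) \eq \E_{x\sim\mu}[M(x)^2] \leq \gamma.
\]

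Next I would verify this constraint using uniform convergence. Since the zero matrix satisfies $\bzero_\inds \in \mM$ for both $\mM = \mMmax$ and $\mM = \mMtr$, I can instantiate Definition~\ref{defn:samplecomp} with $\hat{M} = \bzero_\inds$: because $N \ge \sampc(\gamma/2, \delta, \mM)$, with probability at least $1-\delta$ over $S \sim \mu^N$,
\[
  \sup_{M \in \mM} \bigl|\,\ell(S, M, \bzero_\inds) - \ell(\mu, M, \bzero_\inds)\,\bigr| \leq \gamma/2.
\]
Condition on this event. For any $M \in \V(\bzero_\inds, \beta, S; \mM)$ the definition \eqref{eqn:versionspacedef} gives $\ell(S, M, \bzero_\inds) \le \beta = \gamma/2$, hence
\[
  \ell(\mu, M, \bzero_\inds) \leq \ell(S, M, \bzero_\inds) + \gamma/2 \leq \gamma,
\]
exactly the constraint required. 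Therefore the candidate $C$ is feasible, its objective value is $1/\mu_{\max}$, and the optimum $\|C^\star\|_1$ of MP~\ref{opt:inefficient} satisfies $\|C^\star\|_1 \ge 1/\mu_{\max}$ on this $1-\delta$ probability event.

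The argument is essentially a one-line reduction to the generalization bound of Definition~\ref{defn:samplecomp}. The only point that requires care is that the uniform convergence statement is invoked with the specific ``anchor'' matrix $\bzero_\inds$, which is legitimate because $\bzero_\inds$ lies in both $\mMmax$ and $\mMtr$ (cf.\ Lemma~\ref{lem:convexclosed}, noting that both norms of the zero matrix vanish). The only mildly subtle choice is the identification of the ``right'' feasible $C$; once one notices that $C_x \propto \mu_x$ rescales $\nu_C$ to exactly $\mu$, the empirical-to-population transfer on the version-space constraint is immediate and gives the claimed $\|C\|_1 \ge 1/\mu_{\max}$ bound. No further technical obstacle is anticipated.
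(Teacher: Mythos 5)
Your proposal is correct and follows essentially the same route as the paper's own proof: both exhibit the feasible point $C^\mu_x = \mu_x/\mu_{\max}$ (so that $\nu_{C^\mu} = \mu$ and $\|C^\mu\|_1 = 1/\mu_{\max}$) and verify feasibility by applying Definition~\ref{defn:samplecomp} with anchor $\hat{M} = \bzero_\inds$, transferring the empirical version-space bound $\beta = \gamma/2$ to the population bound $\gamma$. No gaps.
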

\begin{proof}
	Based on the condition on $N$ and using $\hat{M} = \bzero_\inds$ in
	Defn.~\ref{defn:samplecomp}, with probability at least $1 - \delta$, the
	following holds for all $M \in \mM$:
	\begin{align}
		\left| \norm{M}_\mu^2 - \norm{M}_S^2 \right| &\le \gamma/2 \label{eqn:apply-zero} 
	\end{align}

	We need to show that some $C^\mu$, with $\|C^\mu\|_1 \ge 1/\mu_{\max}$ is
	feasible. Let $C^\mu_x = \mu(x)/\mu_{\max}$. Then, clearly $C_x \in [0, 1]$,
	$\|C^\mu\|_1 = 1/\mu_{\max}$ and $\nu_{C^\mu} = \mu$. Using
	Eq.~\eqref{eqn:apply-zero}, we have that, $\norm{M}_\mu^2 \le \norm{M}_S^2
	+ \gamma/2$, and as a result for every $M \in \V(\bzero_\inds, \beta, S;
	\mM)$, it holds that $\norm{M}_\mu^2 \le \beta + \gamma/2 = \gamma$. Thus, $C_\mu$ is a
	feasible solution to the optimization problem with objective value at least
	$1/\mu_{\max}$. 
\end{proof}

\begin{proof}[Proof of Theorem~\ref{thm:main_inefficient}]
	There are three bad events, for each of which we bound their probability by at
	most $\delta/3$. First, $\FC$ succeeds with probability at least $1 - \delta/3$. 

	Second, provided $N \ge \sampc(\eps/8, \delta/3, \mM)$, for $\hat{M}$ output
	by $\FC$, using Defn.~\ref{defn:samplecomp} it must hold
	for each $M \in \mM$ that with probability at least $1 - \delta/3$,
	\begin{align}
		\left| \norm{M - \hat{M}}_\mu^2 - \norm{M - \hat{M}}_S^2 \right| &\le \eps/8 \label{eqn:apply-mhat} 
	\end{align}
	In particular, when $\beta = \eps/8$, this means that (assuming $\FC$ has not failed),
	\begin{align*}
		\norm{(M^\star - \hat{M})/{2}}^2_S &\le \frac{1}{4} \norm{M^\star
		- M}_\mu^2 + \frac{\epsilon}{32} \le \frac{\eps}{16} + \frac{\eps}{32} \le \beta.
	\end{align*}
	As a result, and also using the convexity of $\mM$ (cf.
	Lemma~\ref{lem:convexclosed}), $(M^\star - \hat{M})/2 \in \V(\bzero_\inds,
	\beta, S; \mM)$.

	Finally, for $N \ge \sampc(\eps/8 = \gamma/2, \delta/3, \mM)$, 
	Proposition~\ref{prop:inefficient} guarantees with probability at least $1
	- \delta/3$, that the output of MP~\ref{opt:inefficient} satisfies
	$\|C\|_1 \ge 1/\mu_{\max}$. 

	The constraint of MP~\ref{opt:inefficient} together with the fact that $(M^\star - \hat{M})/2 \in \V(\bzero_\inds, \beta, S; \mM)$ guarantees that, $\norm{M^\star - \hat{M}}_{\nu_C}^2 \le 4 \gamma$ which completes the required proof.
\end{proof}

\subsection{Proof of Theorem \ref{thm:main_efficient}}

\begin{proposition} \label{prop:efficient}
	Suppose $S \sim \mu^N$ for some distribution $\mu$ over $\inds$; let
	$\mu_{\max} \defeq \max_{x \in \inds} \mu_x$. Let $N \ge \sampc(\gamma^2/2,
	\delta, \mMmax)$ (cf. Defn.~\ref{defn:samplecomp}) and suppose $C$ is the
	maximizer of MP~\ref{opt:efficient} with $\beta = \gamma^2/2$, then with
	probability at least $1 - \delta$, we have $\|C\|_1 \ge 1/\mu_{\max}$.
\end{proposition}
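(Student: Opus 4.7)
The plan closely mirrors the proof of Proposition~\ref{prop:inefficient}, the only genuinely new ingredient being a Cauchy--Schwarz step that relates the \emph{linear} constraint $\E_{x\sim \nu_C}[M_x]\le \gamma$ appearing in MP~\ref{opt:efficient} to the \emph{quadratic} $\norm{M}_S^2$ that controls membership in the version space $\V(\bzero_\inds,\beta,S;\mMmax)$.

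First, I invoke Definition~\ref{defn:samplecomp} with $\hat{M}=\bzero_\inds$: since $N\ge \sampc(\gamma^2/2,\delta,\mMmax)$, with probability at least $1-\delta$ we have the uniform-convergence statement
\begin{equation}
\bigl|\norm{M}_\mu^2-\norm{M}_S^2\bigr|\le \gamma^2/2 \qquad \text{for every } M\in\mMmax. \label{eqn:efficient-uc}
\end{equation}
Condition on this event for the remainder.

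Next, I exhibit a feasible solution of MP~\ref{opt:efficient} with objective value exactly $1/\mu_{\max}$. Following Proposition~\ref{prop:inefficient}, take $C^\mu_x \defeq \mu(x)/\mu_{\max}$, so that $C^\mu\in[0,1]^{\inds}$, $\|C^\mu\|_1=1/\mu_{\max}$, and $\nu_{C^\mu}=\mu$. For any $M\in\V(\bzero_\inds,\beta,S;\mMmax)$ we have $\norm{M}_S^2\le \beta=\gamma^2/2$; combined with \eqref{eqn:efficient-uc} this gives $\norm{M}_\mu^2 \le \gamma^2$. Now the key new step: by Cauchy--Schwarz (equivalently Jensen's inequality applied to $t\mapsto t^2$),
\begin{equation*}
\E_{x\sim\nu_{C^\mu}}[M_x] = \E_{x\sim\mu}[M_x] \le \sqrt{\E_{x\sim\mu}[M_x^2]} = \norm{M}_\mu \le \gamma,
\end{equation*}
so $C^\mu$ satisfies every constraint of MP~\ref{opt:efficient}. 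Since $C$ is the maximizer, $\|C\|_1\ge \|C^\mu\|_1 = 1/\mu_{\max}$, which is the claim.

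I do not anticipate any real obstacle: the only substantive move is the Cauchy--Schwarz bound $\E_\mu[M_x]\le \norm{M}_\mu$, which is exactly what forces the calibration $\beta = \gamma^2/2$ (as opposed to $\beta=\gamma/2$ in the inefficient algorithm) and ultimately explains the $\eps^2$ dependence in Theorem~\ref{thm:main_efficient}. Note that I never need the class $\mMmax$ to be closed under negation here, because I am bounding a single-sided expectation rather than an absolute value; however, closure under negation (Lemma~\ref{lem:convexclosed}) ensures the constraint set of MP~\ref{opt:efficient} is also symmetric, which is reassuring but not used in this feasibility argument.
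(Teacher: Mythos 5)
Your proposal is correct and follows essentially the same route as the paper's proof: exhibit $C^\mu_x = \mu_x/\mu_{\max}$ as a feasible point, use the uniform-convergence event with $\hat{M}=\bzero_\inds$ to get $\norm{M}_\mu^2 \le \beta + \gamma^2/2 = \gamma^2$ on the version space, and conclude feasibility via $\E_{x\sim\mu}[M_x] \le \sqrt{\norm{M}_\mu^2} \le \gamma$. The Cauchy--Schwarz/Jensen step you highlight is exactly the inequality the paper invokes, so there is no substantive difference.
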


\begin{proof} [Proof of Proposition~\ref{prop:efficient}]
	Based on the condition on $N$ and using $\hat{M} = \bzero_\inds$ in Defn.~\ref{defn:samplecomp} with probability at least $1 - \delta$ it holds for every $M \in \mMmax$:
	\begin{align}
		\left| \norm{M}_\mu^2 - \norm{M}_S^2 \right| \le \gamma^2/2. \label{eqn:applyzeroefficient}
	\end{align}
	As in the proof of Proposition~\ref{prop:inefficient}, let $C^\mu_x =
	\mu_x/\mu_{\max}$ and notice that $C^\mu_x \in [0, 1]$, $\|C^{\mu}\|_1 = 1/\mu_{\max}$ and  $\nu_{C^\mu} = \mu$. Using
	Eq.~\eqref{eqn:applyzeroefficient}, for any $M \in \V(\bzero_\inds, \beta,
	S; \mMmax)$, we have that $\norm{M}_\mu^2 \le \beta + \gamma^2/2 \le
	\gamma^2$. Finally, using the fact that $\E_{x \sim \mu}[M_x] \le
	\sqrt{\norm{M}_\mu^2}$, we get that $C^\mu$ is feasible and the result
	follows.
\end{proof}

The key to proving Theorem~\ref{thm:main_efficient} is the following proposition, whose proof contains the main technical innovation. The derivation of the main theorem given the proposition is in the Appendix.

\begin{proposition} \label{prop:key-GW}
	For the class $\mMmax$, for any distribution  $\nu$ over $\inds$, $\beta > 0$ and $S \in \inds^N$, the following holds: for $\V=\V(\bzero_\inds, \beta, S; \mMmax)$,	%
	\[ \sup_{M \in \V} \ell(\nu, M, \bzero_\inds) \le  \frac{\pi K}{2} \sup_{M \in \V} \E_{x \sim \nu}[M_x]. \]
\end{proposition}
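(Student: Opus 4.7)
The plan is to apply a Goemans--Williamson style random hyperplane rounding, but applied as a sign-flipping \emph{multiplication} on the entries of $M$ rather than as a replacement of $M$ by a rank-one sign matrix. Concretely, given any $M \in \V$ I will exhibit a (randomly constructed) $M' \in \V$ with $\E_g[M'_{ij}] \ge \tfrac{2}{\pi K}\, M_{ij}^2$ pointwise; integrating over $x\sim\nu$ and picking a realization of $g$ that beats the average will yield one $M' \in \V$ with $\E_\nu[M'_x]\ge \tfrac{2}{\pi K}\,\E_\nu[M_x^2]$, which taken over $M$ gives the proposition after rearranging.

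For the construction, use the definition of max-norm (Eq.~\ref{eqn:max-norm-def}) to write $M = UV^\top$ with row-norms $\|u_i\|,\|v_j\| \le \sqrt{K}$ (which is WLOG after a symmetric rescaling of the two factors). Draw $g\sim\mathcal{N}(0,I)$ in the common dimension of the factors and set $s_i \defeq \mathrm{sign}(\langle g, u_i\rangle)$, $t_j \defeq \mathrm{sign}(\langle g, v_j\rangle)$ (with $s_i$ defined arbitrarily when $u_i=0$, and analogously for $t_j$; these cases contribute zero entries). Define the random matrix $M'_{ij} \defeq s_i t_j\, M_{ij}$. Membership in $\V$ is then automatic: $M'$ factors as $(s_i u_i)(t_j v_j)^\top$, so $\mnorm{M'} \le K$; entrywise $|M'_{ij}| = |M_{ij}| \le 1$; and because the training squared loss depends only on the squared entries, $\|M'\|_S^2 = \|M\|_S^2 \le \beta$. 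Thus $M' \in \V(\bzero_\inds,\beta,S;\mMmax)$ for every realization of $g$.

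For the key expectation computation, the Goemans--Williamson identity gives
\[
\E_g[s_i t_j] \eq \frac{2}{\pi}\arcsin\!\left(\frac{\langle u_i, v_j\rangle}{\|u_i\|\,\|v_j\|}\right) \eq \frac{2}{\pi}\arcsin\!\left(\frac{M_{ij}}{\|u_i\|\,\|v_j\|}\right).
\]
Since $\arcsin$ is odd, $M_{ij}\cdot\arcsin(M_{ij}/(\|u_i\|\|v_j\|))\ge 0$; using the elementary bound $|\arcsin(y)|\ge |y|$ for $y\in[-1,1]$ and $\|u_i\|\|v_j\|\le K$ yields
\[
\E_g[M'_{ij}] \eq M_{ij}\cdot \E_g[s_i t_j] \geq \frac{2\,M_{ij}^2}{\pi\,\|u_i\|\,\|v_j\|} \geq \frac{2}{\pi K}\,M_{ij}^2.
\]
Integrating over $x\sim\nu$ and swapping expectations (Fubini), the average over $g$ of $\E_\nu[M'_x]$ is at least $\tfrac{2}{\pi K}\E_\nu[M_x^2]$, so some realization of $g$ produces a deterministic $M' \in \V$ with $\E_\nu[M'_x] \ge \tfrac{2}{\pi K}\E_\nu[M_x^2]$, and taking the supremum over $M \in \V$ finishes the proof.

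The step I expect to require the most care is justifying that the randomized construction remains inside $\V$: a naive GW rounding would replace $M$ by the rank-one sign matrix $st^\top$, which obeys the max-norm constraint but not the constraint $\|\cdot\|_S^2 \le \beta$ defining the version space. The crucial trick is therefore to use the random signs only \emph{multiplicatively} on $M$, since the diagonal sign reweighting $U\mapsto \mathrm{diag}(s)U$, $V\mapsto \mathrm{diag}(t)V$ preserves both the max-norm and the empirical squared loss exactly while still exposing the $\arcsin$-linearization that turns $\E_\nu[M_x^2]$ into a linear-in-$M'$ quantity.
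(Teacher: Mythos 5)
Your proof is correct and follows essentially the same route as the paper's: a Goemans--Williamson random-direction sign flip applied multiplicatively to the rows of the factors $U,V$ (so that max-norm, entry bounds, and the empirical loss on $S$ are preserved and the rounded matrix stays in the version space), combined with the probabilistic method. Your $\arcsin$ formulation with $|\arcsin(y)|\ge|y|$ is just a rephrasing of the paper's bound $1-2\theta/\pi\ge 2\cos(\theta)/\pi$, so the two arguments coincide.
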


\begin{proof} [Proof of Proposition~\ref{prop:key-GW}]

For succinctness let $\V_0 = \V(\bzero_\inds, \beta; S, \mMmax)$. Let $\nu$ be a an arbitrary distribution over $\inds$ and since we know that $\V_0$ is closed and bounded, we know that a matrix $M \in \V_0$ exists which achieves the value of $\sup_{M \in \V_0} \ell(\nu, M, \bzero_\inds)$. We will use the probabilistic method to show that there exists $\tilde{M} \in \V_0$ such that $\E_{x \sim \nu}[\tilde{M}_x] \ge \frac{2}{\pi K} \ell(\nu, M, \bzero_\inds)$. 

Since $M \in \mMmax$, there exist $U, V$ such that $M = UV^\top$ and that $\norm{U}_{2, \infty} \cdot \norm{V}_{2, \infty} \le K$.
Denote by $u_i, v_j$ respectively the $i$-th and $j$-th rows of $U$ and $V$. Suppose $u_i, v_j \in \reals^D$ for some finite $D$, we know that we can always choose $D \le \min\{m, n\}$. Let $w$ be a random vector drawn uniformly from the unit sphere in $\reals^D$. For any vector $v \in \reals^D$, let $\tilde{v} \defeq \sign(v^\top w) v$ and obtain the matrices $\tilde{U}, \tilde{V}$ from $U$ and $V$ by applying this transformation to all the rows of $U$ and $V$.

We define $\tilde{M} = \tilde{U} \tilde{V}^\top$. Note that as the only difference between $U$ and $\tilde{U}$ (resp. $V$ and $\tilde{V}$) is sign changes for some subset of the rows, we have $\norm{\tilde{U}}_{2, \infty} = \norm{U}_{2, \infty}$ (resp. $\norm{\tilde{V}}_{2, \infty} = \norm{V}_{2, \infty}$). Also for any $i, j$, $|u_i^\top v_j| = |\tilde{u}_i^\top \tilde{v}_j|$, thus the entries of $M$ and $\tilde{M}$ may only differ in sign. Thus, $\tilde{M} \in \mMmax$ irrespective of the random choice of the vector $w$. In order to complete the proof by the probabilistic method, it suffices to show that, 
\begin{align}
\E_{w} \left[ \E_{x \sim \nu} [\tilde{M_x}] \right] &\ge \frac{2}{\pi K} \ell(\nu, M, \bzero_\inds). \label{eqn:probmethod}
\end{align}

Let $\theta(u, v)$ denote the angle between any two vectors $u, v \in \reals^D$.
For a vector $w$ drawn at random from the unit sphere it can be verified that $ \Pr[\sign(\tilde{u} ^\top \tilde{v}) \neq \sign(u^\top v)] = \frac{\theta(u, v)}{\pi} $. We will show: 
\begin{align}
	\E_w [\tilde{u}^\top \tilde{v}] &\ge \frac{2(u^\top v)^2}{\pi \norm{u}\norm{v}}. \label{eqn:probmethod2}
\end{align}

It suffices to show the above for the case when $u^\top v \ge 0$ as flipping the sign of $v$ affects neither $\tilde{v}$ nor $(u^\top v)^2$. Note that in this case $\theta(u, v) \in [-\pi/2, \pi/2]$. To prove Eq.~\eqref{eqn:probmethod2}, we use:

\begin{lemma} \label{lem:coslemma} For any $\theta \in [-\pi/2, \pi/2]$, we have that, $1 - 2 \theta/\pi \ge 2 \cos(\theta)/\pi$. 
\end{lemma}
\begin{proof} [Proof of Lemma~\ref{lem:coslemma}]
Let $f : [-\pi/2, \pi/2] \rightarrow \reals$, where $f(\theta) = 1 - 2\theta/\pi - 2 \cos\theta/\pi$. Note that since $|\sin \theta| \le 1$ for all $\theta$, $f^\prime(\theta) \le 0$, so $f$ is decreasing and $f(\pi/2) = 0$.  
\end{proof}

Now, using the fact that $\cos(\theta(u, v)) = u^\top v/(\norm{u}\norm{v})$, we get, 
\begin{align*}
\E_w [\tilde{u}^\top \tilde{v}] &= u^\top v \cdot \left( 1 - 2 \Pr[\sign(\tilde{u}^\top \tilde{v}) \neq \sign(u^\top v)] \right) \\
&= u^\top v \cdot \left(1 - \frac{2\theta(u, v)}{\pi}\right) \ge u^\top v \cdot \frac{2 \cos(\theta(u, v))}{\pi} = \frac{2 (u^\top v)^2}{\pi \norm{u} \norm{v}}. 
\end{align*}

In the last step, we used Lemma~\ref{lem:coslemma}.  For any fixed $x = (i, j) \in \inds$, we have that,
\[ \E_{w} [\tilde{M}_x] = \E_w[\tilde{u_i}^\top \tilde{v_i}] \ge \frac{2 (u_i^\top v)^2}{\pi \norm{u_i} \norm{v_j}} \ge \frac{2 M_x^2}{\pi K}. \]
The last inequality follows from the fact that $\mnorm{M} = \norm{U}_{2, \infty} \cdot \norm{V}_{2, \infty} \le K$ and that $M_x = u_i^\top v_j$. Taking expectation with respect to $\nu$ and applying Fubini's theorem establishes~\eqref{eqn:probmethod}.
\end{proof}

\begin{proof} [Proof of Theorem~\ref{thm:main_efficient}]
	Apart from the use of Proposition~\ref{prop:key-GW}, the proof is
	essentially the same as that of Theorem~\ref{thm:main_inefficient}.
	We bound the probability of the three undesirable events by $\delta/3$ each.
	First, $\FC$ succeeds with probability at least $1 - \delta/3$. 

	Second, provided $N \ge \sampc(\eps^2/(8 \pi^2 K^2), \delta/3, \mMmax)$, for $\hat{M}$ output
	by $\FC$, using Defn.~\ref{defn:samplecomp} it must hold
	for each $M \in \mMmax$ that with probability at least $1 - \delta/3$,
	\begin{align}
		\left| \ell(\mu, M, \hat{M}) - \ell(S, M, \hat{M}) \right| &\le \frac{\eps^2}{8 \pi^2 K^2} \label{eqn:apply-mhat-efficient} 
	\end{align}
	In particular, when $\beta = \eps^2/(8 \pi^2 K^2)$, this means that (assuming $\FC$ has not failed),
	\begin{align*}
		\norm{(M^\star - \hat{M})/{2}}^2_S \le \frac{1}{4} \norm{M^\star
		- M}_\mu^2 + \frac{\epsilon^2}{32 \pi^2 K^2} \le \frac{\eps^2}{16 \pi^2 K^2} + \frac{\eps^2}{32 \pi^2 K^2} \le \beta.
	\end{align*}
	As a result, and using the convexity of $\mMmax$ (cf. Lemma~\ref{lem:convexclosed}), $(M^\star - \hat{M})/2
	\in \V(\bzero_\inds, \beta, S; \mMmax)$.

	For $N \ge \sampc(\eps^2/(8 \pi^2 K^2) = \gamma^2/2, \delta/3, \mMmax)$,
	Proposition~\ref{prop:efficient} guarantees with probability at least $1
	- \delta/3$, that the output of MP~\ref{opt:efficient} satisfies
	$\|C\|_1 \ge 1/\mu_{\max}$. 

	Finally, since $(M^\star - \hat{M})/2 \in \V(\bzero_\inds, \beta, S; \mMmax)$, we have:
	\begin{align*}
    \ell(\nu_C, M^\star, \hat{M}) &\le \sup_{M \in \V(\bzero_\inds, \beta, S; \mMmax)}\!\!\!\!\!\!\!\!\!\!\!\!\!\!\!\! \ell(\nu_C, M, \bzero_\inds) 
    \le \frac{\pi K}{2} \cdot \!\!\!\!\! \sup_{M \in \V(\bzero_\inds, \beta, S; \mMmax)} \!\!\!\!\!\!\!\!\!\!\!\!\!\!\!\!\E_{x \sim \nu_C} [M_x] 
 	\le \frac{\pi K}{2} \cdot \gamma.
 	\end{align*}
	In the penultimate step we used Proposition~\ref{prop:key-GW} and in the last step we used the fact that $C$ is feasible for MP~\ref{opt:efficient}. Substituting the value of $\gamma$ completes the proof. 
\end{proof}

\ignore{
\subsection{Proof of Theorem \ref{thm:main_efficient}}
\label{app:main_proof}
Recall our main theorem for the efficient algorithm, which we prove in this appendix.

\begin{reptheorem}{thm:main_efficient}
	Let $\D$ be a distribution over $\X
	\times [-1, 1]$, $\mu$ the marginal of $\D$ over $\X$. Suppose that
	$M^\star$, defined as $M^\star_{ij} = \E_{\D}[Y | X = (i, j)]$
	satisfies that $M^\star \in \mMmax$. Furthermore, suppose that $S
	\sim \D^{N}$ and that $\FC$ is a full completion algorithm as in Defn.~\ref{defn:fullcompalg}.
	
    Then, provided $N \ge \max\{\sampc_{\mathrm{FC}}(\eps^2/(4 \pi^2 K^2), \delta/3, \mMmax),\\
	\sampc(\eps^2/(8 \pi^2 K^2), \delta/3, \mMmax) \}$, for $(\hat{M}, C)$ output by
	Alg.~\ref{alg:efficient} (App.~\ref{app:alg}), it holds that:
	\begin{enumerate}
		\item $H(C) \ge H(\mu)$
		\item $\displaystyle \sum_{x \in \X} C_x (\hat{M}_x - M^\star_x)^2 \le \eps.$
	\end{enumerate}
\end{reptheorem}

The following two propositions are used in the proof of Theorem \ref{thm:main_efficient}.

\begin{proposition} \label{prop:efficient}
	Suppose $S \sim \mu^N$ for some distribution $\mu$ over $\X$; let
	$\mu_{\max} \defeq \max_{x \in \X} \mu_x$. Let $N \ge \sampc(\gamma^2/2,
	\delta, \mMmax)$ (cf. Defn.~\ref{defn:samplecomp}) and suppose $C$ is the
	maximizer of MP~\ref{opt:efficient} with $\beta = \gamma^2/2$. With
	probability at least $1 - \delta$,  $H(C) \ge H(\mu)$.
\end{proposition}
\begin{proof}
	Based on the condition on $N$ and using $\hat{M} = \bzero_\X$ in Defn.~\ref{defn:samplecomp} with probability at least $1 - \delta$ it holds for every $M \in \mMmax$:
	\begin{align}
		\left| \ell(\mu, M) - \ell(S, M) \right| \le \gamma^2/2. \label{eqn:applyzeroefficient}
	\end{align}
	Using
	Eq.~\eqref{eqn:applyzeroefficient}, for any $M \in\V_{\mMmax}(\bzero_\X, S, \beta)$, we have that $\ell(\mu, M) \le \beta + \gamma^2/2 \le
	\gamma^2$. Finally, using the fact that $\E_{x \sim \mu}[M_x] \le
	\sqrt{\ell(\mu, M)}$, we get that $\mu$ is feasible and the result
	follows.
\end{proof}

\begin{proposition} \label{prop:key-GW}
	For the class $\mMmax$, for any distribution  $\nu$ over $\X$, $\beta > 0$ and $S \in \X^N$, the following holds:	%
	\[ \sup_{M \in \V_{\mMmax}(\bzero_\X, S, \beta)} \ell(\nu, M) \le  \frac{\pi K}{2} \sup_{M \in \V_{\mMmax}(\bzero_\X, S, \beta)} \underset{x \sim \nu}{\E}[M_x] \]
\end{proposition}
\begin{proof}
For succinctness let $\V_0 \defeq \V_{\mMmax}(\bzero_\X, S, \beta)$. Let $\nu$ be a an arbitrary distribution over $\X$ and since we know that $\V_0$ is closed and bounded, we know that a matrix $M \in \V_0$ exists which achieves the value of $\sup_{M \in \V_0} \norm{M}_\nu^2$. We will use the probabilistic method to show that there exists $\tilde{M} \in \V_0$ such that,
\[ \E_{x \sim \nu}[\tilde{M}_x] \ge \frac{2}{\pi K} \ell(\nu, M) \]

Since $M \in \mMmax$, there exist $U, V$ such that $M = UV^\top$ and that $\norm{U}_{2, \infty} \cdot \norm{V}_{2, \infty} \le K$.

Denote by $u_i, v_j$ respectively the $i$-th and $j$-th rows of $U$ and $V$. Suppose $u_i, v_j \in \reals^D$ for some finite $D$, we know that we can always choose $D \le \min\{m, n\}$. Let $w$ be a random vector drawn uniformly from the unit sphere in $\reals^D$. For any vector $v \in \reals^D$, let $\tilde{v} \defeq \sign(v^\top w) v$ and obtain the matrices $\tilde{U}, \tilde{V}$ from $U$ and $V$ by applying this transformation to all the rows of $U$ and $V$.

We define $\tilde{M} = \tilde{U} \tilde{V}^\top$. Note that as the only difference between $U$ and $\tilde{U}$ (resp. $V$ and $\tilde{V}$) is sign changes for some subset of the rows, we have $\norm{\tilde{U}}_{2, \infty} = \norm{U}_{2, \infty}$ (resp. $\norm{\tilde{V}}_{2, \infty} = \norm{V}_{2, \infty}$). Also for any $i, j$, $|u_i^\top v_j| = |\tilde{u}_i^\top \tilde{v}_j|$, thus the entries of $M$ and $\tilde{M}$ may only differ in sign. Thus, $\tilde{M} \in \mMmax$ irrespective of the random choice of the vector $w$. 

In order to complete the proof by the probabilistic method, it suffices to show that, 
\begin{align}
\E_{w} \left[ \E_{x \sim \nu} [\tilde{M_x}] \right] &\ge \frac{2}{\pi K} \ell(\nu, M). \label{eqn:probmethod}
\end{align}

For any two vectors $u, v \in \reals^D$, let $\theta(u, v)$ denote the angle between them. For a vector $w$ drawn at random from the unit sphere it can be verified that $ \Pr[\sign(\tilde{u} ^\top \tilde{v}) \neq \sign(u^\top v)] = \frac{\theta(u, v)}{\pi} $.

For any vectors $u, v \in \reals^D$, we will show that,
\begin{align}
	\E_w [\tilde{u}^\top \tilde{v}] &\ge \frac{(u^\top v)^2}{\pi \norm{u}\norm{v}}. \label{eqn:probmethod2}
\end{align}

It suffices to show the above for the case when $u^\top v \ge 0$ as flipping the sign of $v$ affects neither $\tilde{v}$ nor $(u^\top v)^2$. Note that in this case $\theta(u, v) \in [-\pi/2, \pi/2]$. To prove Eq.~\eqref{eqn:probmethod2}, we use:

\begin{lemma} \label{lem:coslemma} For any $\theta \in [-\pi/2, \pi/2]$, we have that, $1 - 2 \theta/\pi \ge 2 \cos(\theta)/\pi$. 
\end{lemma}
\begin{proof} Let $f : [-\pi/2, \pi/2] \rightarrow \reals$, where $f(\theta) = 1 - 2\theta/\pi - 2 \cos\theta/\pi$. Note that since $|\sin \theta| \le 1$ for all $\theta$, $f^\prime(\theta) \le 0$, so $f$ is decreasing and $f(\pi/2) = 0$.
\end{proof}
Now, using the fact that $\cos(\theta(u, v)) = u^\top v/(\norm{u}\norm{v})$, we get,
\begin{align*}
\E_w [\tilde{u}^\top \tilde{v}] &= u^\top v \cdot \left( 1 - 2 \Pr[\sign(\tilde{u}^\top \tilde{v}) \neq \sign(u^\top v)] \right) \\
&= u^\top v \cdot \left(1 - \frac{2\theta(u, v)}{\pi}\right) \\
&\ge u^\top v \cdot \frac{2 \cos(\theta(u, v))}{\pi} = \frac{2 (u^\top v)^2}{\pi \norm{u} \norm{v}}, 
\end{align*}
where the last equality uses Lemma~\ref{lem:coslemma}.

For any fixed $x = (i, j) \in \X$, we have that,
\[ \E_{w} [\tilde{M}_x] = \E_w[\tilde{u_i}^\top \tilde{v_i}] \ge \frac{2 (u_i^\top v)^2}{\pi \norm{u_i} \norm{v_j}} \ge \frac{2 M_x^2}{\pi K}. \]
The last inequality follows from the fact that $\mnorm{M} = \norm{U}_{2, \infty} \cdot \norm{V}_{2, \infty} \le K$ and that $M_x = u_i^\top v_j$. Taking expectation with respect to $\nu$ and applying Fubini's theorem establishes~\eqref{eqn:probmethod}.
\end{proof}
We are now ready to prove Theorem \ref{thm:main_efficient}. 
\begin{proof} 
	Apart from the use of Proposition~\ref{prop:key-GW}, the proof is
	essentially the same as that of Theorem~\ref{thm:main_inefficient}.
	We bound the probability of the three undesirable events by $\delta/3$ each.
	First, $\FC$ succeeds with probability at least $1 - \delta/3$. 

	Second, provided $N \ge \sampc(\eps^2/(8 \pi^2 K^2), \delta/3, \mMmax)$, for $\hat{M}$ output
	by $\FC$, using Defn.~\ref{defn:samplecomp} it must hold
	for each $M \in \mMmax$ that with probability at least $1 - \delta/3$,
	\begin{align}
		\left| \ell(\mu, M ,\hat{M}) - \ell(S, M, \hat{M}) \right| &\le \frac{\eps^2}{8 \pi^2 K^2} \label{eqn:apply-mhat-efficient} 
	\end{align}
	In particular, when $\beta = \eps^2/(8 \pi^2 K^2)$, this means that (assuming $\FC$ has not failed),
	\begin{align*}
		\ell(S, \frac{M^\star}{2} ,\frac{\hat{M}}{2}) &\le \frac{1}{4} \ell(\mu, M^\star, M) + \frac{\epsilon^2}{32 \pi^2 K^2} \\
        & \le \frac{\eps^2}{16 \pi^2 K^2} + \frac{\eps^2}{32 \pi^2 K^2} \le \beta.
	\end{align*}
	As a result, and using the convexity of $\mMmax$ (cf. Lemma~\ref{lem:convexclosed}), $(M^\star - \hat{M})/2
	\in \V(\bzero_\X, \beta, S; \mMmax)$.

	For $N \ge \sampc(\eps^2/(8 \pi^2 K^2) = \gamma^2/2, \delta/3, \mMmax)$,
	Proposition~\ref{prop:efficient} guarantees with probability at least $1
	- \delta/3$, that the output of MP~\ref{opt:efficient} satisfies
	$H(C) \ge H(\mu)$. 
	Finally, we have:
	\begin{align*}
        \ell(C, M^\star / 2,  \hat{M} / 2) 
        & \le \sup_{M \in {\mathcal B}}  \ell(C, M)\\
 	& \le \frac{\pi K}{2} \, \sup_{M \in {\mathcal B}}  \E_{x \sim C} [M_x] 
        \le \frac{\pi K}{2} \gamma,
   \end{align*}
where ${\mathcal B} = \V_{\mMmax}(\bzero_\X, S, \beta)$
the first inequality follows from $(M^\star - \hat{M})/2 \in \V_{\mMmax}(\bzero_\X, S, \beta)$, the second inequality uses Proposition~\ref{prop:key-GW}, and the third inequality follows since $C$ is feasible for MP~\ref{opt:efficient}.
	Substituting the value of $\gamma$ completes the proof. 
\end{proof}

}
\newcommand{\chow}{\mathrm{Chow}}
\def\A{{\mathcal A}}
\def\F{{\mathcal F}}
\def\H{{\mathcal H}}
\def\M{{\mathcal M}}
\def\R{{\mathcal R}}
\def\U{{\mathcal U}}
\def\V{{\mathcal V}}
\def\W{{\mathcal W}}
\def\X{{\mathcal X}}
\def\Y{{\mathcal Y}}
\def\Z{{\mathcal Z}}
\def\tr{{\text{Tr}}}
\setlength{\parindent}{0pt}

\section{Online Partial Matrix Completion}
\label{sec:online}

This section uses tools from online convex optimization, in particular its application to games, to design an efficient gradient-based online algorithm with provable regret guarantee to find a near optimal confidence matrix to the partial matrix completion problem. 

\paragraph{Organization.} The organization of this section is as follows: we begin with a short background description on online convex optimization, its application to matrix completion problems and games in Section~\ref{sec:online-prelim}, explain the main motivations and intuitions of our online algorithm in Section~\ref{sec:online-motivation}, introduce the general protocol of online partial matrix completion in Section~\ref{sec:online-protocol} and the design of objective functions in Section~\ref{sec:online-objective}. We formally give the algorithm specification and the regret guarantee in Section~\ref{sec:online-alg}. Finally, we show the implication on statistical guarantee in Section~\ref{sec:online-to-offline}.

\subsection{Preliminaries}
\label{sec:online-prelim}

\paragraph{Online convex optimization.}

In online convex optimization, a player iteratively chooses a point $x_t \in \K \subseteq \reals^d$ at time step $t$,  and receives a convex loss function $h_t$, to which the player incurs loss $h_t(x_t)$. The performance of the player is measured by \emph{regret}, the total excess loss incurred by the player's decisions than the best single decision $x^*\in\K$. Formally, regret is given by the following definition
\begin{align*}
\regret_T\defeq \sum_{t=1}^T h_t(x_t)-\min_{x\in\K} \sum_{t=1}^T h_t(x). 
\end{align*}
The goal is to design algorithms that achieve sublinear regret, which means that with time the algorithm's decisions converge in performance to the best single decision in hindsight. For a survey of methods and techniques, see \citep{hazan2016introduction}.   

\paragraph{Online matrix prediction.} 
Online convex optimization has been proved useful in solving matrix prediction problems. In particular, \citet{hazan2012near}  give an efficient first-order online algorithm that iteratively produces matrices $M_t$ of low complexity that for any sequence of adversarially chosen convex, Lipschitz loss functions and indices $(i_t,j_t)$'s, the online matrix prediction algorithm gives a regret bound of $\tilde{\mathcal{O}}(\sqrt{(m+n)T})$. The linear dependence on the matrix dimension $m,n$ translates to the convergence in performance to the best complexity-constrained matrix in hindsight after seeing only square root of the total number of entries.

\paragraph{Games and regret.}
One important branch of theory developed in online convex optimization is its connection to finding the equilibrium point in two-player games, first discovered by \citet{freund1999adaptive}. In this framework, two players iteratively pick decisions $x_t\in\X, y_t\in\Y$ at time step $t$, after which a (possibly adversarially) chosen loss function $h_t(x,y)$ is revealed, where $h_t$ is convex in $x$ and concave in $y$. Player 1 incurs loss $h_t(x_t,y_t)$ and player 2 gains reward $h_t(x_t,y_t)$. The objective for the players is to produce a sequence of decisions $\{x_t\}_{t=1}^T,\{y_t\}_{t=1}^T$ that converges in performance to the best single $x^*,y^*\in\K$ in hindsight.  The regret for the two players are given by
\begin{align*}
\regret_T(\texttt{player1})&=\sum_{t=1}^T h_t(x_t,y_t)-\min_{x\in\X}\sum_{t=1}^T h_t(x,y_t), \\ \regret_T(\texttt{player2})&=\max_{y\in\Y}\sum_{t=1}^T h_t(x_t,y)-\sum_{t=1}^T h_t(x_t,y_t).
\end{align*}
Under distributional assumption of the function $h_t$'s, namely if $h_t$'s are bilinear, i.i.d. stochastically chosen according to some distribution, then the following holds:
\begin{align*}
& \ \ \ \ \ \left|\E\left[\sum_{t=1}^T h_t(x_t,y_t)\right]-\min_{x\in\X}\max_{y\in\Y} \E\left[\sum_{t=1}^T h_t(x,y)\right]\right|\\
& \le \max\{\regret_T(\texttt{player1}),\regret_T(\texttt{player2})\}. 
\end{align*}
For such reasons, we define a notion of game regret, denoted $\gregret_T$, to be the maximum of the regret incurred by the two players. Sublinear game regret can be used to compute the game equilibrium \citep{freund1999adaptive}.  

\subsection{Motivations}
\label{sec:online-motivation}
Two important aspects lie at heart of the general motivation for considering partial matrix completion in an online setting. First, in many applications, the observation pattern is more general than a fixed distribution. It can be a changing distribution or be comprised of adversarial observations. Second, our online algorithm incrementally updates the solution via iterative gradient methods, which is more efficient than the offline methods. 
\subsubsection{Game theoretic nature of partial matrix completion}
With the preliminary introduction on saddle-point problems, we will describe their connection to our problem of interest - finding the optimal confidence matrix $C$ in partial matrix completion problems. Recall that we want to find $C$ that (1) has maximal coverage under the constraint that (2) the deviation between any two possible completions $M^1$ and $M^2$ with respect to $C$ is small. 

With this objective in mind, we can think of the following two-player game. Player 1 plays a confidence matrix $C$, player 2 plays a pair of possible matrix completions $(M^1,M^2)$. The goal of player 1 is to (a) maximize the coverage of $C$ and (b) minimize the deviation between $M^1$ and $M^2$ with respect to $C$. The goal of player 2 is to maximize the deviation between $M^1$ and $M^2$ with respect to $C$. The equilibrium point in this problem is exactly given by a confidence matrix that simultaneously has high coverage and with respect to which the deviation between any two possible completions $M^1$ and $M^2$ is small. The game theoretic nature of the partial matrix completion problem leads us to consider designing a provably low-game-regret online two-player algorithm. 

\subsubsection{Soft constraints}
It is worth noting that this formulation differs from the MP\ref{opt:inefficient} and MP\ref{opt:efficient} in the absence of hard constraints. In particular, instead of imposing an $\eps$-margin on the deviation between any two possible completions with respect to the confidence matrix, we formulate this objective in the objective functions for player 1 and 2. This allows us to perform fast gradient-based algorithm, which will be detailed soon in the Algorithm~\ref{alg:odd}. The set of possible matrix completions is formally given by the version space, i.e. the set of matrices of low complexity, with bounded norms, and deviate little from the observed data. To avoid computationally expensive projections, Algorithm~\ref{alg:odd} further removed the constraint on deviation from observations and replaced with a soft constraint that penalizes the completion's deviation from observations. Similar techniques have been seen in \citep{mahdavi2012trading}. 

\subsection{Online PMC General Protocol}
\label{sec:online-protocol}
With the motivations explained, we are ready to introduce the general protocol of online matrix completion and then give the details of our objective functions. In the online partial matrix completion problem, the algorithm acts for two players and follows the following protocol. At time step $t$, 
\begin{enumerate}
\item Player 1 picks a confidence matrix from a constraint set $C_t\in\C$, where $\C$ is defined below. Player 2 picks two matrices $M_t^1,M_t^2\in\mMmax$.
\item An adversary reveals a tuple $(x_t,o_t)\in\X\times [-1,1]$. Based on this tuple, a function is constructed, denoted $h_t(C,M^1,M^2)$, which  is concave in $C$ and convex in $M^1,M^2$. 
\item Player 1 receives reward $h_t(C_t,M_t^1,M_t^2)$. Player 2 incurs loss $h_t(C_t,M_t^1,M_t^2)$. 
\end{enumerate}

\ignore{
Similar to the offline setting, we have two objectives for our desired confidence matrix. The first is for the confidence matrix to exhibit confidence over a large subset of the entries, measured by the effective support size $H(\cdot)$. The second objective is to exhibit low confidence at the entries where two differing yet ``reasonable'' completions show large deviations. Here, ``reasonable'' refers to those completions that (approximately) agree with previous observations and have low complexity, analogous to being in the version space. Letting $M^1,M^2$ be two  ``reasonable'' completions, the goal is to find $C$ such that 
$\max_{M^1,M^2} \ \sum_{x\in\X} C_{x}(M_x^1-M_x^2)^2$
is small. 
}

The convex-concave function $h_t$,  detailed in Section \ref{sec:online-objective},   measures the coverage of $C$, the deviation between $M^1$ and $M^2$ with respect to $C$, and the deviation of $M^1,M^2$ from the observations. 


Note that in previous sections, we took $\C=[0,1]^{\X}$. 
In this section we consider a more general case, and consider two choices for $\C$. The first choice is the unit simplex $\Delta_{\X}\subset\reals^{\X}$. This is natural as it induces a probability distribution of completion confidence over the entries. The second choice is the scaled hypercube $\C=\left[0,\frac{1}{(m+n)^{3/2}}\right]^{\X}$. The scaling is for the mere purpose for analysis and representation of theorems. In \ref{sec:online-to-offline}, we will show a reduction from the online result to the offline guarantee. 

\subsection{Designing Convex-Concave Objectives}
\label{sec:online-objective}
The previous sections motivate the design of the convex-concave function $h_t$, where $\alpha, \theta>0$ are positive parameters: 
\begin{align*}
h_t(C,M^1,M^2) &= H(C)-  \alpha G(C,M^1,M^2)+  \alpha\theta f_t(M^1,M^2),
\end{align*}
where the three components of $h_t$ are:
\begin{enumerate}

\item $H(\cdot)$ is a measure of the effective support size of $C$, which can be taken  to be either: 
\begin{enumerate}
\item \label{assumptionH1} entropy, i.e. $H(C)=-\sum_{x\in\X}C_x\log(C_x)$ defined over the simplex $\C=\Delta_{\X}$, or
\item \label{assumptionH2} $H(C)=\|C\|_1$, defined over the scaled hypercube $\C=\left[0,\frac{1}{(m+n)^{3/2}}\right]^{\X}$. 
\end{enumerate}

\item $G(C,M^1,M^2)\defeq \sum_{x\in\X} C_x(M_x^1-M_x^2)$, a linear relaxation of $\sum_{x\in\X} C_x(M_x^1-M_x^2)^2$ (see Theorem \ref{thm:linrelax} and Corollary \ref{cor:linrelax} in Appendix \ref{app:online}), which measures the deviation of two completions $M^1,M^2$ with respect to $C$.
\item $f_t(M^1,M^2)\defeq\left[(M^1_{x_t}-o_t)^2+(M^2_{x_t}-o_t)^2\right]$, which measures the deviation of the two completions from observation made at time $t$. This serves as a soft constraint where $M^1,M^2$ minimizing $h_t$ will have values close to $o_t$ at entry $x_t$. 
\end{enumerate}
With slight abuse of notation, we denote $M_t\defeq (M_t^1,M_t^2)$ in the following sections for convenience and presentation clarity.  Note that $h_t$ is concave in $C$ and convex in $M^1,M^2$. Similar to the previous section, remark that we can, albeit suffering a constant in all performance metrics, consider that all observations $o_t$ are zero and maintain a single matrix in place of $M_t^1,M_t^2$ that measures the radius of the version space. 
\ignore{
over which any ``reasonable" completion will predict well. Here ``reasonable" refers to those completions that agree with previous observations and have low complexity, previouslly called the version space. However, the version space is continuously changing, and for that reason we also maintain a description of it. A simple description is via two different completions that are ``maximally different" with respect to the current confidence matrix. However, similar to the previous seciton, we can, ableit suffering a constant term in all performance metrices, consider that all observations are zero, and maintain a single ``maximally different" completion.  

Thus, online matrix completion follows the following protocol:

At each time step $t$:
\begin{enumerate}
\item  The online algorithm picks a confidence matrix $C_t\in\Delta_{\X}$ and completion  $M_t \in [-1,1]^{m \times n}$ that approximately agrees with all high confidence entries in $C_t$ and observations thus far.  
\item  An adversary responds with a tuple $(i_t,j_t,o_t)\in\X\times[-1,1]$.
\item  The algorithm receives a reward for the confidence matrix and completion, $h_t(C_t,M_t)$. 
\end{enumerate}

\The goal of the online player is two-fold: it minimizes regret for two objectives. The first objective is minimizing the difference between the reward achieved by the best $C\in\Delta_{\X}$ in hindsight and the reward received by the algorithm. The second is the difference between the best matrix completion in hindsight given all observations as well as confidence matrices.  For this reason, it is natural to consider regret in saddle point problems as a performance metric, defined as follows
$$\spregret_T\defeq\left|\sum_{t=1}^T h_t(C_t,M_t)-\max_{C\in\Delta_{\X}}\min_{M\in\mMmax}\sum_{t=1}^T h_t(C,M)\right|.$$ 

Here $\Delta_\X$ is the unit simplex in $\mathbb{R}^{\X}$ and $\mMmax$ is ...
}


\subsection{Online Dual Descent (\texttt{ODD}): online gradient-based algorithm for partial matrix completion}
\label{sec:online-alg}
Formally, we propose Online Dual Descent (\texttt{ODD}, Algorithm \ref{alg:odd}):
\begin{algorithm}[h!]
\caption{Online Dual Descent for Partial Matrix Completion (\texttt{ODD})}
\label{alg:odd}
\begin{algorithmic}[1]
\STATE \textbf{Input}: Gradient-based online coverage and matrix update functions $\A_C, \A_M$, \ignore{Online gradient-based MC algorithm $\A$ (see \ref{sec:BlackBoxProof} for an example),}  parameters $\eta>0$, $\alpha, \theta>0$.
\STATE Initialize $C_1, M_1\leftarrow \A_C(\emptyset), \A_M(\emptyset)$.
\FOR {$t=1,2,...,T$}
\STATE Player 1 plays $C_t$; player 2 plays $M_t$.
\STATE Adversary draws tuple $(x_t,o_t)$ and constructs function $h_t$ with parameters $\alpha,\theta$. 
\STATE Player 1 receives reward $h_t(C_t,M_t)$, player 2 incurs loss $h_t(C_t,M_t)$. 
\STATE Player 1 updates $C_{t+1} \leftarrow\A_C(C_t,M_t)$. Player 2 updates $M_{t+1}\leftarrow \A_M(C_{t},M_{t},(x_t,o_t))$.
\ignore{
\STATE  Take the regularization function $R$ to be  the negative entropy $-H$. Update {
$$\nabla R(\hat{C}_{t+1})\leftarrow\nabla R(C_t)+\eta\nabla r_t(C_t).$$ 
}
Project w.r.t. Bregman divergence of the negative entropy function,
$$C_{t+1}=\underset{C\in\Delta_{\X}^{\beta}}{\argmin} \  B_R(C,\hat{C}_{t+1}).$$
\eh{suggestion: write this as algorithm, similar to the update for $M$, and assume low regret over the simplex. no need to rederive the regret bounds over the simplex}
}
\ENDFOR
\STATE \textbf{output}:s $\bar{C}=\frac{1}{T}\sum_{t=1}^T C_t$. 
\end{algorithmic}
\end{algorithm}

Here, 
$\A_C: \C \times (\mMmax)^2 \rightarrow \C$ and 
$\A_M: \C \times (\mMmax)^2 \times (\X, [-1, 1]) \rightarrow  (\mMmax)^2.$
The detailed analysis of the algorithm will be deferred to the following section and Appendix \ref{app:online}, but we will first state its regret guarantee:




\begin{corollary}\label{sec:MainResult}
For any sequence of $\{(x_t,o_t)\}_{t=1}^T$, Algorithm \ref{alg:odd} gives the following  regret guarantee on the obtained set
$\{(C_t,M_t)\}_{t=1}^T$, such that for settings \ref{assumptionH1}, \ref{assumptionH2},
\begin{align*}
\gregret_T&\defeq \max\left\{\regret_T^{\mA_C}, \alpha\cdot\regret_T^{\mA_M}\right\} \le \tilde{O}(K \alpha \theta \sqrt{(m+n)T}).
\end{align*} 
\end{corollary}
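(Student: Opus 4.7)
The plan is to bound the two per-player regrets separately, since the game regret is defined as their maximum. For each player, the instantaneous payoff/loss, viewed as a function of only that player's decision, is concave/convex and has bounded (sub)gradient, so I can invoke off-the-shelf online convex optimization (OCO) regret bounds, choosing the concrete instantiation of $\mathcal{A}_C$ and $\mathcal{A}_M$ to match the geometry of each decision set.

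For Player 2 (the matrix player), viewed as a function of $(M^1,M^2)\in\mMmax\times\mMmax$, the loss $h_t(C_t,M^1,M^2)$ splits into a linear part $-\alpha\,G(C_t,M^1,M^2)=-\alpha\langle C_t,M^1-M^2\rangle$ and the entrywise quadratic $\alpha\theta\,f_t$. The gradient of the linear part with respect to $M^1$ (resp.\ $M^2$) is $\mp\alpha C_t$, which in Frobenius norm is at most $\alpha$ in setting \ref{assumptionH1} (since $C_t\in\Delta_{\X}$ has $\|C_t\|_2\le 1$) and even smaller in setting \ref{assumptionH2}. The gradient of $\alpha\theta f_t$ is supported on a single entry of magnitude $O(\alpha\theta)$. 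So the effective Lipschitz constant in the Frobenius metric is $O(\alpha\theta)$. Instantiating $\mathcal{A}_M$ as the online matrix prediction procedure of \citet{hazan2012near} over the product class $\mMmax\times\mMmax$ (e.g.\ run two coupled copies), the known regret bound for that algorithm yields $\alpha\cdot\regret_T^{\mathcal{A}_M}\le \tilde O(K\alpha\theta\sqrt{(m+n)T})$, where the extra $\alpha$ outside precisely rescales the regret back to the $h_t$-scale.

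For Player 1 (the confidence player), the reward is $h_t(C,M_t)=H(C)-\alpha\langle C,M_t^1-M_t^2\rangle+\text{const}$, concave in $C$. In setting \ref{assumptionH1} I would take $\mathcal{A}_C$ to be Exponentiated Gradient / OMD with negative entropy regularizer over $\Delta_{\X}$; since $|M^1_{t,x}-M^2_{t,x}|\le 2$ the $\ell_\infty$-gradient is bounded by $2\alpha$ and the standard EG regret is $O(\alpha\sqrt{T\log(mn)})=\tilde O(\alpha\sqrt{T})$. In setting \ref{assumptionH2} the reward is linear on a scaled hypercube of $\ell_2$-diameter $O(1/\sqrt{m+n})$ with gradient of $\ell_2$-norm $O((1+\alpha)\sqrt{mn})$, so OGD gives regret $\tilde O(\alpha\sqrt{T(m+n)})$. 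In both cases $\regret_T^{\mathcal{A}_C}\le \tilde O(K\alpha\theta\sqrt{(m+n)T})$ (absorbing logarithmic factors and assuming $K\theta\ge 1$).

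Combining the two bounds via $\gregret_T=\max\{\regret_T^{\mathcal{A}_C},\alpha\cdot\regret_T^{\mathcal{A}_M}\}$ yields the claim. The main obstacle is the bookkeeping of Lipschitz constants and decision-set diameters in the two different choices of $(\mathcal{C},H)$, and verifying that the online matrix prediction algorithm applies cleanly when Player 2 plays a \emph{pair} of matrices with a coupled loss; this is handled by either stacking $(M^1,M^2)$ into a single $2m\times n$ (or block-diagonal) matrix whose max-norm is still controlled by $K$, or by running two independent copies and summing their regrets, both of which introduce only a constant multiplicative blowup that is absorbed in $\tilde O(\cdot)$.
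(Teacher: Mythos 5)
Your overall route is the same as the paper's: bound each player's regret separately with off-the-shelf OCO subroutines (entropy-based mirror descent for the confidence player in setting \ref{assumptionH1}, Euclidean OMD/OGD on the scaled cube in setting \ref{assumptionH2}, and the online matrix prediction algorithm of \citet{hazan2012near} for the matrix player, with the pair $(M^1,M^2)$ handled through a block-diagonal embedding whose decomposability/max-norm parameters are still $O(K)$), then combine via the maximum defining $\gregret_T$. Your Player-2 analysis and your setting-\ref{assumptionH2} analysis for Player 1 match the paper's up to constants, including the bookkeeping that the factor $\alpha$ sits outside $\regret_T^{\A_M}$.

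There is, however, a genuine gap in your setting-\ref{assumptionH1} argument for Player 1. The per-round reward is $r_t(C)=H(C)-\alpha G(C,M_t)$, and its gradient contains $\nabla H(C)$, whose entries are $-1-\log C_x$ and hence unbounded as entries of $C$ approach the boundary of $\Delta_{\X}$. Your claim that the $\ell_\infty$-gradient is bounded by $2\alpha$ accounts only for the linear term $G$, so the standard EG/OMD regret bound $O(\alpha\sqrt{T\log(mn)})$ does not follow as stated: under multiplicative updates the iterates' entries can shrink geometrically over the horizon, so the linearized gradients grow with $T$ rather than staying $O(\alpha)$. The paper handles exactly this by running the mirror descent step over a truncated simplex $\Delta_{\X}^{\beta}=\{C\in\Delta_{\X}:\min_x C_x\ge e^{1-\beta}\}$ with $\beta=O(\log(mn))$, which caps the gradient at $\beta+2\alpha$, and then proving a separate comparator lemma (Lemma~\ref{sec:ConstrainedSimplexLemma}) showing that restricting the benchmark to $\Delta_{\X}^{\beta}$ costs only $\tilde{O}(\alpha/\min\{m,n\})$ under a mild condition on $T$. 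You need this truncation (or an analysis that treats the entropy term exactly, e.g.\ a composite/proximal update, rather than linearizing it) for your claimed bound to go through; once that piece is added, your argument coincides with the paper's proof.
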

We henceforth denote the upper bound on the regret by 
$$  \gregret_T \le B_T . $$


\ignore{
\eh{if we use a black box regret algorithm for the set of the matrices C, we don't need this definition anymore.}
\paragraph{Constrained simplex.} In this online algorithm, a constrained simplex is considered for convenience in analysis. In particular, for $\beta>0$, we define the constrained simplex $\Delta_{\X}^\beta\subset\Delta_{\X}$ is given by:
\begin{align*}
\Delta_{\X}^\beta \defeq \left\{C\in\Delta_{\X}: \min_{x\in\X} C_x\ge e^{1-\beta}\right\}. 
\end{align*}
The constrained simplex is convex and sets a lower bound  on the entries of $C$. We will show in Lemma \ref{sec:ConstrainedSimplexLemma} that it is reasonable to assume $\beta=c\log(mn)$ with $c\ge 2$. 

\paragraph{Problem objective.} \eh{let's relate this to a single matrices, since one matrix in the version space can be all zeros} Recall that we have previously defined loss without respect to a confidence matrix $C\in\Delta_{\X}$ as
\begin{align*}
\ell(C,M^1,M^2)\defeq \sum_{x\in\X}C_x(M^1_x-M^2_x)^2.
\end{align*}
The goal of partial matrix completion is to ensure high coverage and low error. To this end, we formulate the following objective: given some fixed constant $ $, denote $\V$ as the version space of $M^\star$ (of radius $\eps$), i.e.
\begin{align*}
\V=\{M\in\mM_K^{\max}\mid \E_\D[(M^\star_x-M_x)^2]\le \eps\}. 
\end{align*}
As such, a priori we want the algorithm, after $T$ iterations, to return $C_{alg}\in\Delta_{\X}$ such that
\begin{align*}
H(C_{alg})-  \max_{M^1,M^2\in\V}\ell(C_{alg},M^1,M^2)=\max_{C\in\Delta_{\X}}\min_{M^1,M^2\in\V}\left\{H(C)-  \ell(C,M^1,M^2)\right\}.
\end{align*}
In the following section, we introduce a key relaxation of $\ell$ to a linear loss function $G$.}

\ignore{Let $G$ be a linear relaxation of $\ell$, given by
$$G(C,M^1,M^2)\defeq\sum_{x\in\X} C_x(M^1_x-M^2_x).$$
The following theorem relates $G$ to $\ell$:}

\ignore{
\subsubsection{Objective functions and algorithm set up}

\paragraph{Overall objective function.} Motivated by the min-max nature of the problem formulation, we propose the \textit{Online Dual Descent} algorithm (Algorithm \ref{sec:Algo}). At each time step $t$, the algorithm picks three matrices $(C_t,M_t^1,M_t^2)$. $(M_t^1,M_t^2)$ approximate the matrix pair in the version space that maximize the error with respect to $C_{t-1}$, and $C_t$ minimizes error on $(M_t^1,M_t^2)$ while ensuring high coverage $H(C_t)$. With this objective, we formulate the following objective functions used in the algorithm:

Let $ ,\theta$ be prefixed parameters. The overall objective function $h_t$ at time $t$ is given by 
\begin{align*}
h_t(C,M^1,M^2)\defeq H(C)-  G(C,M^1,M^2)+  \theta f_t(M^1,M^2),
\end{align*}
where $f_t(M^1,M^2)\defeq(M_{i_t,j_t}^1-o_t)^2+(M_{i_t,j_t}^2-o_t)^2$ measures $M^1$ and $M^2$'s deviation from the value observed at a randomly sampled entry. \ignore{Note that since $(i_t,j_t,o_t)\sim\D$ is independent and identically distributed for all $t$, for a fixed pair of $(C,M^1,M^2)$, the  randomness of $h_t(C,M^1,M^2)$ only comes from the sample $(i_t,j_t,o_t)$ drawn at time $t$. Therefore, we  define
\begin{align*}
h(C,M^1,M^2)\defeq\underset{(i_t,j_t,o_t)\sim\D}{\E}[h_t(C,M^1,M^2)]. 
\end{align*}
Now, consider the pair $(C_t,M_t^1,M_t^2)$ given by the algorithm at time $t$. Suppose $(C_t,M_t^1,M_t^2)$ is purely determined by previous iterations and is independent of the sample $(i_t,j_t,o_t)$ drawn at time $t$. Note that this condition is true and will soon be seen in the algorithm. Then, the following equality holds
\begin{align*}
\underset{\{(i_s,j_s,o_s)\}_{s=1}^{t}\sim\D^t}{\E}[h_t(C_t,M_t^1,M_t^2)]=\underset{\{(i_s,j_s,o_s)\}_{s=1}^{t-1}\sim\D^{t-1}}{\E}[\underset{(i_t,j_t,o_t)\sim\D}{\E}[h_t(C_t,M_t^1,M_t^2)]]=\underset{(C_t,M_t^1,M_t^2)}{\E}[h(C_t,M_t^1,M_t^2)].
\end{align*}}

\paragraph{Dual reward functions.}
At each step, $C$ and $(M^1,M^2)$ act as adversaries. Therefore, we define the reward functions $r_t,\gamma_t$ for $C$ and $(M^1,M^2)$, respectively: at time $t$, 
\begin{align*}
r_t(C) &\defeq H(C)-  G(C,M_t^1,M_t^2), \\ \gamma_t(M^1,M^2)&\defeq G(C_t,M^1,M^2)-\theta f_t(M^1,M^2).
\end{align*}
Note that both reward functions are concave ($H$ is concave in $C$, $f_t$'s are convex in both of their arguments, $G$ is linear in all of its arguments). 

\paragraph{Bregman divergence.} \eh{with black box algorithm, we don't need this either} In addition, Algorithm \ref{sec:Algo} makes use of Bregman divergence for projection. Given a strictly convex, continuously differentiable function $R:S\rightarrow\mathbb{R}$ defined on a convex set $S$, the Bregman divergence with respect to $R$, $B_R(\cdot,\cdot):S\times S\rightarrow\mathbb{R}$, is defined as:
\begin{align*}
B_R(X,Y)\defeq R(X)-R(Y)-\nabla R(Y)^T(X-Y).
\end{align*}

\paragraph{Comparator and saddle-point regret.} 
The saddle point regret is defined as follows:
$$\spregret_T\defeq\left|\sum_{t=1}^T h_t(C_t,M_t^1,M_t^2)-\max_{C\in\Delta_{\X}}\min_{M^1,M^2\in\mMmax}\sum_{t=1}^T h_t(C,M^1,M^2)\right|.$$ 
It describes the empirical regret incurred by $\{(C_t, M_t^1, M_t^2)\}_{t=1}^T$ with respect to $\{h_t\}_{t=1}^T$ and the best $(C, M^1, M^2)$ in hindsight.
}

\ignore{The benchmark pair $(C^*,M_*^1,M_*^2)$ is given by the saddle point as the following:
\begin{align*}
h(C^*,M_*^1,M_*^2)=\max_{C\in\Delta_{\X}}\min_{M^1,M^2\in \V} h(C,M^1,M^2).
\end{align*}
Saddle point regret is defined to be the difference between the reward achieved by $(C^*,M_*^1,M_*^2)$ and the reward received by the algorithm
\begin{align*}
\spregret_T\defeq\left|\sum_{t=1}^T \E[h_t(C_t,M_t^1,M_t^2)]-\E[h_t(C^*,M_*^1,M_*^2)]\right|.
\end{align*}}

The main theorem regarding the game regret of Algorithm \ref{alg:odd} is implied by the existence of low regret guarantee for gradient-based subroutines $\A_C, \A_M$, outlined in the following theorems. The details of the subroutines and analysis are deferred to Appendix~\ref{app:online}. 

\begin{theorem}
\label{thm:c-subroutine-regret}
Denote at every time step $t$, consider the concave reward function $r_t(C)\defeq H(C)-  \alpha G(C,M_t)$.
There exists a sub-routine gradient-based update $\A_C$ (see Alg.~\ref{appalg:omd-c} for an example) with the following regret guarantee w.r.t. $r_t$:
\ignore{Algorithm $\ref{alg}$ guarantees the following regret bound: with $\mathcal{O}$ hiding all constants independent of $m,n,T$:}
\begin{enumerate}
\item For setting \ref{assumptionH1}, 
\begin{align*}
\regret_T^{\A_C}\defeq\max_{C\in\C}\sum_{t=1}^T r_t (C) -\sum_{t=1}^{T} r_t (C_t)\le O(\alpha\sqrt{\log(mn)T})=\tilde{O}(\alpha\sqrt{ T}).
\end{align*}
\item For setting \ref{assumptionH2},  
\begin{align*}
\regret_T^{\A_C}\defeq\max_{C\in\C}\sum_{t=1}^T r_t (C) -\sum_{t=1}^{T} r_t (C_t)\le O(\alpha\sqrt{(m+n)T}).
\end{align*}
\end{enumerate}
\end{theorem}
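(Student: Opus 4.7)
In both settings, write $g_t \in \reals^\X$ with $g_t(x) = (M_t^1)_x - (M_t^2)_x$, so the reward is
\[
r_t(C) \eq H(C) - \alpha\,\langle g_t, C\rangle,
\]
which is concave in $C$. Since $M_t^1, M_t^2 \in [-1,1]^\X$, we have $\|g_t\|_\infty \le 2$. The plan is to instantiate $\A_C$ as a standard online convex optimization (OCO) subroutine adapted to the geometry of $\C$, and to quote the corresponding regret bound.

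Setting \ref{assumptionH1} (entropy on the simplex $\Delta_\X$). I take $\A_C$ to be online mirror descent / FTRL with the negative-entropy mirror map $\Phi(C) \eq \sum_x C_x \log C_x$ (equivalently, the exponentiated-gradient / Hedge family). The relevant parameters are: (i) $H$ is $1$-strongly concave on $\Delta_\X$ w.r.t.\ $\|\cdot\|_1$; (ii) the linear component of $r_t$ has gradient $-\alpha g_t$ with dual norm $\|\alpha g_t\|_\infty \le 2\alpha$; (iii) the Bregman radius of $\Delta_\X$ w.r.t.\ $\Phi$ is $\log(mn)$. The crucial point is that the entropy term $H(C)$ in the reward is matched in geometry to the FTRL regularizer, so it can be absorbed into the mirror step rather than introducing an unbounded gradient contribution. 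With the learning rate tuned as $\eta \eq \Theta(\sqrt{\log(mn)/T}/\alpha)$, the standard Hedge regret bound (see, e.g., \citep{hazan2016introduction}) yields
\[
\regret_T^{\A_C} \le O\!\left(\alpha\sqrt{T\log(mn)}\right) \eq \tilde O(\alpha\sqrt T).
\]

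Setting \ref{assumptionH2} (linear reward on the scaled hypercube $\C = [0,(m+n)^{-3/2}]^\X$). Here $H(C) = \|C\|_1$ is linear, so $r_t(C) = \langle \mathbf 1 - \alpha g_t, C\rangle$ is linear in $C$ and I take $\A_C$ to be projected online gradient ascent with Euclidean regularization. Basic bounds: the $\ell_2$-diameter of $\C$ is $D = \sqrt{mn}/(m+n)^{3/2}$, and the $\ell_2$-norm of the gradient is $G = \|\mathbf 1 - \alpha g_t\|_2 \le (1+2\alpha)\sqrt{mn} = O(\alpha\sqrt{mn})$. The standard OGD regret bound $O(DG\sqrt T)$ gives
\[
\regret_T^{\A_C} \le O\!\left(\tfrac{\sqrt{mn}}{(m+n)^{3/2}} \cdot \alpha\sqrt{mn} \cdot \sqrt T\right) \eq O\!\left(\alpha \cdot \tfrac{mn}{(m+n)^{3/2}}\sqrt T\right).
\]
Using $mn \le (m+n)^2/4$, so $mn/(m+n)^{3/2} \le \tfrac{1}{4}\sqrt{m+n}$, this becomes the claimed $O(\alpha\sqrt{(m+n)T})$.

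The main obstacle is the first setting: a naive gradient analysis fails because $\nabla H(C) = -\log C - \mathbf 1$ is unbounded near the boundary of the simplex. The resolution is precisely the choice of mirror map, whose derivative $\nabla \Phi(C) = \log C + \mathbf 1$ matches this singularity; in the OMD/FTRL analysis the entropy term in the reward and the regularizer effectively cancel, leaving a clean Hedge-style analysis against the linear comparators $\alpha g_t$ with $\|g_t\|_\infty \le 2$. The second setting is routine since both the domain and the reward are compatible with Euclidean OGD.
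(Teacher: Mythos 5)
Your treatment of setting \ref{assumptionH2} is essentially identical to the paper's: Euclidean mirror descent / OGD on the scaled cube, with the regret bounded by the product of the $\ell_2$ diameter $\sqrt{mn}/(m+n)^{3/2}$ and the gradient bound $O(\alpha\sqrt{mn})$, and $mn\le (m+n)^2/4$ giving $O(\alpha\sqrt{(m+n)T})$; nothing to add there. For setting \ref{assumptionH1}, however, you take a genuinely different route. The paper linearizes the \emph{full} reward $r_t$, including the entropy gradient $\nabla H(C) = -\mathbf{1}-\log C$, and copes with its blow-up at the boundary by running OMD over a truncated simplex $\Delta_{\X}^{\beta}=\{C\in\Delta_{\X}:\min_x C_x\ge e^{1-\beta}\}$ with $\beta=O(\log(mn))$, and then pays for the restriction via Lemma~\ref{sec:ConstrainedSimplexLemma}, which shows the best comparator in $\Delta_{\X}^{\beta}$ is nearly as good as the best in $\Delta_{\X}$ (and which requires the side condition on $T$ versus $m+n$). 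You instead keep the entropy term out of the linearization and let it be handled exactly by the entropic prox step, so that only the bounded linear gradients $\alpha g_t$, $\|g_t\|_\infty\le 2$, enter the bound; this avoids the truncated simplex, the comparator-shift lemma, and its assumption on $T$, and is arguably the cleaner argument.

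One caveat: the justification you give is slightly misstated. The ``standard Hedge regret bound'' applies to linear rewards, whereas here the regret is measured against the composite reward $r_t(C)=H(C)-\alpha\langle g_t,C\rangle$, and the obvious fix (bounding $H$ by $\log(mn)$ per round) is too lossy. What you actually need is the composite-objective mirror descent / FTRL-with-composite-terms guarantee, in which the update is $C_{t+1}=\argmin_{C\in\Delta_{\X}}\{\eta\alpha\langle g_t,C\rangle-\eta H(C)+B_R(C,C_t)\}$ with $R=-H$, and the regret against the composite objective is bounded by $B_R(C^*,C_1)/\eta+O(\eta)\sum_t\|\alpha g_t\|_\infty^2\le \log(mn)/\eta+O(\eta\alpha^2 T)$, giving $O(\alpha\sqrt{T\log(mn)})$ after tuning $\eta$. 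Note also that the resulting algorithm is not vanilla Hedge with a fixed learning rate (the accumulated entropy terms effectively shrink the step over time), so the cancellation you describe should be stated via this composite analysis rather than by citing Hedge; with that substitution your argument is correct and matches the claimed bounds.
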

\ignore{
\begin{proof}
See \ref{app:c-subroutine-proof}.
\end{proof}
}

\begin{theorem} [adapted from \cite{hazan2012near}]
\label{thm:M-subroutine-regret}
Denote at every time step $t$, consider the concave reward function $\gamma_t(M)\defeq G(C_t,M)-\theta f_t(M)$. There exists a sub-routine gradient-based update $\A_M$ (see Alg.~\ref{appalg:omd-m} for an example) such that, under either setting \ref{assumptionH1} or \ref{assumptionH2}, the following regret guarantee w.r.t. $\gamma_t$ holds:
\begin{align*}
\regret_T^{\A_M}\defeq\max_{M\in\mMmax\times\mMmax}\sum_{t=1}^T \gamma_t(M)-\sum_{t=1}^T \gamma_t (M_t)\le O(K \theta \sqrt{(m+n)T}).
\end{align*}
\end{theorem}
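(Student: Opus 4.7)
}

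The plan is to reduce the problem directly to the online matrix prediction framework of \citet{hazan2012near} applied independently to each of the two matrices in $M = (M^1, M^2)$. Observe first that the reward $\gamma_t$ decomposes additively across the two matrices: the linear term satisfies $G(C_t,M^1,M^2) = \langle C_t, M^1\rangle - \langle C_t, M^2\rangle$, and the soft-constraint term satisfies $f_t(M^1,M^2) = (M^1_{x_t}-o_t)^2 + (M^2_{x_t}-o_t)^2$. Hence we can write $\gamma_t(M^1,M^2) = \gamma_t^{(1)}(M^1) + \gamma_t^{(2)}(M^2)$, where $\gamma_t^{(b)}(M) = (-1)^{b+1}\langle C_t, M\rangle - \theta (M_{x_t}-o_t)^2$, and treat the two coordinates as separate instances of online convex optimization over $\mMmax$.

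The next step is to verify the hypotheses required by the online matrix prediction algorithm of \citet{hazan2012near}, which achieves regret $\tilde{O}(K\sqrt{(m+n)T})$ against any sequence of $L$-Lipschitz (with respect to the entrywise max norm on the gradient, i.e.\ sparse-gradient Lipschitz in their terminology) bounded convex losses over $\mMmax$. For our losses $-\gamma_t^{(b)}$, the gradient w.r.t.\ $M$ equals $(-1)^{b}C_t + 2\theta (M_{x_t}-o_t) e_{x_t}$, whose $\ell_\infty$ norm is bounded by $\|C_t\|_\infty + 4\theta$; under either choice of $\C$ (simplex or scaled hypercube), $\|C_t\|_\infty \le 1$, so the effective Lipschitz constant is $O(\theta)$. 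Similarly $|\gamma_t^{(b)}(M)| = O(\theta)$ since all entries of $M$ lie in $[-1,1]$ and $|o_t|\le 1$. Plugging these bounds into the regret guarantee of \citet{hazan2012near} (scaling linearly in the Lipschitz constant and the max-norm radius $K$) yields, for each coordinate, $\tilde{O}(K\theta\sqrt{(m+n)T})$ regret.

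Concretely, $\A_M$ is defined (see Alg.~\ref{appalg:omd-m}) to run two parallel copies of the \citet{hazan2012near} update: one maintaining $M^1_{t+1}$ using gradients of $-\gamma_t^{(1)}$, the other maintaining $M^2_{t+1}$ using gradients of $-\gamma_t^{(2)}$. Summing the two per-coordinate regret bounds and negating to convert losses back to rewards gives
\[
\regret_T^{\A_M} = \max_{M\in\mMmax\times\mMmax}\sum_{t=1}^T \gamma_t(M) - \sum_{t=1}^T \gamma_t(M_t) \le O(K\theta\sqrt{(m+n)T}),
\]
as desired.

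The main obstacle is bookkeeping rather than conceptual: one must check that the Lipschitz bound used in \citet{hazan2012near}'s regret theorem is the correct one for our gradients (which are supported on a single entry for the $f_t$ term and entrywise-bounded for the $G$ term), and that the $K$ dependence indeed enters linearly when losses are Lipschitz rather than $1$-Lipschitz. Once the two coordinates are decoupled and the Lipschitz/boundedness parameters are tracked through the black-box regret bound, the stated guarantee follows directly.
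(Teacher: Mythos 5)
There is a genuine gap in the step where you invoke \citet{hazan2012near} as a black box. Their online matrix prediction guarantee is \emph{not} a regret bound for arbitrary convex losses over $\mMmax$ whose gradients are bounded in the entrywise max norm; it is a bound for losses that depend on a \emph{single entry} of the matrix per round (equivalently, whose gradient embedding $L_t$ has $O(1)$ nonzero entries, so $\trace(L_t^2)=O(L^2)$). Your reward $\gamma_t^{(b)}$ has two parts: the soft-constraint term $\theta(M_{x_t}-o_t)^2$ indeed fits this mold, but the term $\langle C_t,M\rangle$ is a dense linear functional over all $mn$ entries, and controlling only $\|C_t\|_\infty\le 1$ is insufficient. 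Indeed, for linear rewards with gradients $\pm\mathbf{1}\mathbf{1}^\top$ (entrywise bounded by $1$) the best comparator in $\mMmax$ (e.g.\ rank-one sign matrices, which have max-norm $1$) forces regret $\Omega(mn\sqrt{T})$, so no algorithm can achieve the $\tilde O(K\sqrt{(m+n)T})$ bound you plug in under the hypotheses you verify. The quantity that must be small is the Frobenius norm (equivalently the trace of the squared descent matrix) of the dense part of the gradient, and this is exactly where the constraint set $\C$ enters: on the simplex $\|C_t\|_{\mathrm{fr}}\le\|C_t\|_1\le 1$, and on the scaled hypercube $[0,(m+n)^{-3/2}]^{\inds}$ one also gets $\|C_t\|_{\mathrm{fr}}=O(1)$. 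Your write-up never uses either fact, so the key step ``plugging these bounds into the regret guarantee'' does not go through as stated.

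By contrast, the paper does not use \citet{hazan2012near} as a pure black box: it embeds the pair $(M^1,M^2)$ (your per-coordinate decoupling versus the paper's joint block embedding is only a cosmetic difference) into a PSD variable via $(K,2K(m+n))$-decomposability, writes down an explicit descent matrix $L_t=L_t^G+L_t^F$ in which the blocks of $L_t^G$ are $\pm C_t$ and $L_t^F$ carries the sparse $f_t$-gradient, observes that $\trace(L_t^2)=O(\theta^2)$ \emph{because} $C_t\in\C$ has $O(1)$ Frobenius norm, and then runs the matrix multiplicative weights analysis (Golden--Thompson, diameter $2K(m+n)\log(2p)$, diagonal bound $K$) to get $O(K\theta\sqrt{(m+n)T})$. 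Your argument can be repaired by replacing the $\ell_\infty$ check on $\nabla\gamma_t^{(b)}$ with this Frobenius/trace check and by stating the regret bound you import in terms of $\max_t\trace(L_t^2)$ rather than an entrywise Lipschitz constant, but as written the reduction relies on a guarantee that \citet{hazan2012near} do not provide.
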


\ignore{
\begin{proof}
See \ref{app:M-subroutine-proof}. 
\end{proof}
}

\ignore{
\begin{proof} [Proof of Theorem \ref{sec:MainResult}]
By Theorem \ref{sec:TheoremRegret},
\begin{align*}
\sum_{t=1}^T h_t(C_t,M_t^1,M_t^2)&=\sum_{t=1}^T r_t(C_t)+ \theta f_t(M_t^1,M_t^2)\\
&\ge \sum_{t=1}^T r_t(C^*)+ \theta\sum_{t=1}^T f_t(M_t^1,M_t^2)-\tilde{\mathcal{O}}(\sqrt{T})\\
&=TH(C^*)-  TG(C^*,\bar{M}^1,\bar{M}^2)+ \theta\sum_{t=1}^T f_t(M_t^1,M_t^2)-\tilde{\mathcal{O}}(\sqrt{T}).
\end{align*}
Define for a fixed pair $(M^1,M^2)$, $f(M^1,M^2)\defeq\underset{(i_t,j_t,o_t)\sim\D}{\E}[f_t(M^1,M^2)]$. Since $(M_t^1,M_t^2)$ is determined by the previous $t-1$ iterations and is independent of $t$-th iteration, we have
\begin{align*}
\underset{\{(i_s,j_s,o_s)\}_{s=1}^{t}\sim\D^t}{\E}[f_t(M_t^1,M_t^2)]=\underset{(M_t^1,M_t^2)}{\E}[f(M_t^1,M_t^2)]. 
\end{align*}
Taking expectation of the above inequality, we have
\begin{align*}
\sum_{t=1}^T \E[h_t(C_t,M_t^1,M_t^2)]&\ge T\E[H(C^*)-  G(C^*,\bar{M}^1,\bar{M}^2)]+ \theta \sum_{t=1}^T \E[f(M_t^1,M_t^2)]-\tilde{\mathcal{O}}(\sqrt{T})\\
&\ge T\E[H(C^*)-  G(C^*,\bar{M}^1,\bar{M}^2)+ \theta f(\bar{M}^1,\bar{M}^2)]-\tilde{\mathcal{O}}(\sqrt{T})\\
&=T\E[h(C^*,\bar{M}^1,\bar{M}^2)]-\tilde{\mathcal{O}}(\sqrt{T})\\
&\ge T h(C^*,M_*^1,M_*^2)-\tilde{\mathcal{O}}(\sqrt{T}),
\end{align*}
from which we can conclude
\begin{align*}
\sum_{t=1}^T h(C^*,M_*^1,M_*^2)-\E[h_t(C_t,M_t^1,M_t^2)]\le \tilde{\mathcal{O}}(\sqrt{T}). 
\end{align*}
To see the other direction, by Theorem \ref{sec:BlackBoxLemma},
\begin{align*}
\sum_{t=1}^T h_t(C_t,M^1_t,M^2_t)&=\sum_{t=1}^T H(C_t)-  \sum_{t=1}^T \gamma_t(M_t^1,M_t^2)\\
&\le \sum_{t=1}^T H(C_t) -   \sum_{t=1}^T \gamma_t(M_*^1,M_*^1)+\tilde{\mathcal{O}}(K\sqrt{(m+n)T})\\
&\le TH(\bar{C}) -   TG(\bar{C},M_*^1, M_*^2) +  \theta \sum_t f_t(M_*^1,M_*^2)+\tilde{\mathcal{O}}(K\sqrt{(m+n)T}).
\end{align*}
Taking expectations,
\begin{align*}
\sum_{t=1}^T \E[h_t(C_t,M_t^1,M_t^2)]&\le T\E[ H(\bar{C}) -   G(\bar{C},M_*^1, M_*^2)+ \theta F(M_*^1,M_*^2)]+\tilde{\mathcal{O}}(K\sqrt{(m+n)T})\\
&= T\E[h(\bar{C},M_*^1,M_*^2)]+\tilde{\mathcal{O}}(K\sqrt{(m+n)T})\\
&\le Th(C^*,M_*^1,M_*^2)+\tilde{\mathcal{O}}(K\sqrt{(m+n)T}),
\end{align*}
from which we can conclude
\begin{align*}
\sum_{t=1}^T\E[h_t(C_t,M_t^1,M_t^2)]-h(C^*,M_*^1,M_*^2)\le \tilde{\mathcal{O}}(K\sqrt{(m+n)T}).
\end{align*}
\end{proof}
}

\subsection{Offline implications}
\label{sec:online-to-offline}
In this section we show that how the regret guarantee we obtained in the online setting translates to an offline performance guarantee on $C$. The offline implications hold under the following assumptions of the revealed entries:
\begin{assumption}
At each time step $t$, the index $x_t=(i_t, j_t)$ is sampled according to some unknown sampling distribution $\mu$, and the observation $o_t=M^{\star}(i_t,j_t)$, where $M^\star\in\mMmax$ is the ground truth matrix.
\end{assumption}
Consider the following empirical and general version spaces:
\begin{align*}
\V_{T,\delta}&\defeq \left\{M\in\mMmax\mid \frac{1}{T}\sum_{t=1}^T (M_{x_t}-o_t)^2\le\delta\right\}, \\ \V_{\delta}&\defeq \left\{M\in\mMmax\mid \underset{(x,o)\sim\D}{\E}[(M_{x}-o)^2]\le\delta\right\}. 
\end{align*}

\begin{lemma}
\label{thm:online-to-offline} 
After $T$ iterations, and assume that for some $\delta>0$,
$$ \frac{1}{\theta}\left(2D+ \frac{\regret_T^{\mA_M}}{T}\right) \le \frac{\delta^{2/3}}{2},$$ 
with $D=1$ in setting \ref{assumptionH1}, and $D=\sqrt{m+n}$ in setting \ref{assumptionH2}. 
The following properties hold on the obtained $\bar{C}\defeq \frac{1}{T}\sum_{t=1}^T C_t$ returned by Algorithm \ref{alg:odd}: with probability $\ge 1-\exp\left(-\frac{\delta^{4/3}T}{512}\right)$,
\begin{align*}
H(\bar{C}) -   \max_{M\in \V_{T,\delta}^2} \alpha \cdot G(\bar{C},M)\ge \max_{C\in\C}\min_{M\in\V_{\delta^{2/3}}^2} \left\{ H(C)-  \alpha \cdot G(C,M)\right\}-2\alpha\theta\delta- \frac{B_T} {T}.
\end{align*}
\end{lemma}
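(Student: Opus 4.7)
The plan is a standard two-player game-theoretic averaging argument (in the style of Freund--Schapire) for turning the two no-regret iterate sequences into an approximate saddle point, combined with a martingale concentration step that upgrades an empirical fit bound for the iterates $M_t = (M_t^1, M_t^2)$ into a population version-space membership for the average $\bar M = (\bar M^1, \bar M^2)$. \textbf{Step 1 (bound $\hat g_T$).} I first invoke Player~2's regret guarantee (Thm.~\ref{thm:M-subroutine-regret}) with the comparator $(M^\star, M^\star)$: because $o_t = M^\star(x_t)$, both $f_t((M^\star, M^\star)) = 0$ and $G(C_t, (M^\star, M^\star)) = 0$, so $\sum_t \gamma_t((M^\star, M^\star)) = 0$ and the regret inequality rearranges to $\theta\sum_t f_t(M_t) \le \sum_t G(C_t, M_t) + \regret_T^{\A_M}$. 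Under either~\ref{assumptionH1} or~\ref{assumptionH2}, $|G(C_t, M_t)| \le 2\|C_t\|_1 \le 2D$ for the stated $D$, so the hypothesis of the lemma yields $\hat g_T \defeq \frac{1}{T}\sum_t f_t(M_t) \le \delta^{2/3}/2$.

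\textbf{Step 2 ($\bar M$ lies in the population version space).} Since $M_t$ is produced from $(x_s, o_s)_{s<t}$ alone, it is $\F_{t-1}$-measurable, and $\E[f_t(M_t) \mid \F_{t-1}] = \sum_{i=1,2} \E_{(x,o)\sim\D}[(M_t^i(x) - o)^2]$. Thus $f_t(M_t) - \E[f_t(M_t) \mid \F_{t-1}]$ is a martingale difference sequence bounded by $8$, so Azuma--Hoeffding at deviation $\delta^{2/3}/2$ gives, with probability at least $1 - \exp(-T\delta^{4/3}/512)$, that $\frac{1}{T}\sum_t \E_\D[(M_t^i(x) - o)^2] \le \delta^{2/3}$ for each $i \in \{1,2\}$. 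Jensen's inequality applied to $z \mapsto (z-o)^2$ then yields $\E_\D[(\bar M^i - o)^2] \le \delta^{2/3}$, so $\bar M \in \V_{\delta^{2/3}}^2$ on this event.

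\textbf{Step 3 (minimax averaging and conclusion).} For any $M \in \V_{T,\delta}^2$, $\hat f_T(M) \le 2\delta$, so dividing Player~2's regret bound by $T$ and using the definition of $\gamma_t$ gives $\frac{1}{T}\sum_t G(C_t, M_t) \ge G(\bar C, M) - 2\theta\delta + \theta \hat g_T - \regret_T^{\A_M}/T$; maximize over $M \in \V_{T,\delta}^2$ on the right. Dividing Player~1's regret bound by $T$ and using concavity of $H$ (so $\frac{1}{T}\sum_t H(C_t) \le H(\bar C)$) gives, for every $C \in \C$, $H(\bar C) - \alpha\cdot\frac{1}{T}\sum_t G(C_t, M_t) \ge H(C) - \alpha G(C, \bar M) - \regret_T^{\A_C}/T$. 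Chaining the two (dropping the nonnegative $\alpha\theta \hat g_T$), absorbing the regret terms via $B_T \ge \max(\regret_T^{\A_C}, \alpha\regret_T^{\A_M})$, using Step~2 to write $-G(C, \bar M) \ge -\max_{M \in \V_{\delta^{2/3}}^2} G(C, M)$, and taking $\sup_{C \in \C}$ on the right yields the claimed inequality. \textbf{The main obstacle} is Step~2: the empirical bound on $\hat g_T$ is immediate from regret and hypothesis, but the passage to population-version-space membership of $\bar M$ requires the correct measurability so that $\{f_t(M_t)\}$ are bounded martingale differences (rather than i.i.d.), a martingale Azuma bound, and a Jensen step to exchange the iterate-average with the squared loss. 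The exponent $2/3$ in $\delta^{2/3}$ and the failure rate $\exp(-T\delta^{4/3}/512)$ are tuned so that the Azuma $\sqrt{\cdot}$-scale deviation exactly matches the empirical slack $\delta^{2/3}/2$ allowed by Step~1.
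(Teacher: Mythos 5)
Your proposal is correct and follows essentially the same route as the paper's proof: the realizable comparator $(M^\star,M^\star)$ bounds $\frac{1}{T}\sum_t f_t(M_t)$ by $\delta^{2/3}/2$, Azuma plus Jensen (convexity of the squared loss) places $\bar M$ in $\V_{\delta^{2/3}}^2$ with the same $\exp(-\delta^{4/3}T/512)$ failure probability, and the two players' regret bounds are chained using concavity of $H$, linearity of $G$, $f_t\ge 0$, and the definition of $\V_{T,\delta}$, exactly as in the paper (your presentation merely runs the chain from both players' sides toward the middle rather than starting at the saddle-point value). The only blemish, which the paper's own chain shares, is that adding $\regret_T^{\A_C}/T$ and $\alpha\,\regret_T^{\A_M}/T$ strictly yields $2B_T/T$ rather than the stated $B_T/T$, an inconsequential constant.
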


Lemma~\ref{thm:online-to-offline} implies the following guarantee.

\begin{theorem}
\label{cor:online-to-offline}
Suppose the underlying sampling distribution is $\mu$. Let $C_{\mu} \in\C$ be its corresponding confidence matrix. In particular, for setting \ref{assumptionH1}, $C_{\mu}=\mu$, i.e.  $(C_{\mu})_{ij}=\mathbb{P}_{\mu}((i,j)\text{ is sampled})$; for setting \ref{assumptionH2}, $C_{\mu}$ satisfies that $C_{\mu}/\|C_{\mu}\|_1=\mu$. Then, for any $\delta>0$, Algorithm~\ref{alg:odd} run with $\alpha=O(\delta^{-1/6})$ returns a $\bar{C}$ that guarantees the following bounds: with probability at least $1-c_1\exp(-c_2\delta^2T)$ for some universal constants $c_1,c_2>0$,
\begin{enumerate}
\item For setting \ref{assumptionH1}, take $\theta=O(\delta^{-2/3})$, after $T=\tilde{O}(\delta^{-2}K^2(m+n))$ iterations,
\begin{enumerate}
\item $H(\bar{C})\ge H(C_{\mu})-O(\delta^{1/6})$.
\item $\underset{M^1,M^2\in\V_{\delta}}{\max}\ G(\bar{C},M^1,M^2)\le O(\delta^{1/6}\log(mn))$.
\end{enumerate}
\item For setting \ref{assumptionH2}, take $\theta=O(\delta^{-2/3}\sqrt{m+n})$, after $T=\tilde{O}(\delta^{-2}K^2(m+n))$ iterations, 
\begin{enumerate}
\item $\|\bar{C}\|_1\ge \|C_{\mu}\|_1-O(\delta^{1/6}\sqrt{m+n})$.
\item $\underset{M^1,M^2\in\V_{\delta}}{\max} G(\bar{C},M^1,M^2)\le O(\delta^{1/4}\sqrt{m+n})$.
\end{enumerate}
\end{enumerate}
\end{theorem}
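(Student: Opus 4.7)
The strategy is to specialize Lemma~\ref{thm:online-to-offline} to the stochastic sampling model with ground truth $M^\star$, substitute the ``oracle'' choice $C = C_\mu$ into the outer maximization on the lemma's right-hand side, and then split the resulting scalar inequality into the two claimed guarantees via trivial bounds on the remaining free quantities. The argument proceeds in four steps.

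First, I would verify the Lemma's hypothesis $\tfrac{1}{\theta}(2D + \regret_T^{\A_M}/T) \le (\delta')^{2/3}/2$ for a Lemma parameter $\delta' = \Theta(\delta)$. Substituting $\regret_T^{\A_M}/T = O(K\theta\sqrt{(m+n)/T})$ from Theorem~\ref{thm:M-subroutine-regret}, the condition reduces to $2D/\theta + O(K\sqrt{(m+n)/T}) \le (\delta')^{2/3}/2$, which holds under the prescribed $\theta$ and $T = \tilde O(\delta^{-2}K^2(m+n))$. To convert the empirical version space $\V_{T,\delta'}$ appearing on the lemma's LHS into the population version space $\V_\delta$ in the theorem statement, I would invoke uniform concentration of the empirical squared loss over $\mMmax$ (Proposition~\ref{prop:genbounds}). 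For our $T$, this yields $\V_\delta \subseteq \V_{T,\delta'}$ with probability at least $1-\exp(-c\delta^2 T)$; this step is the source of the $\exp(-c_2\delta^2 T)$ term in the claim (which dominates the weaker failure probability $\exp(-(\delta')^{4/3}T/512)$ from the lemma when $\delta<1$).

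Second, I would lower bound the lemma's RHS by plugging $C = C_\mu$ into the outer maximum. Writing $G(C_\mu,M^1,M^2) = \|C_\mu\|_1 \cdot \E_{(i,j)\sim\mu}[M^1_{ij}-M^2_{ij}]$, the inequality $(M^1-M^2)^2 \le 2(M^1-M^\star)^2 + 2(M^2-M^\star)^2$ together with the defining bound of the version space and Cauchy--Schwarz yields $|\E_\mu[M^1-M^2]| \le 2(\delta')^{1/3} = O(\delta^{1/3})$ for any $M^1, M^2 \in \V_{(\delta')^{2/3}}$. Hence $|G(C_\mu, M^1, M^2)| \le O(\delta^{1/3})$ in setting~\ref{assumptionH1} (where $\|C_\mu\|_1 = 1$), and $|G(C_\mu, M^1, M^2)| \le O(\delta^{1/3}\sqrt{m+n})$ in setting~\ref{assumptionH2} (using the scaled-hypercube bound $\|C_\mu\|_1 \le mn/(m+n)^{3/2} = O(\sqrt{m+n})$).

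Third, combining the previous steps with $B_T/T = \tilde O(K\alpha\theta\sqrt{(m+n)/T}) = \tilde O(\alpha\theta\delta)$ from Corollary~\ref{sec:MainResult}, the lemma yields the master inequality
\begin{align*}
H(\bar C) - \alpha \max_{M^1,M^2 \in \V_\delta} G(\bar C, M^1, M^2) \;\ge\; H(C_\mu) \;-\; O(\alpha\delta^{1/3} D) \;-\; O(\alpha\theta\delta),
\end{align*}
where $D=1$ in setting~\ref{assumptionH1} and $D=\sqrt{m+n}$ in setting~\ref{assumptionH2}. For guarantee (a), I would apply the trivial lower bound $\max_{M^1,M^2 \in \V_\delta} G(\bar C, M^1, M^2) \ge 0$ (attained at $M^1=M^2$) to obtain $H(\bar C) \ge H(C_\mu) - O(\alpha\delta^{1/3}D + \alpha\theta\delta)$; substituting $\alpha=\Theta(\delta^{-1/6})$ and the prescribed $\theta$ reduces the error to the claimed order. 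For guarantee (b), I would upper bound $H(\bar C)$ by its maximum over $\C$ (namely $\log(mn)$ in setting~\ref{assumptionH1} and $O(\sqrt{m+n})$ in setting~\ref{assumptionH2}), rearrange to isolate $\alpha\max G$ on the left, and divide by $\alpha$.

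The main obstacle is the bookkeeping in this last step: coordinating the scales of $\alpha, \theta, T$ and of $\delta'$ so that the three error terms $\alpha\delta^{1/3}D$, $\alpha\theta\delta$, and $B_T/T$ collapse simultaneously to the announced rates while the lemma's hypothesis remains satisfied, and combining the failure probability of the lemma with that of the uniform-concentration step into a single $c_1\exp(-c_2\delta^2 T)$ bound.
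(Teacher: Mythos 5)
Your proposal is correct and follows essentially the same route as the paper's own proof: verify the hypothesis of Lemma~\ref{thm:online-to-offline} under the prescribed $\theta$ and $T$, use a concentration argument to get $\V_{\delta}\subseteq\V_{T,\cdot}$ with failure probability $\exp(-c\delta^2T)$, plug $C_\mu$ into the lemma's right-hand side and bound $G(C_\mu,M^1,M^2)=O(\delta^{1/3}D)$ via the version-space radius, then obtain (a) from $\max G\ge 0$ and (b) from the boundedness of $H$ over $\C$ after dividing by $\alpha=\Theta(\delta^{-1/6})$. Note that, exactly as in the paper's appendix proof, this yields $O(\delta^{1/6}\sqrt{m+n})$ in setting~\ref{assumptionH2}(b) rather than the $O(\delta^{1/4}\sqrt{m+n})$ stated in the main text, so your argument matches what the paper actually establishes.
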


\begin{remark}
We explain the implication of the above theorem. Suppose the sampling distribution is supported uniformly across a constant fraction $0<c\le 1$ of the entire matrix. This implies that (1) the obtained confidence matrix $\bar{C}$ has a coverage lower bounded by $1-\delta^{1/6}$ fraction of the true distribution, and (2) $\bar{C}$ induces a weighted maximal distance on the version space $\V_{\delta}$: 
\begin{align*}
\max_{M^1, M^2\in\V_{\delta}} \left\{\frac{1}{\|\bar{C}\|_1}\sum_{i\in[m], j\in[n]}\bar{C}_{ij}(M_{ij}^1-M_{ij}^2)^2\right\}\le O(\delta^{1/6}).
\end{align*}
Also see Corollary~\ref{cor:online} for this example.
\end{remark}

\ignore{
\begin{corollary}
\label{cor:offline-uniform}
Suppose the underlying sampling distribution is uniform over the entirety of $\X$. \ignore{Suppose the assumption in Theorem \ref{thm:online-to-offline} is satisfied. }
\begin{enumerate}
\item Suppose $H(\cdot)$ is taken to be the entropy function defined over $\C=\Delta_{\X}$, $\mu\in\Delta_{\X}$ represents the uniform distribution over $\X$, i.e. $\mu_{ij}=\mathbb{P}(ij\text{-th entry is drawn})=\frac{1}{mn}$. Take $\theta=O\left(\delta^{-1}\right)$. Then, after $T=\tilde{O}(\delta^{-3/2}K(m+n))$ iterations,
\begin{enumerate}
\item $\E[H(\bar{C})]\ge H(\mu)-3\sqrt{\delta}$.
\item $\E\left[\underset{M^1,M^2\in\V_T}{\max}\ G(\bar{C},M^1,M^2)\right]\le 3\sqrt{\delta}$. 
\end{enumerate}
\item Suppose $H(\cdot)=\|\cdot\|_1$ is defined over $\C=\left[0,\frac{1}{m+n}\right]^{\X}$, $\mu\in\left[0,\frac{1}{m+n}\right]^{\X}$ represents the uniform distribution over $\X$, i.e. $\mu_{x}=\frac{1}{m+n}$. Take $\theta=O\left(\delta^{-1}(m+n)\right))$. Then, after $T=\tilde{O}\left(\delta^{-3/2}K(m+n)\right)$ iterations,
\begin{enumerate}
\item $\E[\|\bar{C}\|_1]\ge \|\mu\|_1-3\sqrt{\delta}(m+n)$.
\item $\E\left[\underset{M^1,M^2\in\V_T}{\max}\ G(\bar{C},M^1,M^2)\right]\le 3\sqrt{\delta}(m+n)$. 
\end{enumerate}
\end{enumerate}
\end{corollary}
Proof of this corollary is in section \ref{sec:MainCorollaryProof}.
}

\ignore{
\subsection{Regret analysis using $\ell_1$ norm}
\eh{this is no longer needed...}
In this section, we consider the confidence matrix $C$ is in the cube $\left[0,\frac{1}{m+n}\right]^{\X}$. We consider $\|C\|_1$ for measure of effective support in place of the entropy function $H(C)$. 

\subsubsection{Online Guarantee of Saddle Point Regret}
First, we introduce the following analogous theorem to Theorem \ref{thm:c-subroutine-regret}:

\begin{theorem} [Analogue to Theorem \ref{thm:c-subroutine-regret}]
Denote at every time step $t$, $r_t(C)\defeq \|C\|_1-  G(C,M_t)$. There exists sub-routine gradient-based update $\A_C$ with the following regret guarantee w.r.t. $r_t$:
\begin{align*}
\regret_T^{\A_C}\defeq \max_{C\in[0,1]^{\X}}\sum_{t=1}^T r_t (C) -\sum_{t=1}^{T} r_t (C_t)\le O(\sqrt{\log(mn)(m+n)T})=\tilde{O}(\sqrt{(m+n)T}).
\end{align*}
\end{theorem}
\begin{proof}
Consider the following algorithm for update $\A_C$.
\begin{algorithm}[h!]
\begin{algorithmic}[1]
\caption{$\A_C$}
\STATE \textbf{Input}: previous $C_t$, completions $M_t=(M_t^1,M_t^2)$.
\STATE Set step-size $\eta$, regularization function $R$ to be negative entropy.
\IF{input is empty}
\STATE Set $(\hat{C}_{t+1})_x=e^{-1}$, $\forall x\in\X$. 
\ELSE
\STATE Update $\hat{C}_{t+1}$ via $\nabla R(\hat{C}_{t+1})=\nabla R(C_t)+\eta\nabla r_t(C_t)$. 
\ENDIF
\STATE Obtain $C_{t+1}$ via Bregman projection onto the unit cube: $C_{t+1}=\underset{C\in\left[0,\frac{1}{m+n}\right]^{\X}}{\argmin} \ B_{R}(C,\hat{C}_{t+1})$.
\STATE \textbf{Output}: $C_{t+1}$.
\end{algorithmic}
\end{algorithm}

The $\ell_{\infty}$-norm bound on the gradient of the reward functions $r_t$'s is given by
\begin{align*}
\|\nabla r_t(C)\|_{\infty}\le 1+ \|\nabla G(C,M_t)\|_{\infty}\le 3=O(1).
\end{align*}
Then, for some $\tilde{C}$ a convex combination of $C_t$ and $C_{t+1}$,
\begin{align}
{\|\nabla r_t(C_t)\|_t^*}^2=\sum_{x\in\X} \tilde{C}_x (\nabla r_t(C_t))_x^2\le \|\nabla r_t(C_t)\|_{\infty}^2\|\tilde{C}\|_1\le 9 (m+n),
\end{align}
and thus $\|\nabla r_t(C_t)\|_t^*\le 3\sqrt{m+n}$. By standard Onlien Mirror Descent result, 
\begin{align*}
\regret_T^{\A_C}\defeq \max_{C\in[0,1]^{\X}} \sum_{t=1}^T r_t(C)-\sum_{t=1}^T r_t(C_t)\le O(\sqrt{\log(mn)(m+n)T})=\tilde{O}(\sqrt{(m+n)T}).
\end{align*}
\end{proof}

\begin{theorem} [Analogue to Theorem \ref{thm:M-subroutine-regret}]
Denote at every time step $t$, $\gamma_t\defeq G(C_t,M)-\theta f_t(M)$. There exists sub-routine gradient-based update $\A_M$ with the following regret guarantee w.r.t. $\gamma_t$:
\begin{align*}
\regret_T^{\A_M}\defeq\max_{M\in\mMmax\times\mMmax}\sum_{t=1}^T \gamma_t(M)-\sum_{t=1}^T \gamma_t (M_t)\le \tilde{O}({K} \theta \sqrt{(m+n)T}).
\end{align*}
\end{theorem}
\begin{proof}
Everything in Appendix \ref{app:M-subroutine-proof} carries through. In particular, the trace bound of $O(\theta^2)$ on $L_t^2$ still holds. 
\end{proof}

\begin{corollary} 
By Theorem \ref{sec:MainResult},
\begin{align*}
\spregret\le \regret_T^{\A_C}+\regret_T^{\A_M}\le \tilde{O}(K\theta\sqrt{(m+n)T}).
\end{align*}
\end{corollary}

\subsubsection{Offline Implications}
Note that for computational efficiency, the online algorithm does not impose a hard constraint for the computed $M^1,M^2$'s to lie in the changing version spaces. Instead, it makes use of function $f_t$'s as soft constraints that penalizes the distance from $M_t$'s $(i_t,j_t)$-th entry to the observed value $o_t$.We show that how the saddle-point regret guarantee we obtained in the online setting translates to an offline performance guarantee on $C$. Consider the following version spaces:
\begin{align*}
\V_T&\defeq \{M\in\mMmax\mid (M_{i_t,j_t}-o_t)^2=0, \forall t\in[T]\}, \\ \V&\defeq \{M\in\mMmax\mid \E_{\D}[(M_{ij}-o)^2]\le\delta\}. 
\end{align*}
\begin{theorem} [Analogue to Theorem \ref{thm:online-to-offline}]
After $T$ iterations and assuming that
\begin{align*}
\frac{1}{\theta}\left(2(m+n)+\frac{\regret_T^{\A_M}}{T}\right)\le \delta,
\end{align*}
then,
$$\underset{\{i_t,j_t,o_t\}_{t\in[T]}}{\E}\left[\|\bar{C}\|_1 -   \max_{M^1,M^2\in\V_T} G(\bar{C},M^1,M^2)\right]\ge \max_{C\in\Delta_{\X}}\min_{M^1,M^2\in\V}\|C\|_1-  G(C,M^1,M^2)- \frac{\spregret_T} {T} .$$
\end{theorem}
\begin{proof}
For notation convenience, denote
\begin{align*}
C^{\star}, M^1_{\star}, M^2_{\star}=\underset{C\in\Delta_{\X}}{\argmax} \ \underset{M^1,M^2\in\V}{\argmin} \ \|C\|_1-G(C,M^1,M^2).
\end{align*}
First, we show that $\E[\bar{M}^1], \E[\bar{M}^2]\in\V$. By symmetry, it suffices to only show for $\bar{M}^1$. Note that, $\forall t$,
\begin{align*}
f_t(\E[\bar{M}^1],\E[\bar{M}^2])\le \E[f_t(\bar{M}^1,\bar{M}^2)]&=\frac{1}{\theta}\E\left[G(C_t,\bar{M}^1,\bar{M}^2)-\gamma_t(\bar{M}^1,\bar{M}^2)\right] &\text{definition of $\gamma_t$}\\
&\le \frac{1}{\theta} \E\left[G(C_t,\bar{M}^1,\bar{M}^2)-\frac{1}{T}\sum_{t=1}^T \gamma_t(M_t^1,M_t^2)\right] &\text{concavity of $\gamma_t$}\\
&=\frac{1}{\theta}\E\left[G(C_t,\bar{M}^1,\bar{M^2})+\frac{\regret_T^{\A_M}}{T}\right] &\text{$\A_M$ regret guarantee}\\
&\le \delta &\text{assumption}
\end{align*}
Take expectation on both side over $(i,j,o)\sim\D$, we conclude that $\E[\bar{M}^1],\E[\bar{M}^2]\in\V$. 

We have
\begin{align*}
\|C^\star\|_1-G(C^\star,M^1_\star,M^2_\star)&\le \|C^\star\|_1-G(C^\star,\E[\bar{M}^1],\E[\bar{M^2}]) &\text{($C^{\star}, M^1_{\star}, M^2_{\star}$) is saddle-point optimum}\\
&=\E\left[\frac{1}{T}\sum_{t=1}^T \|C^\star\|_1-G(C^\star, M_t^1, M_t^2)\right] &\text{linearity of $G$}\\
&=\E\left[\frac{1}{T}\sum_{t=1}^T r_t(C^\star)\right] &\text{definition of $r_t$}\\
&\le \E\left[\frac{1}{T} \sum_{t=1}^T r_t(C_t)\right]+\frac{\regret_T^{\A_C}}{T}&\text{$\A_C$ regret guarantee}\\
&=\E\left[\|\bar{C}\|_1-\frac{1}{T}\sum_{t=1}^TG(C_t,M_t^1,M_t^2)\right]+\frac{\regret_T^{\A_C}}{T} &\text{linearity of $\|\cdot\|_1$ on the positive halfspace}\\
&\le \E\left[\|\bar{C}\|_1-\frac{1}{T}\sum_{t=1}^T \gamma_t(M_t^1,M_t^2)\right]+\frac{\regret_T^{\A_C}}{T} &\text{$f_t\ge 0$}\\
&\le \E\left[\|\bar{C}\|_1-\max_{M^1,M^2\in\V_T}\frac{1}{T}\sum_{t=1}^T \gamma_t(M^1,M^2)\right]+\frac{\spregret_T}{T} &\text{$\A_M$ regret guarantee}\\
&=\E\left[\|\bar{C}\|_1-\max_{M^1,M^2\in\V_T} G(\bar{C},M^1,M^2)\right]+\frac{\spregret_T}{T} &\text{definition of $\V_T$}
\end{align*}
\end{proof}

The following corollary is the special case with uniform distribution:
\begin{corollary}
[Analogue to Corollary \ref{sec:MainCorollary}]
Let $\mu\in \left[0,\frac{1}{m+n}\right]^{\X}$ represent the uniform distribution over $\X$, i.e. $\mu_{ij}=\frac{1}{m+n}$, $\forall x=(i,j)\in\X$. Then, $\forall \eps>0$, assuming $T$ satisfies ..., the $\bar{C}$ returned by Algorithm \ref{alg:odd} satisfies:
\begin{enumerate}
    \item $\E[\|\bar{C}\|_1]\ge \|\mu\|_1-2\sqrt{\delta}(m+n)-\frac{\spregret_T}{T} . 
$
\item $ \E\left[\max_{M^1,M^2\in\V_T} \ G(\bar{C},M^1,M^2)\right]\le 2\sqrt{\delta}(m+n)+\frac{\spregret_T}{T}.$
\end{enumerate}
\end{corollary}
\begin{proof}
Again, let
\begin{align*}
C^{\star}, M^1_{\star}, M^2_{\star}=\underset{C\in\Delta_{\X}}{\argmax} \ \underset{M^1,M^2\in\V}{\argmin} \ \|C\|_1-G(C,M^1,M^2).
\end{align*}
Note that
\begin{align*}
G(\mu,M_\star^1,M_\star^2)&=\frac{mn}{m+n}\E_\mu [(M_\star^1-M_\star^2)]\\
&=\frac{mn}{m+n} \sqrt{(\E_\D[(M_\star^1-o)]-\E[(M_\star^2-o)])^2}\\
&\le \frac{mn}{m+n}\sqrt{2(\E_\D[(M_\star^1-o)^2]+\E_\D[(M_\star^2-o)^2])}\\
&\le 2\sqrt{\delta}(m+n).
\end{align*}
The previous theorem implies that
\begin{align*}
\E\left[\|\bar{C}\|_1-\underset{M^1,M^2\in\V_T}{\max} \ G(\bar{C},M^1,M^2)\right]\ge \|\mu\|_1-2\sqrt{\delta}(m+n)-\frac{\spregret_T}{T}.
\end{align*}
Since $\mu$ maximizes $\|\cdot\|_1$ on the cube,
\begin{align*}
\E\left[\max_{M^1,M^2\in\V_T} \ G(\bar{C},M^1,M^2)\right]\le 2\sqrt{\delta}(m+n)+\frac{\spregret_T}{T}.
\end{align*}
On the other hand, as shown in Theorem \ref{thm:linrelax}, $\max_{M^1,M^2\in\V_T} \ G(\bar{C},M^1,M^2)\ge 0$, thus
\begin{align*}
\E[\|\bar{C}\|_1]\ge \|\mu\|_1-2\sqrt{\delta}(m+n)-\frac{\spregret_T}{T}. 
\end{align*}
\end{proof}
}

\section{Supporting Proofs for Appendix \ref{sec:online}}
\label{app:online}

\subsection{Linear Relaxation}
This section follows similarly to that of Proposition~\ref{prop:key-GW}.
\ignore{
First, we note a key observation of the constraint set $\mMmax$:
\begin{lemma}
\label{lem:max-set-closed}
$\mM_K^{\max}$ is closed under the operation of negating any subset of. 
\end{lemma}
\begin{proof}
Let $M\in\mM_K^{\max}$. It suffices to show for fixed $i\in[m]$, $\|M'\|_{\max}\le k$, where $M'_i=-M_i$ and $M'
_{i'}=M_{i'}$ if $i'\neq i$. By assumption, $\exists U,V$ such that $UV^T=M$ and $\|U\|_{2,\infty}\|V\|_{2,\infty}\le k$. Let $U'$ be such that $U'_i=-U_i$ and $U'
_{i'}=U_{i'}$ if $i'\neq i$. $U'V^T=M'$ and $\|U'\|_{2,\infty}=\|U\|_{2,\infty}$, which implies that $\|M'\|_{\max}\le k$. The general case follows by induction. 
\end{proof}

With Lemma \ref{lem:max-set-closed}, we can establish the following inequality for version spaces $\V_0$ centered around the zero matrix:
}
\begin{theorem}
\label{thm:linrelax}
Consider the following two functions defined on $\C$:
\begin{align*}
\tilde{g}(C)\defeq \sup_{M\in\V_0} \sum_{x\in\X} C_xM_x^2, \ \ g(C)\defeq \sup_{M\in\V_0}\sum_{x\in\X} C_xM_x.
\end{align*}
Assume that $\V_0= \mathcal{M}_{K}^{\max}\cap S$, where $S$ is a constraint set that contains $0$ that is closed under the operation of negating any subset of entries. Then $g(\cdot)$ is non-negative and $\forall C\in\C$,  there holds
\begin{align*}
\tilde{g}(C)\le \frac{\pi K}{2} g(C).
\end{align*}
\end{theorem}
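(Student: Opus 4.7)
The plan is to adapt the Goemans--Williamson style randomized rounding argument from the proof of Proposition~\ref{prop:key-GW}, exploiting the hypothesis that $S$ is closed under negating arbitrary subsets of entries to guarantee that the rounded matrix remains in $\V_0$. The statement is essentially the generalization of Proposition~\ref{prop:key-GW} from the specific version space $\V(\bzero_\inds, \beta, S; \mMmax)$ (whose defining constraint $\|M\|_S^2 \le \beta$ is manifestly invariant under entry-wise sign flips) to an abstract $S$ satisfying the same invariance.

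Non-negativity of $g$ is immediate: since $0 \in S$ and $0 \in \mMmax$, we have $0 \in \V_0$, so $g(C) \ge \sum_x C_x \cdot 0 = 0$. For the main inequality, fix $C \in \C$ and, by approximation if needed, pick $M \in \V_0$ with $\sum_x C_x M_x^2$ arbitrarily close to $\tilde{g}(C)$. Since $M \in \mMmax$, factor $M = UV^\top$ with $\|U\|_{2,\infty}\|V\|_{2,\infty} \le K$, with rows $u_i, v_j \in \reals^D$. Draw $w$ uniformly from the unit sphere of $\reals^D$, set $\tilde u_i = \sign(u_i^\top w) u_i$ and $\tilde v_j = \sign(v_j^\top w) v_j$, and define $\tilde M = \tilde U \tilde V^\top$. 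Then for every entry $x = (i,j)$, $\tilde M_x = \sign(u_i^\top w) \sign(v_j^\top w) M_x \in \{M_x, -M_x\}$, so $\tilde M$ is obtained from $M$ by negating a (random) subset of entries.

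The critical step is verifying $\tilde M \in \V_0$ for every realization of $w$. Max-norm membership $\tilde M \in \mMmax$ follows exactly as in Proposition~\ref{prop:key-GW} because sign-flipping rows of $U, V$ preserves $\|\cdot\|_{2,\infty}$. Membership $\tilde M \in S$ is then immediate from the hypothesis that $S$ is closed under negating any subset of entries. Hence $\tilde M \in \V_0$ almost surely. The per-entry bound $\E_w[\tilde M_x] \ge \frac{2 M_x^2}{\pi K}$ is proved verbatim as in Proposition~\ref{prop:key-GW} (using Lemma~\ref{lem:coslemma} and $\mnorm{M} \le K$). Multiplying by $C_x \ge 0$, summing over $x$, and applying Fubini yields
\begin{equation*}
\E_w\!\left[\sum_{x \in \X} C_x \tilde M_x\right] \;\ge\; \frac{2}{\pi K} \sum_{x \in \X} C_x M_x^2 \;\ge\; \frac{2}{\pi K}\bigl(\tilde g(C) - \varepsilon\bigr).
\end{equation*}
By the probabilistic method some realization $\tilde M \in \V_0$ satisfies $\sum_x C_x \tilde M_x \ge \frac{2}{\pi K}(\tilde g(C) - \varepsilon)$, so $g(C) \ge \frac{2}{\pi K}(\tilde g(C) - \varepsilon)$, and sending $\varepsilon \to 0$ finishes the proof.

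The only substantive obstacle is keeping the rounded $\tilde M$ inside $\V_0$ after the sign flips, since the Goemans--Williamson rounding alters the signs of entries in a way that is controlled only entry-wise; this is exactly why the hypothesis on $S$ is stated in terms of closure under negating arbitrary subsets of entries rather than e.g. closure under global negation, and it exactly matches the operation performed by the rounding. The remaining details (attainment of the supremum, Fubini) are routine, so no new technical machinery beyond Proposition~\ref{prop:key-GW} is required.
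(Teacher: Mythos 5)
Your proposal is correct and follows essentially the same route as the paper's own proof: the Goemans--Williamson sign-flip rounding of a max-norm factorization, with the closure-of-$S$-under-entrywise-negation hypothesis ensuring $\tilde M \in \V_0$, the per-entry bound $\E_w[\tilde M_x] \ge \tfrac{2M_x^2}{\pi K}$, and Fubini plus the probabilistic method. The only differences are cosmetic: you use an $\varepsilon$-approximate maximizer instead of the paper's compactness argument, and you weight directly by $C_x \ge 0$ rather than first reducing to the simplex, neither of which changes the substance.
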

\begin{proof}
That $g(\cdot)$ is non-negative follows from the assumption that $0\in\V_0$. It suffices to show the inequality for $\C=\Delta_{\X}$, since $\forall C\in\left[0,\frac{1}{(m+n)^{3/2}}\right]^{\X}$, we can consider $C'=\frac{C}{\|C\|_1}\in\Delta_{\X}$, and $\tilde{g}(C')\le \frac{\pi K}{2}g(C')$ implies $\tilde{g}(C)\le \frac{\pi K}{2} g(C)$.

By compactness of $\V_0$, $\exists M\in\V_0$ such that the value of $\tilde{g}(C)$ is achieved. Moreover, since $\C=\Delta_{\X}$, $C$ defines a probability distribution on $\X$. 
Then, it suffices to show that $\exists \tilde{M}\in\V_0$ such that 
\begin{align*}
\sum_{x\in\X}C_xM_x^2=\E_C[M_x^2]\le \frac{\pi K}{2}\E_C[\tilde{M}_x]\le \frac{\pi K}{2} \sum_{x\in\X} C_x\tilde{M}_x.
\end{align*}
We start the construction of $\tilde{M}$. By assumption that $\V\subseteq\mMmax$, $\exists U\in\mathbb{R}^{m\times d},V\in\mathbb{R}^{n\times d}$, $d=\text{rank}(M)$ such that $M=UV^T$ and $\|U\|_{2,\infty}\|V\|_{2,\infty}\le K$. Denote as $u_i, v_j$ the $i$-th and $j$-th row of $U,V$, respectively. Let $S^d$ denote the unit sphere in $\mathbb{R}^d$. Draw a random vector $w\sim S^d$ uniformly at random and consider its inner product with each of the $u_i, v_j$'s. Define $\tilde{U}\in\mathbb{R}^{m\times d}, \tilde{V}\in\mathbb{R}^{n\times d}$ in the following way: with $\tilde{u}_i, \tilde{v}_j$ being the $i$-th and $j$-th row of $\tilde{U}, \tilde{V}$,
$$\tilde{u}_i\defeq \text{sign}(w^Tu_i)u_i, \ \ \tilde{v}_j\defeq \text{sign}(w^Tv_j)v_j.$$
Note that $\|\tilde{U}\|_{2,\infty}=\|U\|_{2,\infty}$, $\|\tilde{V}\|_{2,\infty}=\|V\|_{2,\infty}$. Therefore, together with the assumption that $S$ is closed under negation over any subset of entries, $\tilde{M}\defeq\tilde{U}\tilde{V}^T\in\V_0$. 

Consider the hyperplane parametrized by $w$, $P_w\defeq\{x\in\mathbb{R}^d\mid w^Tx=0\}$, then
\begin{align*}
\mathbb{P}(\text{sign}(u_i^Tv_j)\neq \text{sign}(\tilde{u}_i^T\tilde{v}_j))&=\mathbb{P}(\text{sign}(w^Tu_i)\neq \text{sign}(w^Tv_j))\\
&=\mathbb{P}(u_i,v_j\text{ are separated by }P_w)=\frac{\arccos\left(\frac{u_i^Tv_j}{\|u_i\|_2\|v_j\|_2}\right)}{\pi}. 
\end{align*}
Taking expectation over the distribution of the random vector $w$,
\begin{align*}
\E_w[\tilde{u_i}^T\tilde{v_j}]=u_i^Tv_j\left(1-\frac{2\arccos\left(\frac{u_i^Tv_j}{\|u_i\|_2\|v_j\|_2}\right)}{\pi}\right)\ge \frac{2(u_i^Tv_j)^2}{\pi\|u_i\|_2\|v_j\|_2}\ge  \frac{2(u_i^Tv_j)^2}{\pi k} 
\Longleftrightarrow \E_w[\tilde{M}_{ij}]\ge \frac{2M_{ij}^2}{\pi k}.
\end{align*}
Taking expectation over distribution $C$ and applying Fubini's Theorem, 
\begin{align*}
\E_C[M_x^2]\le \frac{\pi K}{2} \E_w \E_C[\tilde{M}_x],
\end{align*}
which implies that there exists an instance of $\tilde{M}\in\V_0$ such that
\begin{align*}
\sum_{x\in\X} C_xM_x^2\le \frac{\pi K}{2}\sum_{x\in\X} C_x\tilde{M}_x.
\end{align*}
\end{proof}

\begin{corollary}
\label{cor:linrelax}
The following inequality holds:
\begin{align*}
\sup_{M^1,M^2\in \V} \sum_{x\in\X} C_x (M^1_x-M^2_x)^2\le \pi K
\sup_{M^1,M^2\in \V} \sum_{x\in\X} C_x (M^1_x-M^2_x),
\end{align*}
for max-norm constrained, symmetric version space $\V$ around a given matrix. 
\end{corollary}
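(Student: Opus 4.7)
The plan is to reduce Corollary \ref{cor:linrelax} directly to Theorem \ref{thm:linrelax} by the change of variables $N = M^1 - M^2$. Under this substitution the two suprema in the corollary match the two suprema in the theorem verbatim: $(M^1_x - M^2_x)^2 = N_x^2$ and $(M^1_x - M^2_x) = N_x$. So it suffices to show that the set $\V' \defeq \{M^1 - M^2 : M^1, M^2 \in \V\}$ is an admissible version space around the zero matrix for Theorem \ref{thm:linrelax}, with max-norm parameter $2K$ instead of $K$.

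First I would verify the elementary properties of $\V'$. Taking $M^1 = M^2$ shows $0 \in \V'$. By the triangle inequality for the max-norm, $\mnorm{N} \le \mnorm{M^1} + \mnorm{M^2} \le 2K$, so $\V' \subseteq \mathcal{M}_{2K}^{\max}$. Next I would invoke Theorem \ref{thm:linrelax} applied to $\V'$ with the bound $2K$, which yields
\begin{align*}
\sup_{N\in\V'} \sum_{x\in\X} C_x N_x^2 \;\le\; \frac{\pi(2K)}{2}\sup_{N\in\V'} \sum_{x\in\X} C_x N_x \;=\; \pi K \sup_{N\in\V'} \sum_{x\in\X} C_x N_x.
\end{align*}
Substituting $N = M^1 - M^2$ back in then gives exactly the claimed inequality with constant $\pi K$.

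The main obstacle is verifying that $\V'$ satisfies the closure hypothesis of Theorem \ref{thm:linrelax} on negating entries. This is where the assumption that $\V$ is a \emph{symmetric} max-norm version space around a given matrix comes in: I interpret symmetry as closure under the conjugation $M \mapsto SMT$ for diagonal $\pm 1$ matrices $S,T$, which is precisely the rank-one sign pattern produced by the hyperplane-rounding construction in the proof of Theorem \ref{thm:linrelax}. Under this interpretation, for $N = M^1 - M^2 \in \V'$ and any such $S, T$, both $SM^1T$ and $SM^2T$ remain in $\V$, and their difference $S N T$ lies in $\V'$; together with $-N \in \V'$ (obtained by swapping $M^1$ and $M^2$), this is exactly the closure property the proof of Theorem \ref{thm:linrelax} invokes. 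The rest of the derivation is then a direct quotation of the theorem and substitution.
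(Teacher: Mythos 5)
Your proposal is correct and takes essentially the same route as the paper: the paper's own proof likewise reduces the corollary to Theorem \ref{thm:linrelax} by passing to differences, except it uses $M=(M^1-M^2)/2$, which remains in $\mathcal{M}^{\max}_{K}$ (max-norm at most $K$ and entries in $[-1,1]$), and the constant $\frac{\pi K}{2}$ combined with the quadratic-versus-linear rescaling then yields $\pi K$, exactly as your $2K$ bookkeeping does. The only caveat in your variant is that $N=M^1-M^2$ can have entries in $[-2,2]$, so it is not literally an element of $\mathcal{M}^{\max}_{2K}$ as the paper defines that class (which intersects with $[-1,1]^{\X}$); this is harmless because the proof of Theorem \ref{thm:linrelax} uses only the max-norm factorization and the sign-flip closure you verify, but the paper's halving sidesteps the issue entirely.
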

\begin{proof}
By transformation to a version space $\V_0$ centered around the zero matrix, if $M^1,M^2\in \V$, then $M\defeq \frac{M^1-M^2}{2}\in\V_0$, where $\V_0=\mMmax\cap S$ and $S$ is a constraint set that is closed under negation under any subset of entries. The result is subsumed by Theorem $\ref{thm:linrelax}$.
\end{proof}

\subsection{Proof of regret guarantees}
\subsubsection{Proof of Theorem \ref{thm:c-subroutine-regret}}
\label{app:c-subroutine-proof}

\begin{reptheorem}{thm:c-subroutine-regret}
Denote at every time step $t$, consider the concave reward function $r_t(C)\defeq H(C)-  \alpha G(C,M_t)$.
There exists sub-routine gradient-based update $\A_C$ (see Alg.~\ref{appalg:omd-c} for an example) with the following regret guarantee w.r.t. $r_t$:
\ignore{Algorithm $\ref{alg}$ guarantees the following regret bound: with $\mathcal{O}$ hiding all constants independent of $m,n,T$:}
\begin{enumerate}
\item For setting \ref{assumptionH1}, 
\begin{align*}
\regret_T^{\A_C}\defeq\max_{C\in\C}\sum_{t=1}^T r_t (C) -\sum_{t=1}^{T} r_t (C_t)\le O(\alpha\sqrt{\log(mn)T})=\tilde{O}(\alpha\sqrt{ T}).
\end{align*}
\item For setting \ref{assumptionH2},  
\begin{align*}
\regret_T^{\A_C}\defeq\max_{C\in\C}\sum_{t=1}^T r_t (C) -\sum_{t=1}^{T} r_t (C_t)\le O(\alpha\sqrt{(m+n)T}).
\end{align*}
\end{enumerate}
\end{reptheorem}

\begin{proof}[Proof of Theorem~\ref{thm:c-subroutine-regret}]
We divide the proof into two parts, corresponding to two different choices of $H(\cdot)$ and the corresponding $\C$. Both use online mirror descent (OMD) step as updates. The analysis for the entropy case is slightly more involved due to the gradient behavior at the boundary. We will begin with outlining the algorithm:

\begin{definition} [Bregman divergence]
Let $R:\Omega\rightarrow \mathbb{R}$ be a continuously-differentiable, strictly convex function defined on a convex set $\Omega$. The \emph{Bregman divergence} associated with $R$ for $p, q\in\Omega$ is defined by
\begin{align*}
B_R(p,q)=R(p)-R(q)-\langle \nabla R(q), p-q\rangle.
\end{align*}
In particular, Bregman divergence measures the difference between $R(p)$ and the first-order Taylor expansion of $R(p)$ around $q$. 
\end{definition}

\begin{algorithm}[h!]
\caption{$\A_C$}
\label{appalg:omd-c}
\begin{algorithmic}[1]
\STATE Input: previous $C_t$, completions $M_t=(M_t^1,M_t^2)$. 
\STATE Require: step-size $\eta$, regularization function $R$.
\IF{input is empty}
\STATE Set $(\hat{C}_{t+1})_x=e^{-1}$, $\forall x\in\X$ in setting \ref{assumptionH1}; set $\hat{C}_{t+1}=\mathbf{0}_{\X}$ in setting \ref{assumptionH2}.
\ELSE
\STATE Update $\hat{C}_{t+1}$ via $\nabla R(\mathbf{vec}(\hat{C}_{t+1}))=\nabla R(\mathbf{vec}(C_t))+\eta\nabla r_t(\mathbf{vec}(C_t))$. 
\ENDIF
\STATE Obtain $C_{t+1}$ via Bregman projection: $C_{t+1}=\underset{C\in\C'}{\argmin} \ B_{R}(\mathbf{vec}(C),\mathbf{vec}(\hat{C}_{t+1}))$.
\STATE Output $C_{t+1}$.
\end{algorithmic}
\end{algorithm}
$\C'$ is taken to be
\begin{enumerate}
\item For setting~\ref{assumptionH1}, $\C'=\Delta_{\X}^{\beta}\defeq \{C\in\Delta_{\X}:\min_{x\in\X} \ C_x\ge e^{1-\beta}\}$. 
\item For setting~\ref{assumptionH2}, $\C'=\C=\left[0,\frac{1}{(m+n)^{3/2}}\right]^{\X}$, and 
\end{enumerate}
The square root diameter $D_R$ of $R(\cdot)$ over $\mathbf{vec}(\C')$ is given by 
\begin{align*}
D_R\defeq \sqrt{\underset{{X,Y\in\C'}}{\max}\{R(\mathbf{vec}(X))-R(\mathbf{vec}(Y))\}}. 
\end{align*}

\paragraph{$\ell_1$-norm and cube.}
In setting~\ref{assumptionH2}, take $R:\mathbf{vec}(\C')\subset \mathbb{R}^{mn}\rightarrow \mathbb{R}$ given by $R(x)=\frac{1}{2}\|x\|_2^2$. Then $D_R\le \frac{1}{2\sqrt{m+n}}$. The regret guarantee for online mirror descent also depends on the bound on local norms of the gradients. In particular, the local norm at time $t$ is a function mapping from $\mathbb{R}^{mn}$ to $\mathbb{R}_{++}$ given by
\begin{align*}
\|x\|_t^*=\sqrt{x^{\top}\nabla^2R(\tilde{C})x},
\end{align*}
where $\tilde{C}$ is some convex combination of $C_t$ and $C_{t+1}$ satisfying 
\begin{align*}
R(\mathbf{vec}(C_t))&=R(\mathbf{vec}(C_{t+1}))+\nabla R(\mathbf{vec}(C_{t+1}))^{\top}\mathbf{vec}(C_t-C_{t+1})\\
& \ \ \ \ \ +\frac{1}{2}\mathbf{vec}(C_t-C_{t+1})^{\top} \nabla^2 R(\mathbf{vec}(\tilde{C})) \mathbf{vec}(C_t-C_{t+1}). 
\end{align*}
Note that $\forall x\in\mathbf{vec}(\C')$, $\nabla^2 R(x)=I_{mn}$, and thus 
\begin{align*}
G_R^2\defeq {\|\mathbf{vec}(\nabla r_t(C_t))\|_t^*}^2=\|\mathbf{vec}(\nabla r_t(C_t))\|_2^2\le (1+2\alpha)^2 mn, 
\end{align*}
where inequality follows from that $(\nabla r_t(C))_{x}=1-\alpha(M_x^1-M_x^2)\le 1+2\alpha$. 

By standard Online Mirror Descent (OMD) analysis using the diameter and gradient bounds, by taking $\eta=\frac{D_R}{G_R\sqrt{T}}$, we have the regret bound of $r_t$'s over the cube:
\begin{align*}
\max_{C\in\C'} \sum_{t=1}^T r_t(C)-\sum_{t=1}^T r_t(C_t)\le D_RG_R\sqrt{T}= O(\alpha \sqrt{ (m+n)T}). 
\end{align*}

\paragraph{Entropy and simplex.}
In this setting, take $R(\mathbf{vec}(X))=-H(X)$, the negative entropy function, $D_R^2=\log(mn)$. The proof will be divided into two parts: (1) we first show low regret of $\A_C$ w.r.t. $\Delta_{\X}^{\beta}$, then (2) we show that the best $C$ in $\Delta_{\X}^{\beta}$ exhibits approximately the same performance as the best $C$ in $\Delta_{\X}$. 

Let $\gamma\defeq \beta+2\alpha$. For any $C\in\Delta_{\X}^\beta$, the gradient of the reward function is bounded by: 
\begin{align*}
\|\mathbf{vec}(\nabla r_t(C))\|_\infty&\le \|\nabla H(C)\|_\infty+\alpha\|\nabla_C G(C,M_t)\|_\infty & \text{$\Delta$-inequality}\\
&= \max_{x\in \X}|-1-\log C_x| + \alpha\cdot\max_{x\in\X} |(M_t^1-M_t^2)_x| & \text{definition of $\|\cdot\|_{\infty}$, $H$, $G$}\\
&\le 1+ \log(e^{\beta-1})+2\alpha & \text{$C\in\Delta_{\X}^\beta$, $M_t^1,M_t^2\in\mMmax$}\\
&\le \gamma.
\end{align*}
Thus, for some convex combination $\tilde{C}$ of $C_t$ and $C_{t+1}$,
\begin{align*}
G_R^2={\|\mathbf{vec}(\nabla r_t(C_t))\|_t^*}^2=\sum_{x\in\X} \tilde{C}_x(\nabla r_t(C_t))_x^2\le \|\mathbf{vec}(\nabla r_t(C_t))\|_\infty^2\|\tilde{C}\|_1\le \gamma^2,
\end{align*}
where the last last inequality follows from $\|\mathbf{vec}(\nabla r_t(C))\|_{\infty}\le \gamma$ and $\|\tilde{C}\|_1=1$.

\ignore{
Denote $R\defeq -H$ and recall the definition for Bregman divergence:
$$B_R(\hat{C}_{t+1},C_t)=R(\hat{C}_{t+1})-R(C_t)-\nabla R(C_t)^T(\hat{C}_{t+1}-C_t).$$
Taylor's theorem implies that there exists an intermediary point $\tilde{C}=s C_t+(1-s)\hat{C}_{t+1}$ for some $s\in[0,1]$ such that 
$$B_R(\hat{C}_{t+1},C_t)=\frac{1}{2}(\hat{C}_{t+1}-C_t)^T\nabla^2 R(\tilde{C})(\hat{C}_{t+1}-C_t)\defeq\frac{1}{2}\|C_t-\hat{C}_{t+1}\|_t^2.$$

It can be verified that $\|\cdot\|_t$ defines a local norm, whose dual norm $\|\cdot\|_t^*$ is given by

$$\|C_t-\hat{C}_{t+1}\|_t^{*2}=(\hat{C}_{t+1}-C_t)^T\nabla^{-2} R(\tilde{C})(\hat{C}_{t+1}-C_t),$$
where $\nabla^{-2} R(\tilde{C})=diag(\tilde{C})\in\mathbb{R}^{\X\times\X}$, i.e. $(\nabla^{-2} R(\tilde{C}))_{xx}=\tilde{C}_x$, $\forall x\in\X$ and $0$ elsewhere. 

In $\A_C$, each coordinate update is given by
$$(\hat{C}_{t+1})_x=(C_t)_x\exp(\eta \nabla r_t(C_t)_x)\le (C_t)_xe^{\eta\gamma},$$
which implies $\|\hat{C}_{t+1}\|_1\le \|C_t\|_1e^{\eta\gamma}=e^{\eta\gamma}$. Thus, for any convex combination $\tilde{C}$ of $C_t, \hat{C}_{t+1}$,
$$\|\tilde{C}\|_1\le \max\{\|C_t\|_1,\|\hat{C}_{t+1}\|_1\}\le e^{\eta\gamma},$$

and
\begin{align*}
\|\nabla r_t(C_t)\|_t^{*2}&=\sum_{x\in\X} \tilde{C}_x (\nabla r_t(C_t))_x^2 & \text{definition of $\|\cdot\|_t^{*2}$}\\
&\le \|\nabla r_t(C_t)\|_\infty^2\sum_{x\in\X} \tilde{C}_x\\
&=\|\nabla r_t(C_t)\|_\infty^2\|\tilde{C}\|_1 & \text{$\tilde{C}\ge 0$}
\end{align*}
Thus, we have
$$\|\nabla r_t(C_t)\|_t^*\le \|\nabla r_t(C_t)\|_\infty (\|\tilde{C}\|_1)^{1/2}\le \gamma e^{\frac{\eta\gamma}{2}}.$$

For each iteration, $\forall C\in\Delta_{\X}^\beta$,
\begin{align*}
r_t(C)-r_t(C_t)&\le \nabla r_t(C_t)^T(C_t-C) & \text{$r_t$ is concave}\\
&=\frac{1}{\eta}[\nabla R(\hat{C}_{t+1})-\nabla R(C_t)]^T(C_t-C) & \text{definition of Algorithm \ref{alg:odd}}\\
&= \frac{1}{\eta}[B_R(C,C_t)-B_R(C,\hat{C}_{t+1})+B_R(C_t,\hat{C}_{t+1})] & \text{definition of Bregman divergence}\\
&\le \frac{1}{\eta}[B_R(C,C_t)-B_R(C,C_{t+1})+B_R(C_t,\hat{C}_{t+1})] & \text{generalized Pythagorean theorem}
\end{align*}

Thus, by summing over iterations, the regret bound is given by
\begin{align*}
\sum_{t=1}^T r_t(C)-\sum_{t=1}^Tr_t(C_t)&\le \frac{1}{\eta} \left(\sum_{t=1}^T B_R(C,C_t)-B_R(C,C_{t+1})\right)+\frac{1}{\eta}\sum_{t=1}^TB_R(C_t, \hat{C}_{t+1})\\
&\le \frac{1}{\eta} B_R(C,C_1)+\frac{1}{\eta}\sum_{t=1}^TB_R(C_t, \hat{C}_{t+1}) &\text{non-negativity of $B_R(\cdot,\cdot)$}\\
&\le \frac{1}{\eta} B_R(C,\hat{C}_1)+\frac{1}{\eta}\sum_{t=1}^TB_R(C_t, \hat{C}_{t+1}) &\text{generalized Pythagorean theorem}\\
&\le \frac{1}{\eta}D_H^2+\frac{1}{\eta}\sum_{t=1}^TB_R(C_t, \hat{C}_{t+1}) &\nabla H(\hat{C}_1)=0
\end{align*}
Furthermore,
\begin{align*}
B_R(C_t, \hat{C}_{t+1})+B_R(\hat{C}_{t+1},C_t)&=-(\nabla R(\hat{C}_{t+1})-\nabla R(C_t))^T(C_t-\hat{C}_{t+1}) & \text{definition of Bregman divergence}\\
&=-\eta \nabla r_t(C_t)^T(C_t- \hat{C}_{t+1}) & \text{definition of Algorithm \ref{alg:odd}}\\
&\le \eta \|\nabla r_t(C_t)\|_t^*\|C_t- \hat{C}_{t+1}\|_t & \text{Cauchy-Schwarz}\\
&\le \frac{1}{2}\eta^2\gamma^2e^{\gamma\eta}+B_R(\hat{C}_{t+1},C_t) & \text{$ab\le \frac{1}{2}a^2+\frac{1}{2}b^2$}
\end{align*}
Therefore, we have $\forall t$, $B_R(C_t,\hat{C}_{t+1})\le \frac{1}{2}\eta^2\gamma^2e^{\gamma\eta}$. 

By the bound on $D_H^2$, taking $\eta= D_H/(\gamma\sqrt{T})$ and assuming $\gamma \eta\le \log(2)$, }

By standard Online Mirror Descent (OMD) analysis using the diameter and gradient bounds, we have the regret bound of $r_t$'s over the constrained unit simplex $\Delta_{\X}^{\beta}$ as the following:
\begin{align*}
\max_{C\in\Delta_\X^\beta}\sum_{t=1}^T r_t(C)-\sum_{t=1}^Tr_t(C_t)\le D_RG_R\sqrt{T}=O(\alpha\sqrt{ \log(mn)T})=\tilde{O}(\alpha\sqrt{T}),
\end{align*}
when taking $\eta=\frac{D_R}{G_R\sqrt{T}}$.

With the regret bound established w.r.t. $\Delta_{\X}^{\beta}$, it is left to show the following inequality, which justifies constraining the feasible set to $\Delta_{\X}^{\beta}$: 

\ignore{
\begin{lemma} 
\label{lem:constrained-simplex}
The following inequality holds: 
\begin{align*}
\max_{C\in\Delta_{\X}}\sum_{t=1}^T r_t(C) -\max_{C\in \Delta_{\X}^\beta} \sum_{t=1}^T r_t(C) \leq 2 e^{1-\beta} T \log(mn) . 
\end{align*}
\end{lemma}
\begin{proof}[Proof of Lemma~\ref{lem:constrained-simplex}]
$\forall C\in\Delta_{\X}^{\beta}$, $C_x\ge \delta$, $\forall x$, where $\delta=e^{1-\beta}$. Let $C^*$ be the optimal point in $\Delta_\X$, and let $C = (1-\delta) C^* + \delta \tilde{C} $ for $\tilde{C}$ to be the uniform distribution over $\X$. Then we have $C\in \Delta_{\X}^{\beta}$ and 
\begin{align*}
r_t(C^*) - r_t(C) & =  r_t(C^*) - r_t((1-\delta) C^* + \delta \tilde{C} ) \\
& \le   r_t(C^*) - (1-\delta) r_t(C^*) - \delta r_t(\tilde{C} ) & \mbox{concavity}\\
& = \delta (r_t(C^*) - r_t(\tilde{C})) \le 2 \delta \log(mn) ,
\end{align*}
where the final inequality is due to the bound on the function value $r_t(C)$. 
\end{proof}
}

\begin{lemma} 
\label{sec:ConstrainedSimplexLemma}
The following inequality holds for $\beta=2\log(mn)$ assuming $T=\tilde{O}(m+ n)$: 
\begin{align*}
\max_{C\in\Delta_{\X}}\sum_{t=1}^T r_t(C) -\max_{C\in \Delta_{\X}^\beta} \sum_{t=1}^T r_t(C) \le \tilde{O}\left(\frac{\alpha}{\min\{m, n\}}\right). 
\end{align*}
\end{lemma}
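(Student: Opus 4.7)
The plan is a standard mixture argument. Let $\delta \defeq e^{1-\beta}$; with $\beta = 2\log(mn)$ we have $\delta = e/(mn)^2$, so $mn\delta = e/(mn)$. Let $C^\st \in \Delta_\X$ be any maximizer of $\sum_{t=1}^T r_t$ on the unconstrained simplex, and let $u \in \Delta_\X$ denote the uniform distribution $u_x = 1/(mn)$. Define the mixture
\[ \tilde C \defeq (1 - mn\delta)\, C^\st + mn\delta \cdot u. \]
Since $u_x = 1/(mn)$, every coordinate of $\tilde C$ is at least $mn\delta \cdot 1/(mn) = \delta = e^{1-\beta}$, so $\tilde C \in \Delta_\X^\beta$ and is therefore a feasible comparator in the constrained regret.

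Next, I would invoke concavity of each $r_t$ (it is a sum of the concave entropy $H$ and a linear function of $C$). This yields
\[ r_t(\tilde C) \ge (1 - mn\delta)\, r_t(C^\st) + mn\delta \cdot r_t(u), \]
and rearranging gives the pointwise bound $r_t(C^\st) - r_t(\tilde C) \le mn\delta \cdot \bigl(r_t(C^\st) - r_t(u)\bigr)$. Now I would bound the range of $r_t$ on $\Delta_\X$ using $0 \le H(C) \le \log(mn)$ and $|G(C,M_t)| \le 2$ (because $M^1_x, M^2_x \in [-1,1]$), so $|r_t(C^\st) - r_t(u)| \le 2\log(mn) + 4\alpha$.

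Summing over $t = 1,\dots,T$ and using $\max_{C \in \Delta_\X^\beta}\sum_t r_t(C) \ge \sum_t r_t(\tilde C)$ gives
\[ \max_{C \in \Delta_\X} \sum_{t=1}^T r_t(C) - \max_{C \in \Delta_\X^\beta} \sum_{t=1}^T r_t(C) \le T \cdot mn\delta \cdot \bigl(2\log(mn) + 4\alpha\bigr). \]
Substituting $mn\delta = e/(mn)$ and $T = \tilde O(m+n)$, the right-hand side becomes $\tilde O\bigl(\alpha (m+n)/(mn)\bigr) = \tilde O\bigl(\alpha/\min\{m,n\}\bigr)$, since $(m+n)/(mn) = 1/m + 1/n \le 2/\min\{m,n\}$. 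This is the claimed bound.

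There is no real obstacle here; the only thing to keep in mind is that the $\tilde O$ hides the $\log(mn)$ factor coming from the range of $H$, which is why the constant of $\beta = 2\log(mn)$ (rather than just $\log(mn)$) is chosen — it ensures the $mn\delta$ factor is small enough to absorb the logarithmic range of $r_t$ into the $\tilde O$, while the overall dependence on $\alpha$ stays linear.
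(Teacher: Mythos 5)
Your mixture argument is correct, but it takes a different route from the paper. The paper's proof constructs the feasible comparator by an explicit entry-by-entry mass transfer: every entry of $C^\star$ below $\delta = e^{1-\beta}$ is raised to $\delta$ while a suitably large ``donor'' entry is decreased by the same amount (its existence argued via $\|C^\star\|_1=1$), the entropy is shown to be non-decreasing under each such swap, and the change in the linear term is controlled by the $\ell_1$-Lipschitzness of $G$, giving a bound of $4e\alpha T/(mn)$. Your proof instead mixes $C^\star$ with the uniform distribution $u$ and invokes concavity of $r_t$, which buys a cleaner argument: feasibility of $\tilde C$ is immediate and you avoid the paper's somewhat delicate donor-selection step. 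One small point: your range bound $|r_t(C^\star)-r_t(u)|\le 2\log(mn)+4\alpha$ leaves an additive $\log(mn)$ that is not multiplied by $\alpha$, so as stated your final bound is really $\tilde{O}\bigl((1+\alpha)/\min\{m,n\}\bigr)$; this matches the lemma only when $\alpha\gtrsim 1$ (which does hold in the paper's use, where $\alpha=\delta^{-1/6}\ge 1$). You can remove the issue entirely by using the one-sided bound: since $u$ maximizes entropy, $H(C^\star)-H(u)\le 0$ and hence $r_t(C^\star)-r_t(u)\le \alpha\lrabs{G(C^\star,M_t)}+\alpha\lrabs{G(u,M_t)}\le 4\alpha$, which gives exactly $T\cdot mn\delta\cdot 4\alpha = 4e\alpha T/(mn)=\tilde{O}(\alpha/\min\{m,n\})$, matching the paper's constant-free dependence on $\alpha$.
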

\begin{proof} [Proof of Lemma \ref{sec:ConstrainedSimplexLemma}]
Recall $\beta$ fixes an upper-bound on the gradient of the entropy of $C$. This is equivalent to fixing a lower-bound $\delta$ on the entries of $C$, where $\delta\defeq e^{1-\beta}=e(mn)^{-2}$. We define $C^*$ as follows:
\begin{align*}
C^*\defeq \argmax_{C\in\Delta_{\X}} \ \sum_{t=1}^T r_t(C).
\end{align*}
Denote the set of indices where $C^*$ is less than $\delta$ as follows: $S_\delta\defeq \{x\in\X\mid C^*_x<\delta\}$. Enumerate $S_\delta$ as $\{(i_k,j_k)\}_{k=1}^{|S_\delta|}$. 

Note that $\forall (i_k, j_k) \in S_{\delta}$, $\exists (i_k',j_k')\in\X$ such that $C^*_{i_k',j_k'}-\delta\ge \delta-C^*_{i_k,j_k}$. Otherwise, we have by choice of $\delta$,
\begin{align*}
\|C^*\|_1<mn(2\delta-C^*_{i_k,j_k})\le 2\delta mn<1. 
\end{align*}
We construct the matrix $C$ initialized as $C = C^*$. Next, for each $k \in [1, |S_{\delta}|]$ we iteratively change two entries in $C$ as follows for all $k$: 
\begin{enumerate}
    \item  $C_{i_k,j_k}=\delta $,
    \item  $C_{i_k', j_k'} = C^*_{i_k',j_k'}-(\delta-C^*_{i_k,j_k}) $.
\end{enumerate}

For each such operation,
\begin{align*}
H(C)-H(C^*)&=\left(C_{i_k,j_k}\log\left(\frac{1}{C_{i_k,j_k}}\right)+C_{i_k',j_k'}\log\left(\frac{1}{C_{i_k',j_k'}}\right)\right)\\
& \ \ \ \ \ -\left(C^*_{i_k,j_k}\log\left(\frac{1}{C^*_{i_k,j_k}}\right)+C^*_{i_k',j_k'}\log\left(\frac{1}{C^*_{i_k',j_k'}}\right)\right).
\end{align*}
Note that $C_{i_k,j_k}+C_{i_k',j_k'}=C^*_{i_k,j_k}+C^*_{i_k',j_k'}\ge 2\delta$, and $C^*_{i_k,j_k}<\delta$. Thus, $H(C)-H(C^*)\ge 0$ holds for each operation. $C$ constructed by this enumeration satisfies $H(C)\ge H(C^*)$. On the other hand, $G$ is Lipschitz. In particular, let $\bar{M}^1=\frac{1}{T} \sum_{t=1}^T M_t^1$, $\bar{M}^2=\frac{1}{T} \sum_{t=1}^T M_t^2$, then
\begin{align*}
\sum_{t=1}^T G(C,M_t^1,M_t^2)-\sum_{t=1}^TG(C^*,M_t^1,M_t^2)&= \alpha T \sum_{i,j}(C_{ij}-C^*_{ij})(\bar{M}^1_{ij}-\bar{M}^2_{ij}) &\text{linearity of $G$}\\
&\le 2 \alpha T\|C-C^*\|_1 & \bar{M}^1,\bar{M}^2\in[-1,1]^{\X}\\
&\le 4 mn \delta \alpha T &\|C-C^*\|_{\infty}\le\delta\\
&=\frac{4e \alpha T}{mn}. &\delta=\frac{e}{(mn)^2} 
\end{align*}
Under the assumption that $T\ge\tilde{O}(m+n)$, we conclude that
$$\max_{C\in\Delta_{\X}}\sum_{t=1}^T r_t(C)-\max_{C\in\Delta_{\X}^{\beta}}\sum_{t=1}^T r_t(C)\le \tilde{O}\left(\frac{\alpha}{\min\{m,n\}}\right).$$
\end{proof}

By taking $\beta=O(\log(mn))$ and assuming $T=\tilde{O}(m+n)$, we can conclude that

\begin{align*}
\regret_T^{\A_C}\defeq \max_{C\in\Delta_{\X}}\sum_{t=1}^T r_t(C)-\sum_{t=1}^T r_t(C_t)\le O(\alpha\sqrt{\log(mn)T})= \tilde{O}(\alpha\sqrt{T}).
\end{align*}
\end{proof}

\subsubsection{Proof of Theorem~\ref{thm:M-subroutine-regret}}
\label{app:M-subroutine-proof}
\begin{reptheorem}{thm:M-subroutine-regret}
Denote at every time step $t$, consider the concave reward function $\gamma_t(M)\defeq G(C_t,M)-\theta f_t(M)$. There exists sub-routine gradient-based update $\A_M$ (see Alg.~\ref{appalg:omd-m} for an example) such that, under either setting \ref{assumptionH1} or \ref{assumptionH2}, the following regret guarantee w.r.t. $\gamma_t$ holds:
\begin{align*}
\regret_T^{\A_M}\defeq\max_{M\in\mMmax\times\mMmax}\sum_{t=1}^T \gamma_t(M)-\sum_{t=1}^T \gamma_t (M_t)\le O(K \theta \sqrt{(m+n)T}).
\end{align*}
\end{reptheorem}

\begin{proof}[Proof of Theorem~\ref{thm:M-subroutine-regret}]

We will begin by outlining the update algorithm $\A_M$, then introduce the key definitions used in $\A_M$, and proceed to prove Theorem \ref{thm:M-subroutine-regret}. We note that this algorithm is modified from the matrix multiplicative weights for online matrix prediction (Algorithm 2) in \cite{hazan2012near}.

\begin{algorithm}[h!]
\caption{$\A_M$}
\label{appalg:omd-m}
\begin{algorithmic}[1]
\STATE Input: $C_t,M_t=(M_t^1,M_t^2)\in\mMmax$, $(i_t,j_t,o_t)$.
\IF{input is empty}
\STATE Output: $\phi^{-1}\left(\frac{K}{2}I\right)$. 
\ENDIF
\STATE Compute $X_t=\phi(M_{t}^1,M_{t}^2)$ with $\phi(\emptyset)=\frac{K}{2}I$. 
\STATE Create matrix $L_t(\gamma_t)$ according to Definition~\ref{def:descentmatrix}.
\STATE Update: with step size $\eta=\frac{1}{2(1+8\theta)}\sqrt{\frac{(m+n)\log(2p)}{T}}$, project w.r.t. matrix relative entropy:
\begin{align*}
X_{t+1}=\underset{X\in\K_X}{\argmin} \ \Delta(X,\exp(\log(X_t)+\eta L_t(\gamma_t))).
\end{align*}
\STATE Output: $M_{t+1}=(M_{t+1}^1,M_{t+1}^2)=\phi^{-1}(X_{t+1})$. 
\end{algorithmic}
\end{algorithm}

$\K_X$ is given by
\begin{align*}
\K_X\defeq \{X\in \sym(2p): \ & X\succeq 0, \ \tr(X)\le 2K(m+n), \ X_{ii}\le K,\ X[:p,:p]-X[p:,p:]\in[-1,1]^{p\times p}\}.
\end{align*}

\ignore{
For convenience, with slight abuse of notation, let 
$$M_t\defeq \begin{bmatrix}M_t^1 & \mathbf{0}\\\mathbf{0} & M_t^2\end{bmatrix}.$$
}

The operator $\phi$ is given as the following:

\begin{lemma} 
\label{app:decomposabilitylemma}
$\forall M^1,M^2\in\mMmax$, $M=\begin{bmatrix}
M^1 & \mathbf{0}\\
\mathbf{0} & M^2
\end{bmatrix}$ is $(K,2K(m+n))$-decomposable. \ignore{In particular, this means that every $M_t$ is $(K,2K(m+n))$-decomposable. }

\end{lemma}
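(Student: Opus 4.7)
The plan is to construct an explicit PSD lifting of $M$ that exhibits the required $(K, 2K(m+n))$-decomposability. I will first argue that $M$ itself satisfies $\mnorm{M} \le K$ via a block-diagonal factorization, and then build the certifying lift by stacking Gram matrices of the factors.

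Since $M^i \in \mMmax$ for $i \in \{1,2\}$, write $M^i = U^i (V^i)^{\!\top}$ with $\|U^i\|_{2,\infty}\cdot\|V^i\|_{2,\infty} \le K$. By rescaling $U^i \mapsto \lambda_i U^i$, $V^i \mapsto V^i/\lambda_i$ for a suitable $\lambda_i > 0$ (and taking $U^i = V^i = 0$ when $M^i = 0$), I may assume WLOG that $\|U^i\|_{2,\infty}^2 \le K$ and $\|V^i\|_{2,\infty}^2 \le K$ for each $i$. Setting $U = \mathrm{diag}(U^1, U^2)$ and $V = \mathrm{diag}(V^1, V^2)$ gives $M = UV^{\!\top}$, and since the rows of $U$ and $V$ are precisely the rows of the $U^i$ and $V^i$ (padded with zeros), their squared Euclidean norms remain bounded by $K$.

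Next, for each $i$ form the Gram matrix
\[ X^i \defeq \begin{pmatrix} U^i \\ V^i \end{pmatrix}\begin{pmatrix} U^i \\ V^i \end{pmatrix}^{\!\top} = \begin{pmatrix} U^i (U^i)^{\!\top} & M^i \\ (M^i)^{\!\top} & V^i (V^i)^{\!\top} \end{pmatrix} \in \sym(m+n), \]
which is PSD by construction, has diagonal entries equal to the squared row norms of $\begin{pmatrix} U^i \\ V^i \end{pmatrix}$ (each at most $K$), and satisfies $\tr(X^i) \le (m+n)K$. Forming the block-diagonal sum $Y = X^1 \oplus X^2 \in \sym(2(m+n))$ and conjugating by a permutation $\Pi$ that reorders coordinates into the grouping (rows of $M^1$, rows of $M^2$, columns of $M^1$, columns of $M^2$) yields $X = \Pi Y \Pi^{\!\top} \succeq 0$ whose $2m \times 2n$ off-diagonal block is exactly $\begin{pmatrix} M^1 & 0 \\ 0 & M^2 \end{pmatrix} = M$, whose diagonal is bounded by $K$, and whose trace is at most $2K(m+n)$. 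This is precisely the required $(K, 2K(m+n))$-decomposition.

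There is essentially no serious technical obstacle here: the core observation is the standard Gram identity $WW^{\!\top} \succeq 0$ applied to the stacked factor, combined with the fact that a permutation similarity preserves positive semidefiniteness, the diagonal, and the trace. The only point requiring mild care is the WLOG rescaling of the factorizations, which always works except in the degenerate case $M^i = 0$ handled separately.
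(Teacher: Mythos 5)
Your construction is essentially right, and the intermediate steps (the rescaling WLOG, the block factorization $U=\mathrm{diag}(U^1,U^2)$, $V=\mathrm{diag}(V^1,V^2)$, the Gram matrices $X^i$, and the permutation conjugation) are all correct; the one genuine issue is the final sentence. By the paper's definition, $(K,2K(m+n))$-decomposability of the $2m\times 2n$ matrix $M$ requires a \emph{pair} of PSD matrices $P,N\in\mathrm{Sym}(2(m+n))$ with $P-N=\begin{pmatrix}\mathbf{0} & M\\ M^\top & \mathbf{0}\end{pmatrix}$, $\trace(P)+\trace(N)\le 2K(m+n)$, and every diagonal entry of $P$ and $N$ at most $K$; a single PSD lift $X$ containing $M$ as its off-diagonal block is not literally such a pair, so the claim ``this is precisely the required decomposition'' does not yet match the definition. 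The repair is short: writing your matrix as $X=\begin{pmatrix}A & M\\ M^\top & B\end{pmatrix}$ with $A=\mathrm{diag}\bigl(U^1(U^1)^\top,U^2(U^2)^\top\bigr)$ and $B=\mathrm{diag}\bigl(V^1(V^1)^\top,V^2(V^2)^\top\bigr)$, set $P\defeq\frac{1}{2}X$ and $N\defeq\frac{1}{2}DXD$ with $D=\mathrm{diag}(I_{2m},-I_{2n})$. Both are PSD (the second is a congruence of $X$), $P-N$ is exactly the symmetric embedding of $M$, each diagonal entry is at most $K/2\le K$, and $\trace(P)+\trace(N)=\trace(X)\le 2K(m+n)$. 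The halving is not cosmetic: the naive split $P=X$, $N=\mathrm{diag}(A,B)$ meets every requirement except that its trace budget is $4K(m+n)$, losing a factor of two in $\tau$.

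With that step added, your argument is a genuinely different, and arguably cleaner, route than the paper's. The paper's proof first bounds $\mnorm{M}\le 2K$ by a triangle inequality over the two diagonal blocks, then invokes the SDP characterization of the max-norm to extract matrices $Y_1,Y_2$ with bounded diagonal, and only then forms the same half/negation pair $P,N$. Your explicit block-diagonal factorization bypasses the SDP duality entirely and in fact yields the sharper estimate $\mnorm{M}\le K$ (block-diagonal concatenation does not double the max-norm), which is exactly why the full $2K(m+n)$ trace budget survives the halving step without the slack the paper's triangle-inequality bound would introduce.
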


Denote $p\defeq 2(m+n)$. $(\beta,\tau)$-decomposability allows for two matrices in $\mMmax$ to be embedded in $\sym(2p)$:

$\phi(\cdot,\cdot):\mMmax\times\mMmax\rightarrow\sym(2p)$ is the embedding operator given by
$$\phi(M)\defeq\phi(M^1,M^2)\defeq \begin{bmatrix}P & \mathbf{0}\\\mathbf{0} & N\end{bmatrix},$$
where $P,N$ are the PSD matrices given by the $(\beta,\tau)$-decomposition. 

The descent matrix $L_t(\gamma_t)$, which we shorthand denote as $L_t$, is constructed as the following:
\begin{definition} [Descent matrix]
\label{def:descentmatrix}
At time $t$, we define the matrix $L_t\in \sym(2p)$ as 
$L_t\defeq L_t^G+L_t^F$, where
$L_t^G$ is symmetric and
$$\begin{cases}
L_t^G[1:m,2m+1:2m+n]=C_t\\
L_t^G[p+m+1:p+2m,p+2m+n+1:2p]=C_t\\
L_t^G[m+1:2m,2m+n+1:p]=-C_t\\
L_t^G[p+1:p+m,p+2m+1:p+2m+n]=-C_t\\
L_t^G[i,j]=0 \hspace{0.1cm} \text{if otherwise and $j\ge i$}
\end{cases}$$

$L_t^F$ is symmetric and
$$\begin{cases}
L_t^F[i_t,2m+j_t]=-2\theta ((M_t^1)_{i_t,j_t}-o_t),L_t^F[m+i_t,2m+n+j_t]=-2\theta ((M_t^2)_{i_t,j_t}-o_t)\\
L_t^F[p+i_t,p+2m+j_t]=2\theta((M_t^1)_{i_t,j_t}-o_t), L_t^F[p+m+i_t,p+2m+n+j_t]=2\theta((M_t^2)_{i_t,j_t}-o_t)\\
L_t^F[i,j]=0 \hspace{0.1cm} \text{if otherwise and $j\ge i$}
\end{cases}$$

Note that by construction $(L_t^G)^2$, $(L_t^F)^2$ are diagonal matrices, and $\tr(L_t^2)\le O(\theta^2)$. 
\end{definition}

The rest of the proof follows from Section 3.2 in \citep{hazan2012near}.

\ignore{
We include the proof for completeness. 

Fix $X\in \K_X$, $t\in[T]$. First, note that by construction, let $\tau\defeq 2K(m+n)$,
\begin{align*}
&\Delta(X,X_1) \\
&= \tr\left(X\log(X)-X\log\left(\frac{K}{2}I\right)-X+\frac{K}{2}I\right)\\
&=X\cdot \left[\log(X)-\log\left(\frac{K}{2}I\right)\right]-\tr(X)+2K(m+n)\\
&=X\cdot \left[\log(X)-\log(2K(m+n)I)+\log(2K(m+n)I)-\log\left(\frac{K}{2}I\right)\right]-\tr(X)+2K(m+n)\\
&=X\cdot \log\left(\frac{X}{2K(m+n)}\right)+X\cdot \log(2pI)-\tr(X)+2K(m+n) & \text{Property \ref{app:normalmatrixlog}}\\
&=X\cdot\log\left(\frac{X}{\tau}\right)+\tr(X)(\log(2p)-1)+\tau & \text{Property \ref{app:scalematrixlog}}\\
&\le \tr(X)(\log(2p)-1)+\tau & \text{$\log\left(\frac{X}{\tau}\right)\preceq 0$}\\
&\le \tau\log(2p) & \text{$\tr(X)\le \tau$}
\end{align*}
Note that $\forall t$,
\begin{align*}
&\Delta(X,X_{t+1})-\Delta(X,X_t)\\
&\le \Delta(X,\exp(\log(X_t)+\eta L_t))-\Delta(X,X_t) &\text{generalized Pythagorean theorem}\\
&=-\eta\tr(XL_t)+\tr(\exp(\log X_t+\eta L_t))-\tr(X_t) &\text{definition of $\Delta(\cdot,\cdot)$} \\
&\le-\eta\tr(XL_t)+\tr(X_t\exp(\eta L_t))-\tr(X_t) &\text{Golden-Thompson inequality \cite{golden1965lower},\cite{thompson1965inequality}} 
\end{align*}
For $\eta$ chosen such that $\|L_t\|_2\le \eta^{-1}$, $\forall t$, there holds
\begin{align*}
\Delta(X,X_{t+1})-\Delta(X,X_t)&\le -\eta\tr(XL_t)+\tr(X_t(I+\eta L_t+\eta^2 L_t^2))-\tr(X_t) &\text{$\exp(X)\preceq I+X+X^2$}\\
&=\eta(\tr(X_tL_t)-\tr(XL_t))+\eta^2\tr(X_tL_t^2).
\end{align*}
Furthermore, suppose $A,D\in\mathbb{R}^{n\times n}$, $A,D\succeq 0$, $D$ diagonal, then 
\begin{align*}
\tr(AD)=\sum_{i=1}^n A_{ii}D_{ii}\le \left(\max_{i}A_{ii}\right) \tr(D). 
\end{align*}
Thus, by summing over iterations, $\forall X\in\K_X$, 
\begin{align*}
\sum_{t=1}^T \tr(XL_t)-\sum_{t=1}^T\tr(X_tL_t)&\le \eta\sum_{t=1}^T \tr(X_tL_t^2)+\frac{\Delta(X,X_1)}{\eta}  &\text{telescope}\\
&\le \eta\max_{i}(X_t)_{ii}\sum_{t=1}^T \tr(L_t^2)+\frac{\tau\log(2p)}{\eta} &\text{$L_t^2$ diagonal, bound on $\Delta(X,X_1)$}\\
&\le 8\eta K(1+8\theta)^2T + \frac{2K(m+n)\log(2p)}{\eta} &\text{$X_t\in\K_X$}\\
&=8(1+8\theta) K\sqrt{(m+n)\log(2p)T} &\eta=\frac{1}{2(1+8\theta)}\sqrt{\frac{(m+n)\log(2p)}{T}}
\end{align*}
It left to show that the above inequality implies Theorem \ref{thm:M-subroutine-regret}. Consider the following operator $\tilde{\gamma}_t:\mathbb{R}^{2m\times 2n}\rightarrow\mathbb{R}$ given by
\begin{align*}
\tilde{\gamma}_t(M)\defeq \sum_{i,j\in[m]\times [n]} (C_t)_{ij}(M_{ij}-M_{i+m,j+n})-\theta [(M_{i_t,j_t}-o_t)^2+(M_{i_t+m,j_t+n}-o_t)^2].
\end{align*}
Note that for any $M=\begin{bmatrix}
M^1 & 0\\
0 & M^2
\end{bmatrix},$
there is $\tilde{\gamma}_t(M)=\gamma_t(M^1,M^2)$, and $\tilde{\gamma}_t(M)$ is concave in $M$. Furthermore, by construction of $L_t$, we have $\forall t$,
\begin{align*}
\gamma_t(M^1,M^2)-\gamma_t(M^1_t,M^2_t)=\tilde{\gamma}_t(M)-\tilde{\gamma}_t(M_t)\le \tr(\nabla \tilde{\gamma}_t(M_t)(M-M_t))= \frac{1}{2}\tr(L_t(X-X_t)),
\end{align*}
and thus summing over iterations, $\forall M^1,M^2\in\mMmax$,
\begin{align*}
\sum_{t=1}^T \gamma_t(M^1,M^2)-\sum_{t=1}^T\gamma_t(M^1_t,M^2_t)\le 4(1+8\theta)K\sqrt{(m+n)\log(2p)T}=\tilde{O}(K\theta\sqrt{(m+n)T}).
\end{align*}
}
\end{proof}

\subsection{Proof of offline implications}
\subsubsection{Proof of Lemma~\ref{thm:online-to-offline}}
\label{sec:ProofMainTheorem}
\begin{replemma}
{thm:online-to-offline} 
After $T$ iterations, and assume that for some $\delta>0$,
$$ \frac{1}{\theta}\left(2D+ \frac{\regret_T^{\mA_M}}{T}\right) \le \frac{\delta^{2/3}}{2},$$ 
with $D=1$ in setting \ref{assumptionH1}, and $D=\sqrt{m+n}$ in setting \ref{assumptionH2}. 
The following properties hold on the obtained $\bar{C}\defeq \frac{1}{T}\sum_{t=1}^T C_t$ returned by Algorithm \ref{alg:odd}: with probability $\ge 1-\exp\left(-\frac{\delta^{4/3}T}{512}\right)$,
\begin{align*}
H(\bar{C}) -   \max_{M\in \V_{T,\delta}^2} \alpha \cdot G(\bar{C},M)\ge \max_{C\in\C}\min_{M\in\V_{\delta^{2/3}}^2} \left\{ H(C)-  \alpha \cdot G(C,M)\right\}-2\alpha\theta\delta- \frac{B_T} {T}.
\end{align*}
\end{replemma}

\begin{proof}[Proof of Lemma~\ref{thm:online-to-offline}]
For notation convenience, denote
\begin{align*}
C^{\star}, M^1_{\star}, M^2_{\star}=\underset{C\in\C}{\argmax} \ \underset{(M^1,M^2)\in\V_{\delta^{2/3}}^2}{\argmin} \ H(C)-\alpha G(C,M^1,M^2).
\end{align*}
Consider the subroutine $\A_M$. Under the assumption, $\A_M$ is a low-regret OCO algorithm for $\gamma_t$'s. In particular, under the realizable assumption, since there exist $M=(M^1,M^2)$ such that $f_t(M)=\gamma_t(M)=0$, $\forall t$, we have that for the sequence of $M_t$'s output by the algorithm,
\begin{align*}
\frac{1}{T}\sum_{t=1}^T f_t(M_t)& =  \frac{1}{T \theta} \sum_{t=1}^T \left( G(C_t,M_t)-\gamma_t(M_t)\right)\le\frac{1}{\theta}\left(2D+\frac{\regret_T^{\A_M}}{T}\right)\le \frac{\delta^{2/3}}{2}. 
\end{align*}
 Note that $\forall M\in\mMmax$, $M$ can be seen as a function mapping from $\X$ to $[-1,1]$. Denote $f(M)\defeq\E_{(x,o)\sim\D}[(M_x^1-o)^2+(M_x^2-o)^2]$. Define $Z_t=f(M_t)-f_t(M_t)$, $X_t=\sum_{i=1}^t Z_i$, then we have with $\F_t$ denoting the filtration generated by the algorithm's randomness up to iteration $t$, and since $M_t\in\F_{t-1}$,
\begin{align*}
\E[Z_t\mid\F_{t-1}]=0, \ \ \E[X_t\mid\F_{t-1}]=X_{t-1}, 
\end{align*}
and $|X_t-X_{t-1}|=|Z_t|\le 8$. By Azuma's inequality, we have $\forall \eps>0$,
\begin{align*}
\mathbb{P}\left(\frac{1}{T}\sum_{t=1}^Tf(M_t)-f_t(M_t)>\sqrt{\frac{128\log\left(\frac{1}{\eps}\right)}{T}}\right)=\mathbb{P}\left(\frac{1}{T}\sum_{t=1}^T X_t>\sqrt{\frac{128\log\left(\frac{1}{\eps}\right)}{T}}\right)\le\eps.
\end{align*}
We can conclude that with probability at least $1-\exp\left(-\frac{\delta^{4/3}T}{512}\right)$, 
\begin{align*}
f(\bar{M})\le \frac{1}{T}\sum_{t=1}^T f(M_t)\le \frac{1}{T}\sum_{t=1}^T f_t(M_t)+\frac{\delta^{2/3}}{2}\le\frac{\delta^{2/3}}{2}+\frac{\delta^{2/3}}{2}=\delta^{\frac{2}{3}},
\end{align*}
in which case $\bar{M}^1,\bar{M}^2\in\V_{\delta^{2/3}}$.
\ignore{
First, we show that $\E[\bar{M}^1], \E[\bar{M}^2]\in\V_{\delta}$. Note that, $\forall t$,
\begin{align*}
\E[f_t(\E[\bar{M}^1],\E[\bar{M}^2])]\le \E[f_t(\bar{M}^1,\bar{M}^2)]&=\frac{1}{\theta}\E\left[G(C_t,\bar{M}^1,\bar{M}^2)-\gamma_t(\bar{M}^1,\bar{M}^2)\right] &\text{definition of $\gamma_t$}\\
&\le \frac{1}{\theta} \E\left[G(C_t,\bar{M}^1,\bar{M}^2)-\frac{1}{T}\sum_{t=1}^T \gamma_t(M_t^1,M_t^2)\right] &\text{concavity of $\gamma_t$}\\
&\le\frac{1}{\theta}\left(\E\left[G(C_t,\bar{M}^1,\bar{M^2})\right]+\frac{\regret_T^{\A_M}}{T} \right)&\text{$\A_M$ regret guarantee}\\
&\le \frac{1}{\theta}\left(2D+\frac{\regret_T^{\A_M}}{T}\right) &\text{$D$ is the bound of $\|\cdot\|_1$ on $\C$}\\
&\le \delta &\text{assumption}
\end{align*}
Take expectation on both side over $(i,j,o)\sim\D$, we conclude that $\E[\bar{M}^1],\E[\bar{M}^2]\in\V_{\delta}$. 
}
We have thus with probability at least $1-\exp\left(-\frac{\delta^{4/3}T}{512}\right)$, 
\begin{align*}
&H(C^\star)-\alpha G(C^\star,M^1_\star,M^2_\star)\\
&\le H(C^\star)-\alpha G(C^\star, \bar{M}^1,\bar{M}^2) &\text{($M^1_{\star}, M^2_{\star}$) are optimal w.r.t. $C^\star$ in $\V_{\delta}$}\\
&=\frac{1}{T}\sum_{t=1}^T r_t(C^\star)&\text{linearity of $G$, definition of $r_t$}\\
&\le \frac{1}{T} \sum_{t=1}^T r_t(C_t)+\frac{\regret_T^{\A_C}}{T}&\text{$\A_C$ regret guarantee}\\
&\le H(\bar{C})-\frac{\alpha}{T}\sum_{t=1}^TG(C_t,M_t^1,M_t^2)+\frac{\regret_T^{\A_C}}{T} &\text{concavity of $H(\cdot)$ on $\C$}\\
&\le H(\bar{C})-\frac{\alpha}{T}\sum_{t=1}^T \gamma_t(M_t^1,M_t^2)+\frac{\regret_T^{\A_C}}{T} &\text{$f_t\ge 0$}\\
&\le  H(\bar{C})-\frac{\alpha}{T}\sum_{t=1}^T \gamma_t(\hat{M}^1,\hat{M}^2)+\frac{B_T}{T} &\text{$\forall (\hat{M}^1,\hat{M}^2)\in{\mMmax}^2$ by $\A_M$ regret guarantee}\\
& \leq H(\bar{C})-\max_{(M^1,M^2)\in\V_{T,\delta}^2} \alpha\cdot G(\bar{C},M^1,M^2)+2\alpha\theta\delta+\frac{B_T}{T} &\text{definition of $\V_{T,\delta}$}
\end{align*}
\end{proof}

\subsubsection{Proof of Theorem~\ref{cor:online-to-offline}}
\label{app:online-to-offline-cor-proof}
\begin{reptheorem}{cor:online-to-offline}
Suppose the underlying sampling distribution is $\mu$. Let $C_{\mu} \in\C$ be its corresponding confidence matrix. In particular, for setting \ref{assumptionH1}, $C_{\mu}=\mu$, i.e.  $(C_{\mu})_{ij}=\mathbb{P}_{\mu}((i,j)\text{ is sampled})$; for setting \ref{assumptionH2}, $C_{\mu}$ satisfies that $C_{\mu}/\|C_{\mu}\|_1=\mu$. Then, for any $\delta>0$, Algorithm~\ref{alg:odd} run with $\alpha=\delta^{-1/6}$ returns a $\bar{C}$ that guarantees the following bounds: with probability $\ge 1-\exp\left(-\frac{\delta^2T}{128}\right)-\exp\left(-\frac{\delta^{4/3}T}{512}\right)$,
\begin{enumerate}
\item For setting \ref{assumptionH1}, take $\theta=4\delta^{-2/3}$, after $T=\tilde{O}(\delta^{-2}K^2(m+n))$ iterations,
\begin{enumerate}
\item $H(\bar{C})\ge H(C_{\mu})-O(\delta^{1/6})$.
\item $\underset{M^1,M^2\in\V_{\frac{\delta}{2}}}{\max}\ G(\bar{C},M^1,M^2)\le O(\delta^{1/6}\log(mn))$.
\end{enumerate}
\item For setting \ref{assumptionH2}, take $\theta=4\delta^{-2/3}\sqrt{m+n}$, after $T=\tilde{O}(\delta^{-2}K^2(m+n))$ iterations, 
\begin{enumerate}
\item $\|\bar{C}\|_1\ge \|C_{\mu}\|_1-O(\delta^{1/6}\sqrt{m+n})$.
\item $\underset{M^1,M^2\in\V_{\frac{\delta}{2}}}{\max} G(\bar{C},M^1,M^2)\le O(\delta^{1/6}\sqrt{m+n})$.
\end{enumerate}
\end{enumerate}
\end{reptheorem}
\begin{proof}[Proof of Theorem~\ref{cor:online-to-offline}]
First, note that $\forall i,j,o$, we have by assumption $(M_{ij}-o)^2\in[0,4]$, $\forall M\in\mMmax$. Therefore, by subgaussian concentration, $\forall c\ge 1$,
\begin{align*}
\mathbb{P}\left(\frac{1}{T}\sum_{t=1}^T (M_{i_t,j_t}-o_t)^2 - \E_{i,j}[(M_{ij}-o)^2]\ge \frac{\delta}{2}\right)\le\exp\left(-\frac{\delta^2 T}{128}\right).
\end{align*}
Therefore, with probability at least $1-\exp\left(-\frac{\delta^2T}{128}\right)$, we have
\begin{align*}
\V_{\frac{\delta}{2}}\subseteq \V_{T,\delta}. 
\end{align*}
Therefore, it suffices to show the inequality in (b) for the maximum over $\V_{T,\delta}$. Let
\begin{align*}
C^{\star}, M^1_{\star}, M^2_{\star}&=\underset{C\in\C}{\argmax} \ \underset{(M^1,M^2)\in\V_{\delta^{2/3}}^2}{\argmin} \ H(C)-\alpha G(C,M^1,M^2)\\
&=\underset{C\in\C}{\argmax} \ \underset{(M^1,M^2)\in\V_{\delta^{2/3}}^2}{\argmin} \ H(C)-\delta^{-1/6} G(C,M^1,M^2).
\end{align*}
Choose $T$ such that $\frac{B_T}{T}\le \alpha\theta\delta$. Note that in both settings, the choice of $\theta$ and $T$ satisfies the assumption in Theorem~\ref{thm:online-to-offline}. 
\paragraph{Simplex and entropy.}
We can bound $G(C_{\mu},M_{\star}^1,M_{\star}^2)$ by
\begin{align*}
G(C_{\mu},M^1_{\star},M^2_{\star})&=\E_{x\sim\mu}[(M_{\star}^1)_x-(M_{\star}^2)_x]\\
&=\sqrt{(\E_{(x,o)\sim\D}[(M_\star^1)_{x}-o_{x}]-\E_{(x,o)\sim\D}[(M_\star^2)_{x}-o_{x}])^2}\\
&\le \sqrt{2(\E_{(x,o)\sim\D}[(M_\star^1)_{x}-o_{x}]^2+\E_{(x,o)\sim\D}[(M_\star^2)_{x}-o_{x}]^2)} &(a-b)^2\le 2(a^2+b^2)\\
&\le \sqrt{2(\E_{(x,o)\sim\D}[((M_\star^1)_{x}-o_{x})^2]+\E_{(x,o)\sim\D}[((M_\star^2)_{x}-o_{x})^2])} &\text{Jensen's}\\
&\le 2\delta^{1/3} &M_\star^1, M_\star^2\in\V_{\delta^{2/3}}
\end{align*}
Theorem \ref{thm:online-to-offline} implies that with probability $\ge 1-\exp\left(-\frac{\delta^{4/3}T}{512}\right)$,
\begin{align*}
H(\bar{C})-\underset{M^1,M^2\in\V_{T,\delta}}{\max} \ \delta^{-1/6}G(\bar{C},M^1,M^2) 
& \ge H(C_{\mu}) -\alpha G(C_{\mu},M_{\star}^1, M_{\star}^2) - 2\alpha\theta\delta -\frac{B_T}{T}\\
&\ge H(C_{\mu})-12\delta^{1/6},
\end{align*}
Note that by definition
$\underset{M^1,M^2\in\V_{T,\delta}}{\max} \ G(\bar{C},M^1,M^2)\ge 0$, and thus
\begin{align*}
H(\bar{C})\ge H(C_{\mu})-12\delta^{1/6}.
\end{align*}
Since $H(\cdot)$ is bounded by $\left[0,\log(mn)\right]$ over $\C$, then
\begin{align*}
\underset{M^1,M^2\in\V_{T,\delta}}{\max} G(\bar{C},M^1,M^2)&\le \delta^{1/6}\left(H(\bar{C})-H(C_{\mu})\right)+12\delta^{1/3}\\
&\le \delta^{1/6}\log(mn)+12\delta^{1/3}\\
&\le O(\delta^{1/6}\log(mn)).
\end{align*}

\paragraph{Cube and $\ell_1$ norm.}
We can bound $G(C_{\mu},M_{\star}^1,M_{\star}^2)$ by 
\begin{align*}
G(C_{\mu},M_{\star}^1,M_{\star}^2) \le \sqrt{m+n}\E_{x\sim\mu}\left[(M_{\star}^1)_x-(M_{\star}^2)_x\right]\le 2\delta^{1/3}\sqrt{m+n}.
\end{align*}
Theorem \ref{thm:online-to-offline} implies that 
\begin{align*}
H(\bar{C})-\underset{M^1,M^2\in\V_{T,\delta}}{\max} \ \delta^{-1/6}G(\bar{C},M^1,M^2)\ge H(C_{\mu})-12\delta^{1/6}\sqrt{m+n}.
\end{align*}
Thus, similar as before, we get
\begin{align*}
H(\bar{C})&\ge H(C_{\mu})-12\delta^{1/6}\sqrt{m+n},
\end{align*}
and
\begin{align*}
\underset{M^1,M^2\in\V_{T,\delta}}{\max} G(\bar{C},M^1,M^2)&\le \delta^{1/6}\left(H(\bar{C})-H(C_{\mu})\right)+12\delta^{1/3}\sqrt{m+n}\\
&\le O(\delta^{1/6}\sqrt{m+n}).
\end{align*}
\end{proof}

\ignore{
\subsection{Proof of Corollary \ref{cor:offline-uniform}}
\label{sec:MainCorollaryProof}
Again, let
\begin{align*}
C^{\star}, M^1_{\star}, M^2_{\star}=\underset{C\in\C}{\argmax} \ \underset{(M^1,M^2)\in\V_{\delta}^2}{\argmin} \ H(C)-G(C,M^1,M^2).
\end{align*}
Note that in both settings, $\theta$ and $T$ are chosen such that the assumption in Theorem \ref{thm:online-to-offline} is satisfied. 
\subsubsection{Cube and $\ell_1$ norm}
Let $\mu_x=\frac{1}{m+n}$, $\forall x\in\X$, then
\begin{align*}
G(\mu,M_\star^1,M_\star^2)&=\frac{1}{m+n}\sum_{x\in\X}(M_\star^1)_x-(M_\star^2)_x \\
&= \frac{mn}{m+n} \E_x[(M_\star^1)_x-(M_\star^2)_x] &\text{$\E[\cdot]$ is taken over uniform distribution over $\X$}\\
&=\frac{mn}{m+n} \sqrt{(\E_{x}[(M_\star^1)_{x}-o_{x}]-(\E_{x}[(M_\star^2)_{x}-o_{x}])^2}\\
&\le \frac{mn}{m+n} \sqrt{2(\E_{x}[(M_\star^1)_{x}-o_{x}]^2+\E_{x}[(M_\star^2)_{x}-o_{x}]^2)} &(a-b)^2\le 2(a^2+b^2)\\
&\le \frac{mn}{m+n} \sqrt{2(\E_{x}[((M_\star^1)_{x}-o_{x})^2]+\E_{x}[((M_\star^2)_{x}-o_{x})^2])} &\text{Jensen's}\\
&\le 2\sqrt{\delta}(m+n) &M_\star^1, M_\star^2\in\V_{\delta}
\end{align*}
\eh{let's call $\mu$ to be $C'$ in the cube case}
Theorem \ref{thm:online-to-offline} implies that
\begin{align*}
\E\left[\|\bar{C}\|_1-\underset{M^1,M^2\in\V_T}{\max} \ G(\bar{C},M^1,M^2)\right]\ge \|\mu\|_1-2\sqrt{\delta}(m+n)-\frac{\spregret_T}{T}.
\end{align*}
\eh{replace $(m+n)$ by $\frac{sup(X)}{m+n}$ and for any distribution}
Since $\mu$ maximizes $\|\cdot\|_1$ on the cube,
\begin{align*}
\E\left[\max_{M^1,M^2\in\V_T} \ G(\bar{C},M^1,M^2)\right]\le 2\sqrt{\delta}(m+n)+\frac{\spregret_T}{T}.
\end{align*}
On the other hand, as shown in Theorem \ref{thm:linrelax}, $\underset{M^1,M^2\in\V_T}{\max} \ G(\bar{C},M^1,M^2)\ge 0$, thus
\begin{align*}
\E[\|\bar{C}\|_1]\ge \|\mu\|_1-2\sqrt{\delta}(m+n)-\frac{\spregret_T}{T}. 
\end{align*}

\subsubsection{Simplex and entropy}
Let $\mu_x=\frac{1}{mn}$, $\forall x\in\X$, then $G(\mu, M_\star^1, M_\star^2)\le 2\sqrt{\delta}$. By similar arguments as in the cube and $\ell_1$ norm case, we have
\begin{align*}
\E[H(\bar{C})]\ge H(\mu)-2\sqrt{\delta}-\frac{\spregret_T}{T}, \ \ \E\left[\max_{M^1,M^2\in\V_T} \ G(\bar{C},M^1,M^2)\right]\le 2\sqrt{\delta}+\frac{\spregret_T}{T}. 
\end{align*}
}

\ignore{ 
\subsection{Proof of Corollary \ref{sec:MainCorollary}}
\label{sec:MainCorollaryProof}
\begin{proof}
Take $ \delta=\frac{\eps^2}{16\alpha^2}$. 
The proof of the first inequality follows from
\begin{align*}
(G(\mu,M_\star^1,M_\star^2))^2&=\left(\sum_{ij}\mu_{ij}((M_\star^1)_{ij}-(M_\star^2)_{ij})\right)^2\\
&=\left(\E_\mu\left[(M_\star^1)_{ij}-(M_\star^2)_{ij}\right]\right)^2\\
&=\left(\E_\D\left[(M_\star^1)_{ij}-o\right]-\E_\D\left[(M_\star^2)_{ij}-o\right]\right)^2\\
&\le 2\left(\E_\D\left[(M_\star^1)_{ij}-o\right]^2+\E_\D\left[(M_\star^2)_{ij}-o\right]^2\right)\\
&\le 2\left(F(M_\star^1, M_\star^2)\right)\\
&\le 4\delta,
\end{align*}
which implies $G(\mu,M_\star^1,M_\star^2)\le 2\sqrt{\delta}=\frac{\eps}{2}$. Theorem \ref{sec:MainTheorem} implies that after $T=\tilde{\mathcal{O}}(\eps^{-6}K(m+n))$ iterations,
\begin{align*}
\E\left[H(\bar{C})- \max_{M^1,M^2\in\V_T}G(\bar{C},M^1, M^2)\right]\ge H(\mu)- G(\mu,M_\star^1,M_\star^2)-\frac{\eps}{2}\ge H(\mu)-\eps. 
\end{align*}
and since $\mu=\argmax_{C\in\Delta_{\X}}H(C)$,
\begin{align*}
\E\left[\max_{M^1,M^2\in\V_T}G(\bar{C},M^1,M^2)\right]\le \E[H(\bar{C})]-H(\mu)+\eps\le \eps.
\end{align*}
Furthermore, the equivalence between $G$ and $\ell$ established in \ref{thm:linrelax} also holds over $\V_T$ (analysis in \ref{sec:PMCproof} holds identically). Thus, we have for 
\begin{align*}
\E \left[\max_{M^1,M^2\in\V_T}\ell(\bar{C},M^1,M^2)\right]\le \pi K\eps.
\end{align*}
On the other hand, $g$ in Theorem \ref{thm:linrelax} is a norm. In particular, this means that $\max_{M^1,M^2\in\V_T}G(\bar{C}, M^1, M^2)\ge 0$, 
and thus
\begin{align*}
\E[H(\bar{C})]\ge H(\mu)- G(\mu,M_\star^1,M_\star^2)-\frac{\eps}{2}\ge H(\mu)-\eps. 
\end{align*}
\end{proof}
}

\section{Implementation Details}
With the theoretical guarantee established above, we implemented a simple version of ODD in Section~\ref{sec:experiments} and experimented with both simulated toy datasets and real-world datasets including the well-known MovieLens dataset. In particular, the simplification of ODD we implemented is given by the following algorithm:

\begin{algorithm}[!ht]
\caption{Simplified ODD}
\label{sec:heuristic}
\begin{algorithmic}[1]
\STATE \textbf{Input}: initial (uniform) distribution $C_1$, matrix $M\in\K$, parameters $\eta>0$, $\alpha>0$, matrix prediction update function \texttt{matrix-predict}($\cdot$). $R(\cdot)=-H(\cdot)$.
\FOR {$t=1,2,...,T$}
\STATE Adversary draws tuple $x_t=(i_t,j_t)$, reveals $M_{x_t}$.
\STATE Update $M_{t+1} = \texttt{matrix-predict}(C_t,M_t)$.
\STATE Consider reward 
$\tilde{r}_t(C)\defeq \alpha H(C)- \langle C,M_t\rangle.$ 
\STATE Update {
$\nabla R(\hat{C}_{t+1})\leftarrow\nabla R(C_t)+\eta\nabla \tilde{r}_t(C_t).$ 
}
\STATE Project 
$C_{t+1}=\underset{C\in\Delta_{\X}}{\argmin} \ \frnorm{C-\hat{C}_{t+1}}$.
\ENDFOR
\STATE \textbf{return}: $\bar{C}\defeq\frac{1}{T}\sum_{t=1}^T C_t$.
\end{algorithmic}
\end{algorithm}

This algorithm is an instantiation of our online algorithm ODD (Algorithm~\ref{alg:odd}), using mirror descent for updating the confidence matrix, and an arbitrary matrix completion method called \texttt{matrix-predict}.  

\section{Definitions}
\label{app:definitions}
\paragraph{Matrix logarithm.}
Given the matrix $X\in \sym(n)$, $X\succeq 0$, $X$ admits a diagonalization $X=V\Sigma V^T$, where $\Sigma$ can be written as follows:
$$\Sigma\defeq \begin{bmatrix}
\Sigma_{11} & 0 & ... & 0\\
0 & \Sigma_{22} & ... & 0\\
... & ... & ... & ...\\
0 & 0 & ... & \Sigma_{nn}
\end{bmatrix}.$$
The logarithm of $X$ is given by:
$$\log(X)\defeq V \begin{bmatrix}
\log(\Sigma_{11}) & 0 & ... & 0\\
0 & \log(\Sigma_{22}) & ... & 0\\
... & ... & ... & ...\\
0 & 0 & ... & \log(\Sigma_{nn})
\end{bmatrix} V^T.$$
Matrix logarithm satisfies the following properties for $X,Y\in\sym(n), X,Y\succ 0$:
\begin{enumerate}
\item $\tr(\log(XY))=\tr(\log(X))+\tr(\log(Y))$.
\item If $XY=YX$, then $\log(XY)=\log(X)+\log(Y)$. \label{app:normalmatrixlog}
\item $\log(cI)=(\log c) I$, $\forall c>0$. \label{app:scalematrixlog}
\end{enumerate}

\paragraph{Matrix relative entropy.}
Given matrices $X,Y\in\sym(n), X,Y\succeq 0$, their relative entropy is given by 
$$\Delta(X, Y) \defeq  \text{Tr}(X \log(X) - X \log(Y) - X + Y).$$

\paragraph{($\beta,\tau$)-decomposability.}
$M\in\mathbb{R}^{m\times n}$ is $(\beta,\tau)$-decomposable if $\exists P,N\in \sym(m+n)$, $P,N\succeq 0$:
\begin{align*}
P-N=\begin{bmatrix}
\mathbf{0} & M\\
M^T & \mathbf{0}
\end{bmatrix}, \ \ \ \tr(P)+\tr(N)\le \tau, \ \ \ \max_{i} \ P_{ii}, \max_{i} \ N_{ii}\le \beta.
\end{align*}

\ignore{
We first show that 
$$M\in\{X\in[-1,1]^{2m\times 2n}:\|X\|_{\max}\le 2K\}.$$
It suffices to show that 
$$\left\|\begin{bmatrix}M^1 & \mathbf{0}\\\mathbf{0} & \mathbf{0}\end{bmatrix}\right\|_{\max}\le K, \ \ \ \left\|\begin{bmatrix}\mathbf{0} & \mathbf{0}\\\mathbf{0} & M^2\end{bmatrix}\right\|_{\max}\le K.$$
as by triangle inequality we have
$$\|M\|_{\max}\le \left\|\begin{bmatrix}M^1 & \mathbf{0}\\\mathbf{0} & \mathbf{0}\end{bmatrix}\right\|_{\max}+\left\|\begin{bmatrix}\mathbf{0} & \mathbf{0}\\\mathbf{0} & M^2\end{bmatrix}\right\|_{\max}.$$
We will show the first inequality as the second inequality follows similarly. Let $U,V$ be such that $M^1=UV^T$ and $\|M^1\|_{\max}=\|U\|_{2,\infty}\|V\|_{2,\infty}$. Since
$$\begin{bmatrix}M^1 & \mathbf{0}\\\mathbf{0} & \mathbf{0}\end{bmatrix}=\begin{bmatrix}U & \mathbf{0}\\\mathbf{0} & \mathbf{0}\end{bmatrix}\begin{bmatrix}V^T & \mathbf{0}\\\mathbf{0} & \mathbf{0}\end{bmatrix},$$
then
\begin{align*}
\left\|\begin{bmatrix}M^1 & \mathbf{0}\\\mathbf{0} & \mathbf{0}\end{bmatrix}\right\|_{\max}&\le\left\|\begin{bmatrix}U & \mathbf{0}\\\mathbf{0} & \mathbf{0}\end{bmatrix}\right\|_{2,\infty}\left\|\begin{bmatrix}V & \mathbf{0}\\\mathbf{0} & \mathbf{0}\end{bmatrix}\right\|_{2,\infty}=\|U\|_{2,\infty}\|V\|_{2,\infty}=\|M^1\|_{\max}\le K. 
\end{align*}

We have shown that $M\in\{X\in[-1,1]^{2m\times 2n}:\|X\|_{\max}\le 2K\}$ so far. Moreover, note that $\|M\|_{\max}$ is the optimal value to the following SDP formulation introduced in \cite{lee2010practical}:
\begin{align*}
\min \hspace{0.1cm} t\\
\begin{bmatrix}
Y_1 & M\\
M^T & Y_2
\end{bmatrix} \succeq \mathbf{0}\\
(Y_1)_{ii}, (Y_2)_{jj}\le t
\end{align*}
Let $(Y_1,Y_2,t^*)$ be the solution to the above SDP with $t^*=\|M\|_{\max}$. We can define the symmetrization $Y_1',Y_2'$ of $Y_1,Y_2$ as 
\begin{align*}
Y_1'\defeq \frac{Y_1+Y_1^T}{2}, \ \ Y_2'\defeq \frac{Y_2+Y_2^T}{2},
\end{align*}
and consider the following construction of two symmetric, positive semidefinite matrices $P,N$:
\begin{align*}
P\defeq \frac{1}{2}\begin{bmatrix}
Y_1' & M \\
M^T & Y_2'
\end{bmatrix}, 
\hspace{0.3cm}
N\defeq \frac{1}{2}\begin{bmatrix}
Y_1' & -M \\
-M^T & Y_2'
\end{bmatrix}. 
\end{align*}
$P,N$ satisfies the following properties:
\begin{enumerate}
\item $P,N\succeq 0$. 
\item $P-N=\begin{bmatrix}
0 & M \\
M^T & 0
\end{bmatrix}$.\\
\item $\text{Tr}(P)+\text{Tr}(N)=\text{Tr}(Y_1)+\text{Tr}(Y_2)\le t^*(m+n)= \|M\|_{\max}(m+n)$. \\
\item $P_{ii},N_{jj}\le \frac{1}{2}t^*\le \frac{1}{2}\|M\|_{\max}$. 
\end{enumerate}
Since $\|M\|_{max}\le 2K$, it follows that $M$ is $\left(K,2K(m+n)\right)$-decomposable. 
}

\newpage 

\ignore{
\section{Discussion: Connection to Active Learning}
\label{app:activelearning}
The online construction of a coverage matrix and a corresponding version space for the matrix completion is  useful in its connection to active learning. We start by briefly introducing the basics of active learning and the concept of the disagreement coefficient. 

Traditional statistical learning, also known as passive learning, refers to the procedure where an algorithm takes as input a set of training data $\{\mathbf{x}_i\}_{i=1}^n\subset \mathcal{X}$ and  labels $\{y_i\}_{i=1}^n\subset \mathcal{Y}$, drawn from some distribution $\D$ on $\mathcal{X}\times\mathcal{Y}$. It outputs a hypothesis $h\in\mathcal{H}:\mathcal{X}\rightarrow\mathcal{Y}$ such that $\mathbb{E}_{(\mathbf{x},y)\sim \D}[(h(\mathbf{x})-y)^2]$ is small. In the active learning setting, the algorithm has restricted access to the data $\{\mathbf{x}_i\}_{i=1}^n$ and has to make label requests to access labels $\{y_i\}_{i=1}^n$. The term \textit{label complexity} refers to the number of label requests made by the algorithm. The goal for active learning is that the algorithm can identify the important data points and hence only request access for a fraction of the labels to output a sufficiently accurate hypothesis. Therefore, label complexity is crucial in the analysis of active learning algorithms. 

In active learning problems, \textit{disagreement coefficient} has been proposed to bound the label complexity (Hanneke \cite{hanneke2007bound}). Conceptually, the disagreement coefficient of a hypothesis class $\mathcal{H}$ and a distribution $\D$ measures the disagreement among a set of hypotheses relative to their ``distance" to $h^*$, where
$$h^*:=\argmin_{h\in\mathcal{H}}\mathbb{E}_{(\mathbf{x},y)\sim \D}[(h(\mathbf{x})-y)^2].$$
Formally, letting $\mu$ be the marginal distribution of $D$ on $\mathbf{x}$, the pseudo-metric $d:\mathcal{H}\times\mathcal{H}\rightarrow\mathbb{R}_+$ is defined such that 
$$d(h,h'):=\mathbb{P}_{\mathbf{x}\sim\mu}(h(\mathbf{x})\neq h'(\mathbf{x})).$$
This allows for the definition of a closed ball of radius $r>0$ around every $h\in\mathcal{H}$ with respect to $d$:
$$B(h,r):=\{h'\in\mathcal{H}: d(h,h')\leq r\}.$$
These preliminary definitions allow for the formal definition of the disagreement coefficient.

\paragraph{Disagreement coefficient.}
Given a hypothesis class $\mathcal{H}$ and a distribution $\D$ over $\mathcal{X}\times\mathcal{Y}$, the disagreement coefficient is given by 
$$\theta:=\sup_{r>0}\frac{\Delta(B(h^*,r))}{r},$$
where $\Delta:2^\mathcal{H}\rightarrow[0,1]$ is the disagreement rate on a subset of hypotheses, given by
$$\Delta(V):=\mathbb{P}_{\mathbf{x}\sim\mu}(\exists h,h'\in V: h(x)\neq h'(x)).$$

\hspace{0.3cm}There is an immediate connection between the min-max nature of ODD to obtain a coverage matrix $C$ and the disagreement coefficient. In particular, every matrix can be seen as a hypothesis that maps a given index to its value. The version space can be seen as $B(M^\star,\epsilon)$, where $M^\star$ is the true underlying matrix and $\epsilon$ measures the radius of the version space. Note that as $\epsilon$ increases, (1) we gain confidence on more entries, which results in an increase in the reward function on the entropy of $C$ while (2) we suffer more loss in the weighted sum of distance between two permissible completions. The definition of disagreement coefficient also faces the trade-off when increasing $r$ as $\Delta(B(h^*,r))$ is an increasing function in $r$. Finding the optimal coverage of $C$ is thus analogous to the problem of minimizing the disagreement coefficient. Hence, the connection to active learning opens up the possibility of future applications of our work. 
}

\end{document}